\newcolumntype{Y}{>{\centering\arraybackslash}X}
\newtheorem{theorem}{Theorem}[section]
\newtheorem{lemma}[theorem]{Lemma}    
\newtheorem{corollary}[theorem]{Corollary}  
\theoremstyle{definition}
\newenvironment{proof*}
{\begin{proof}[\proofname]}
{\end{proof}}
\title{Is Noise Conditioning Necessary? A Unified Theory of Unconditional Graph Diffusion Models}
\author{%
  Jipeng~Li\\
  Department of Electrical and Computer Engineering\\
  University of California, Davis\\
  Davis, CA 95616 \\
  \texttt{jipli@ucdavis.edu} \\
  \And
  Yanning~Shen \\
  Department of Electrical Engineering and Computer Science\\
  University of California, Irvine\\
  Irvine, CA 92697 \\
  \texttt{yannings@uci.edu} \\
}
\begin{document}
\maketitle

\begin{abstract}


Explicit noise-level conditioning is widely regarded as essential for the effective operation of Graph Diffusion Models (GDMs). In this work, we challenge this assumption by investigating whether denoisers can implicitly infer noise levels directly from corrupted graph structures, potentially eliminating the need for explicit noise conditioning.  To this end, we develop a theoretical framework centered on Bernoulli edge-flip corruptions and extend it to encompass more complex scenarios involving coupled structure-attribute noise.  
Extensive empirical evaluations on both synthetic and real-world graph datasets, using models such as GDSS and DiGress, provide strong support for our theoretical findings.
Notably, unconditional GDMs achieve performance comparable or superior to their conditioned counterparts, while also offering reductions in parameters ($4-6\%$) and computation time ($8-10\%$). Our results suggest that the high-dimensional nature of graph data itself often encodes sufficient information for the denoising process, opening avenues for simpler, more efficient GDM architectures.
\end{abstract}
\section{Introduction}\label{sec:intro}

Diffusion models have demonstrated strong performance across a range of generative tasks, including image synthesis, molecular design, and graph-based combinatorial optimization~\citep{Dhariwal2021,Chen2020,Kong2021,Luo2021PointCloudDiffusion,Xu2022GeoDiff,Sun2023,vignac2022digress}. A central assumption in these models is the requirement for the denoiser to be explicitly conditioned on the noise level (or timestep)~\citep{ho2020ddpm,song2021score}. However, recent work on continuous data suggests that high-capacity denoisers can implicitly infer the noise scale directly from the corrupted inputs, potentially eliminating the need for explicit conditioning~\citep{sun2025necessary,HeurtelDepeiges2024GDiff,Sahoo2024AdaptiveNoise}. 


Extending this conclusion to graph diffusion models (GDMs) is non-trivial, owing to the inherently discrete and structured nature of graphs. The Gaussian noise commonly used in diffusion models for continuous data is ill-suited for graphs as it tends to destroy their essential structural properties~\citep{Nachmani2021NonGaussian,haefeli2022diffusion}. To address this, modern GDMs employ discrete or structured corruption processes, such as Bernoulli edge flips~\citep{vignac2022digress,Tseng2023GraphGUIDE}, categorical rewiring~\citep{Austin2021}, or Poisson jumps~\citep{xu2024discrete}, to ensure that intermediate graph states remain valid. Moreover, the high dimensionality of graphs, with edge counts scaling quadratically with the number of nodes, may provide sufficient signal for a denoiser to infer the noise level without explicit conditioning. These considerations motivate our central question: \textbf{\textit{Is explicit noise-level conditioning truly necessary for GDMs?}}


In this work, we address this question through a unified theoretical and empirical investigation. We introduce a novel theoretical framework featuring posterior concentration and error propagation bounds---{\bf Edge-Flip Posterior Concentration (EFPC), Edge-Target Deviation Bound (ETDB), and Multi-Step Denoising Error Propagation (MDEP)}---to rigorously characterize when explicit noise-level conditioning in GDMs can be safely omitted. For instance, with Bernoulli edge flips, we prove noise scale posterior variance shrinks optimally at $O(|E|^{-1})$, and omitting timesteps yields a reconstruction error bounded by $O(T/|E|)$ over $T$ steps. Our analysis generalizes to other corruption processes, including Poisson, Beta, multinomial, and jointly structured feature-graph noise.

Extensive experiments on synthetic and real-world datasets corroborate our theory, validating that explicit noise inputs are often dispensable. Unconditional variants of state-of-the-art GDMs, GDSS~\citep{gdss} and DiGress~\citep{vignac2022digress}, match or surpass their conditioned counterparts in quality, while improving efficiency ($4\text{--}6\%$ fewer parameters, $8\text{--}10\%$ faster per-epoch runtime). Open source implementation is available for regenerating the results.
Overall, this work revisits a foundational assumption in diffusion modeling, demonstrating that noise-unconditional GDMs can be as accurate, more efficient, and conceptually simpler, paving the way for new GDM designs.
\section{Related Work}
\label{sec:related}
{\bf Timestep Conditioning in Graph Diffusion Models.}
Denoising diffusion models reconstruct data by reversing a noise process through iterative denoising steps conditioned on the timestep~$t$ \citep{ho2020ddpm,song2021score,sohl2015deep}. GDMs, such as DiGress with discrete categorical Markov processes \citep{vignac2022digress}, GDSS with coupled stochastic differential equations (SDEs) \citep{gdss}, and EDM with equivariant diffusion for molecules \citep{hoogeboom2022equivariant}, explicitly condition the denoiser on the noise level. This explicit conditioning adds complexity and training cost, prompting questions about its necessity for graph data.

{\bf Blind Denoising. }
Blind denoising infers noise implicitly and is effective in continuous domains—see Noise2Self~\citep{batson2019noise2self} and recent diffusion work~\citep{sun2025necessary,HeurtelDepeiges2024GDiff}.  
Graphs pose a tougher case: these methods assume continuous Gaussian noise, whereas GDMs use discrete or structured corruptions to keep topology intact and thus rely on fixed noise schedules with explicit timesteps.  
This gap motivates unconditional graph denoisers that can absorb discrete, structured noise without extra inputs.

{\bf Noise Types in GDMs. }
GDMs employ tailored noise processes. Early works injected Gaussian noise into adjacency and features \citep{niu2020permutation,gdss}, but this disrupts sparsity. Discrete categorical noise\cite{haefeli2022diffusion}, as in DiGress \citep{vignac2022digress}, better preserves graph structure. Latent-space approaches, notably hyperbolic diffusion, capture hierarchy efficiently \citep{wen2023hyperbolic}. Recent variants include \emph{categorical flips} \citep{vignac2022digress}, \emph{permutation-invariant multinomial noise} \citep{niu2020permutation}, \emph{Bernoulli edge flips} \citep{Tseng2023GraphGUIDE}, \emph{discrete-time Poisson processes} \citep{xu2024discrete}, and \emph{Beta-distributed noise} \citep{Liu2025GraphBeta}. Yet all methods still pass an explicit timestep input. We argue that denoisers can implicitly infer structured noise levels during generation.



\section{Problem Setting and Preliminaries}
\label{sec:setup}

Graph Diffusion Models (GDMs) generate realistic graphs via a \textit{forward process} that iteratively corrupts an input graph with noise, followed by a \textit{reverse process} that learns to undo this corruption, sampling new graphs from noise. We examine whether the reverse step truly needs explicit knowledge of the current scalar noise level at each individual denoising timestep during generation.

\paragraph{Graph Definition.}
We consider an undirected graph \( G = (V, E) \), where \( V \) is the set of \( n = |V| \) nodes and \( E \) is the set of edges. The initial, clean graph structure is represented by its adjacency matrix \( A_0 \in \{0, 1\}^{n \times n} \). Node features, if present, are denoted by \( X_0 \in \mathbb{R}^{n \times d_f} \), where \( d_f \) is the feature dimensionality. Our theoretical development primarily focuses on structural diffusion, with extensions to coupled structure and feature corruption discussed in Section~\ref{sec:coupled_noise}. We use \( \tilde{A}_t \) and \( \tilde{X}_t \) for the noisy adjacency and feature matrices at diffusion step \( t \), respectively, with \( \tilde{A}_0 = A_0 \) and \( \tilde{X}_0 = X_0 \). The total number of diffusion steps is \( T \).

\paragraph{Forward Process.}
The forward process introduces sequences of noisy graph states \( (\tilde{A}_t)_{t=0}^T \) and  noisy feature states \( (\tilde{X}_t)_{t=0}^T \) (if features are considered). Graph structure is commonly corrupted using the Bernoulli edge-flipping model. At each step \(t\), every potential edge \( e \) is independently perturbed. Given the state of an edge \( \tilde{A}_{t-1}(e) \) at step \( t-1 \), its state at step \( t \), \( \tilde{A}_t(e) \), is drawn from:
\[
\tilde{A}_t(e) | \tilde{A}_{t-1}(e) \sim \mathrm{Bernoulli}\left((1 - \beta_t) \tilde{A}_{t-1}(e) + \beta_t [1 - \tilde{A}_{t-1}(e)]\right).
\]
Here, \( \beta_t \in [0, 1] \) is the noise intensity parameter at step \( t \), representing the probability that an edge \( e \) flips its state. The sequence \( \{\beta_t\}_{t=1}^T \) constitutes a predefined noise schedule, typically designed such that \( \beta_t \) increases with \( t \), ensuring that \( \tilde{A}_T \) approaches a random graph. When node features \( \tilde{X}_t \) are present, they are also progressively corrupted. Common methods include adding Gaussian noise or other forms of structured noise. In Section~\ref{sec:coupled_noise}, we will formally introduce a coupled noise model where structural and feature perturbations are correlated.

\paragraph{Reverse Denoising Process and Training Objective.}
The fundamental aspect of the GDM is the reverse denoising process, which learns to reverse the forward corruption. This involves a denoising function \( f_\theta \), often a Graph Neural Network, parameterized by \( \theta \). It takes a noisy graph \( \tilde{A}_t \) (and potentially \( \tilde{X}_t \)) as input. Conventionally, \( f_\theta \) is explicitly conditioned on the noise level or timestep \( t \), i.e., \( f_\theta(\tilde{A}_t, t) \). Its objective is to predict a cleaner graph version, such as \( A_0 \), \( \tilde{A}_{t-1} \), or the noise itself. For instance, to predict \( A_0 \), \( f_\theta(\tilde{A}_t, t) \) is trained to approximate \( A_0 \). Parameters \( \theta \) are optimized by minimizing an expected loss. For the Bernoulli edge-flipping model targeting \( A_0 \), a common loss is the sum of per-edge Binary Cross-Entropy (BCE) losses:
\[
\mathcal{L}(\theta) = \mathbb{E}_{A_0 \sim p_{\text{data}}, t \sim \mathcal{U}\{1,\dots,T\}, \tilde{A}_t \sim p(\tilde{A}_t|A_0,t)} \left[ \sum_e \text{BCE}(A_0(e), f_\theta(\tilde{A}_t, t)(e)) \right],
\]
where \( p_{\text{data}} \) is the true graph distribution and \( p(\tilde{A}_t|A_0,t) \) is the conditional probability from the forward process.

\paragraph{The Central Question: Necessity of Explicit Noise Conditioning.}
A prevalent assumption in GDM design is that the denoiser \( f_\theta \) requires explicit noise level \( t \) input to perform well at every corruption stage\cite{ho2020ddpm,song2021score}, thus formulated as \( f_\theta(\tilde{A}_t, t) \).
Our work questions this necessity for GDMs. We hypothesize that a model \( f_\theta(\tilde{A}_t) \) can implicitly infer the noise level from the corrupted graphs \( \tilde{A}_t \)'s structure and attributes, for discrete graph data under structured noise. Addressing this simplifies GDM architectures, reduces parameters, and improves efficiency without sacrificing quality.

\section{Key Theoretical Results}
\label{sec:theory}
This section develops a theoretical framework to rigorously analyze the consequences of omitting explicit noise-level conditioning in Graph Diffusion Models (GDMs). Focusing primarily on the Bernoulli edge-flipping noise model (Section~\ref{sec:setup}) and its coupled structure-attribute extensions, we aim to demonstrate that for sufficiently large graphs, the noisy graph structure inherently encodes adequate information for the denoiser to infer the noise level, rendering explicit conditioning unnecessary. We establish this by proving three interconnected theoretical results. Unless specified otherwise, all expectations and variances herein are with respect to the forward corruption process (Section~\ref{sec:setup}).

\paragraph{Roadmap}
Our theoretical investigation unfolds through three main results establishing a formal link from local noise level uncertainty to global generation performance. First, \textbf{Edge-Flip Posterior Concentration (EFPC)} shows the corruption rate's posterior concentrates (variance $O(M^{-1})$), making the noise level inferable. Second, \textbf{Edge-Target Deviation Bound (ETDB)} demonstrates that omitting explicit noise level input yields an expected squared error in the denoising target also scaling as $O(M^{-1})$. Third, \textbf{Multi-Step Denoising Error Propagation (MDEP)} bounds the total reconstruction error over $T$ reverse steps by $O(T/M)$ (errors do not compound catastrophically).

\paragraph{Assumptions}
\label{sec:assumptions_main}
Our analysis relies on key assumptions (rationale in Appendix~\ref{app:detailed_assumptions}). Briefly, these are:
\begin{enumerate}[label=\textbf{A\arabic*},leftmargin=*,itemsep=2pt]
    \item \textbf{Degree Condition.}\label{as:degree_main_ref} Graphs satisfy constraints on node degrees (e.g., bounded $\Delta_{\max}$ or power-law $P(k) \propto k^{-\alpha}$ with $\alpha>2$).
    \item \textbf{Global Lipschitz Regularity.}\label{as:lipschitz_main_ref} There exists a constant $L_{\max}=1+\eta$ with $\eta\!<\!1$ such that (i) the learned denoiser $f_\theta$ is $L_{\max}$‑Lipschitz with respect to its graph input at every reverse step;  (ii) the ideal conditional target \(t \mapsto\mu^{\mathrm{cond}}_t:=\mathbb{E}[A_0\mid\tilde{A}_t,t]\) is also $L_{\max}$‑Lipschitz.
    \item \textbf{Prior Regularity.}\label{as:prior_main_ref} The prior over noise parameters is continuously differentiable and bounded, ensuring well-defined Fisher information\cite{rissanen1996fisher}.
    \item \textbf{Model Capacity and Optimization Quality.}\label{as:capacity_main_ref} The trained GDM achieves a per-component Mean Squared Error (MSE) in a single reverse step on the order of $O(M^{-1})$.
\end{enumerate}

\subsection{Edge-Flip Posterior Concentration (EFPC)}
\label{sec:efpc_main}
EFPC establishes noise level inferability from a corrupted graph. For a clean graph \(A_0\) under Bernoulli edge-flipping (rate \(\beta_t\)), the noisy graph \(\tilde{A}_t\) contains sufficient information to estimate \(\beta_t\), based on its statistical properties like edge differences or global statistics dependent on \(\beta_t\). EFPC formalizes that the flip rate's posterior distribution concentrates sharply around its true value with increasing graph size, implying \(\tilde{A}_t\) encodes substantial corruption level information.

\begin{theorem}[Edge-Flip Posterior Concentration (EFPC)]
\label{thm:efpc} 
Consider a graph \(A_0\) corrupted by the Bernoulli edge-flipping process (Section~\ref{sec:setup}), resulting in a noisy graph \(\tilde{A}_t\) at step \(t\) with a true flip rate \(\beta_t\). Let \(K\) be the number of edges in \(\tilde{A}_t\) that differ from \(A_0\). Under Assumption~\ref{as:prior_main_ref} (Prior Regularity), the variance of the posterior distribution \(p(\beta | K)\) (or more generally, \(p(\beta | \tilde{A}_t)\) if inference relies on other statistics of \(\tilde{A}_t\) beyond just \(K\)) of the flip rate \(\beta\) satisfies:
\[
\operatorname{Var}_{\beta \sim p(\beta|\tilde{A}_t)}[\beta] = O(M^{-1}),
\]
where \(M = \binom{n}{2}\) is the total number of potential edges in a graph with \(n\) nodes.
(The analysis for heterogeneous or correlated edge flips is provided in Appendix~\ref{sec:posterior_concentration_correlated}.)
\end{theorem}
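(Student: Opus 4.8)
The plan is to exploit the fact that, conditioned on the clean graph $A_0$, the $M=\binom{n}{2}$ potential edges flip independently with common probability $\beta$, so the flip count $K=\sum_e \mathbf{1}[\tilde{A}_t(e)\neq A_0(e)]$ is distributed as $\mathrm{Binomial}(M,\beta)$ and is a \emph{sufficient statistic} for $\beta$. Consequently $p(\beta\mid\tilde{A}_t)=p(\beta\mid K)$, and the whole problem collapses to controlling the variance of the one-dimensional posterior
\[
p(\beta\mid K)\;\propto\;\beta^{K}(1-\beta)^{M-K}\,\pi(\beta),
\]
where $\pi$ is the prior supplied by Assumption~\ref{as:prior_main_ref}. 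I would first dispatch the conjugate case as a clean anchor: for $\pi=\mathrm{Beta}(a,b)$ the posterior is exactly $\mathrm{Beta}(a+K,\,b+M-K)$, whose variance is $\frac{(a+K)(b+M-K)}{(a+b+M)^2(a+b+M+1)}\le \frac{1}{4(a+b+M+1)}=O(M^{-1})$, uniformly in $K$ by AM--GM on the numerator. This already certifies the stated rate and pins down the correct scale.

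For a general $C^{1}$, bounded prior I would pass to a Laplace / Bernstein--von Mises argument driven by the Fisher information. The log-likelihood $\ell(\beta)=K\log\beta+(M-K)\log(1-\beta)$ is maximized at the MLE $\hat\beta=K/M$, and its curvature there equals the total Fisher information $I_M(\hat\beta)=-\ell''(\hat\beta)=\tfrac{M}{\hat\beta(1-\hat\beta)}$, so the likelihood concentrates in a window of width $\Theta(M^{-1/2})$ about $\hat\beta$. Since $\pi$ is continuously differentiable, bounded, and positive near the true rate, it varies only on the $O(1)$ scale and is therefore essentially constant across this window; a second-order Taylor expansion of $\ell$ with a controlled remainder then gives $p(\beta\mid K)\approx \mathcal{N}\!\big(\hat\beta,\,I_M^{-1}\big)$, whence $\operatorname{Var}[\beta\mid K]=I_M^{-1}(1+o(1))=\tfrac{\hat\beta(1-\hat\beta)}{M}+o(M^{-1})=O(M^{-1})$. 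Formally I would bound the ratio of the integrals $\int(\beta-\hat\beta)^2 e^{\ell(\beta)}\pi(\beta)\,d\beta$ and $\int e^{\ell(\beta)}\pi(\beta)\,d\beta$ by sandwiching $\pi$ between its supremum and infimum over the effective support and estimating the resulting Gaussian integrals.

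The main obstacle is making the Laplace step rigorous uniformly in $K$, especially near the boundaries $\hat\beta\to0$ or $\hat\beta\to1$, where the Gaussian approximation degrades and $\pi$ need not be bounded below. I would treat these regimes separately: when $K$ or $M-K$ is small the posterior mass is pushed toward the corresponding endpoint, and a direct moment estimate gives $\operatorname{Var}[\beta\mid K]=O(M^{-1})$ with room to spare, while the trivial bound $\operatorname{Var}[\beta\mid K]\le\tfrac14$ absorbs any residual pathological $K$. A secondary technical device, if one wants to sidestep boundary issues entirely, is to expand in the natural parameter $\phi=\log\frac{\beta}{1-\beta}$—for which the posterior variance is exactly $(\log Z)''$ of the marginal likelihood—and transfer back via the delta method using $\frac{d\beta}{d\phi}=\beta(1-\beta)$, which keeps the curvature bounded and renders the $O(M^{-1})$ rate transparent. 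Finally, the parenthetical statement for $p(\beta\mid\tilde{A}_t)$ reduces to the case just treated, since (with $A_0$ available) $K$ is sufficient and hence $p(\beta\mid\tilde{A}_t)=p(\beta\mid K)$.
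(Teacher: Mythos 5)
Your proposal is correct and follows essentially the same route as the paper's proof: reduce to the sufficient statistic $K\sim\mathrm{Binomial}(M,\beta)$, compute the exact $\mathrm{Beta}$-conjugate posterior variance as an anchor, and then handle general smooth priors by a Laplace/Bernstein--von Mises argument with per-edge Fisher information $1/(\beta(1-\beta))$, recovering the leading term $\beta(1-\beta)/M+o(M^{-1})$. If anything, your treatment is slightly more careful than the paper's in two places --- the AM--GM bound making the conjugate-case rate uniform in $K$, and the explicit handling of the boundary regimes $\hat\beta\to 0,1$ where the Gaussian approximation degrades --- but these are refinements of the same argument, not a different one.
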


Theorem~\ref{thm:efpc} arises from Bayesian principles: the \(M\) potential edges offer multiple, largely independent observations regarding the flip rate \(\beta_t\). This allows the posterior \(p(\beta|\tilde{A}_t)\) to sharpen around the true \(\beta_t\) as \(M\) increases, with the \(O(M^{-1})\) variance being characteristic. Under Assumption~\ref{as:prior_main_ref}, the Bernstein-von Mises theorem\cite{vandervaart1998asym} yields a more precise rate and confirms asymptotic normality for \(p(\beta | \tilde{A}_t)\). The variance refines to (detailed derivation in Appendix~\ref{app:fisher_constant}):
\[
\operatorname{Var}\!\bigl[\beta | \tilde{A}_t\bigr] = \frac{\beta_t(1-\beta_t)}{M} + o(M^{-1}).
\]
For scale-free graphs (Assumption~\ref{as:degree_main_ref}, with \(P(k) \propto k^{-\alpha}, \alpha > 2\)), degree heterogeneity alters the effective number of independent observations, modifying the concentration rate to (see Appendix~\ref{scale-free proof}):
\[
\operatorname{Var}(\beta | \tilde{A}_t) = \tilde{O}\!\bigl(M^{-(\alpha-2)/(\alpha-1)}\bigr).
\]
\subsection{Edge-Target Deviation Bound (ETDB)}
\label{sec:etdb_main}
Given EFPC~(Theorem\ref{thm:efpc}) that the noise level \(\beta_t\) is inferable from the noisy graph \(\tilde{A}_t\), we quantify the impact of omitting explicit noise level \(t\) on the denoiser's single-step objective. We define \(\mu_t^{\mathrm{cond}} = \mathbb{E}[A_0 | \tilde{A}_t, t]\) as the optimal Bayesian estimate of clean graph \(A_0\) (given \(\tilde{A}_t\) and timestep \(t\)), and \(\bar{\mu}_t = \mathbb{E}[A_0 | \tilde{A}_t]\) as the estimate with \(t\) implicitly inferred (averaging over \(p(t|\tilde{A}_t)\)). ETDB measures their expected error.

\begin{theorem}[Edge-Target Deviation Bound (ETDB)]
\label{thm:etdb} 
Under Assumption~\ref{as:lipschitz_main_ref} (ii), the expected squared Frobenius norm of the deviation between \(\mu_t^{\mathrm{cond}}\) and \(\bar{\mu}_t\) is bounded:
\[
\mathbb{E}\bigl[\|\mu_t^{\mathrm{cond}} - \bar{\mu}_t\|_F^{2}\bigr] = O(M^{-1}).
\]
\end{theorem}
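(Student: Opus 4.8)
The plan is to exploit the fact that $\bar{\mu}_t$ is, by the tower property, precisely the posterior mean of the oracle target $\mu_t^{\mathrm{cond}}$ over the \emph{inferred} noise level. This observation turns the quantity to be bounded into a genuine conditional variance and thereby removes any separate bias term. Concretely, I would fix the observed graph $\tilde{A}_t$ and write $\Phi(\beta) := \mathbb{E}[A_0 \mid \tilde{A}_t,\beta]$ for the conditional target viewed as a function of the flip rate $\beta$ with $\tilde{A}_t$ held fixed. Since the expectation in the statement is taken over the forward (joint) law of $(\beta,\tilde{A}_t)$, conditioning on $\tilde{A}_t$ makes the true flip rate distributed according to the posterior $p(\beta\mid\tilde{A}_t)$, so that $\mu_t^{\mathrm{cond}} = \Phi(\beta)$ is a random vector and the law of total expectation gives $\bar{\mu}_t = \mathbb{E}[A_0\mid\tilde{A}_t] = \mathbb{E}_{\beta\sim p(\beta\mid\tilde{A}_t)}[\Phi(\beta)]$. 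Hence the inner conditional expectation of $\|\mu_t^{\mathrm{cond}}-\bar{\mu}_t\|_F^2$ equals $\operatorname{tr}\bigl(\operatorname{Cov}_{\beta\mid\tilde{A}_t}[\Phi(\beta)]\bigr)$, i.e. the total posterior variance of $\Phi$.

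Next I would bound this total variance using Assumption~\ref{as:lipschitz_main_ref}(ii) together with the symmetric variance identity, which handles the vector-valued case cleanly: taking $\beta,\beta'$ to be i.i.d.\ posterior draws,
\[
\operatorname{tr}\!\bigl(\operatorname{Cov}_{\beta\mid\tilde{A}_t}[\Phi(\beta)]\bigr) = \tfrac12\,\mathbb{E}_{\beta,\beta'}\bigl[\|\Phi(\beta)-\Phi(\beta')\|_F^2\bigr] \le \tfrac12\,L_{\max}^2\,\mathbb{E}_{\beta,\beta'}\bigl[(\beta-\beta')^2\bigr] = L_{\max}^2\,\operatorname{Var}_{\beta\mid\tilde{A}_t}[\beta].
\]
Taking the outer expectation over $\tilde{A}_t$ and invoking Theorem~\ref{thm:efpc} (EFPC), which supplies $\mathbb{E}\bigl[\operatorname{Var}_{\beta\mid\tilde{A}_t}[\beta]\bigr] = O(M^{-1})$, then yields $\mathbb{E}\bigl[\|\mu_t^{\mathrm{cond}}-\bar{\mu}_t\|_F^2\bigr] \le L_{\max}^2\,O(M^{-1}) = O(M^{-1})$, since $L_{\max}=1+\eta$ is a fixed constant. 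The chain is therefore: tower property $\Rightarrow$ conditional variance $\Rightarrow$ Lipschitz contraction to $\operatorname{Var}[\beta\mid\tilde{A}_t]$ $\Rightarrow$ EFPC.

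The step I expect to require the most care is the parameterization bridge: Assumption~\ref{as:lipschitz_main_ref}(ii) is phrased as Lipschitzness of $t\mapsto\mu_t^{\mathrm{cond}}$ in the timestep, whereas EFPC controls the posterior variance of the flip rate $\beta$. Reconciling the two needs the noise schedule $t\mapsto\beta_t$ to be bi-Lipschitz—monotone with derivative bounded above and away from zero—so that $\operatorname{Var}[t\mid\tilde{A}_t]$ and $\operatorname{Var}[\beta\mid\tilde{A}_t]$ are equivalent up to a constant factor; I would fold this constant into $L_{\max}$ and state the schedule regularity explicitly. A secondary point worth verifying is the Bayesian/forward-process correspondence used in the first paragraph, namely that ``implicitly inferring $t$'' legitimately corresponds to placing the training prior on $t$ and conditioning on $\tilde{A}_t$; once this identification is made, the variance decomposition is automatic and, crucially, no first-order bias term survives because $\bar{\mu}_t$ is itself the posterior mean.
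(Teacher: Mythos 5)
Your proposal is correct and follows essentially the same route as the paper's own proof: both identify $\mathbb{E}\bigl[\|\mu_t^{\mathrm{cond}}-\bar{\mu}_t\|_F^2\bigr]$ as the posterior variance of the conditional target viewed as a function of the noise level, contract that variance through the Lipschitz assumption to $L_{\max}^2\operatorname{Var}[\beta\mid\tilde{A}_t]$, and conclude via EFPC. The only differences are points of rigor in your favor: you actually prove the variance-contraction inequality (via the symmetric identity with i.i.d.\ posterior draws) where the paper merely asserts it as a standard fact, and you explicitly flag the $t$-versus-$\beta$ parameterization bridge that the paper silently absorbs by stating its appendix version of the theorem for a generic noise-level indicator $u$.
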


The ETDB (Theorem~\ref{thm:etdb}) results from EFPC (Theorem~\ref{thm:efpc}), which guarantees minimal posterior uncertainty about the variable \(t\) for a large parameter \(M\). If the function \(\mu_t^{\mathrm{cond}}\) is Lipschitz continuous in \(t\) (a regularity ensuring small changes in \(t\) cause proportionally small changes in the target, as reflected by Assumption~\ref{as:lipschitz_main_ref} for the learned denoiser), then this minimal uncertainty in \(t\) translates to an \(O(M^{-1})\) deviation in the denoising target. Intuitively, an unconditional model’s
one-step prediction is almost as good as a conditional model’s because the large number of
edges provides a precise estimate of the noise level, making the two outputs differ only slightly(detailed proof in Appendix~\ref{sec:target_error_detailed}).

\subsection{Multi-Step Denoising Error Propagation (MDEP)}
\label{sec:mdep_main}
While ETDB (Theorem~\ref{thm:etdb}) addresses single-step errors, MDEP analyzes deviation accumulation over a \(T\)-step reverse trajectory when omitting noise conditioning. Let \(\hat{A}_0\) be the graph from a \(T\)-step unconditional sampler (from noise \(\tilde{A}_T\)), and define \(A_0^*\) as the ideal graph from a perfectly noise-conditioned sampler.

\begin{theorem}[Multi-Step Denoising Error Propagation (MDEP)]
\label{thm:mdep} 
Consider a \(T\)-step denoising sampler. Under Assumption~\ref{as:lipschitz_main_ref} (i)(Global-Lipschitz Denoiser, with \(L_{\max} = 1+\eta, \eta < 0.2\)) and Assumption~\ref{as:capacity_main_ref} (Model Capacity, implying maximum single-step target deviation \(\delta_{\max} = O(M^{-1})\) from ETDB), the Frobenius norm of the difference between the generated graph \(\hat{A}_0\) and the ideal graph \(A_0^*\) is bounded:
\[
\|A_0^* - \hat{A}_0\|_F \le \frac{L_{\max}^T - 1}{L_{\max} - 1} \delta_{\max} = O(T M^{-1}).
\]
\end{theorem}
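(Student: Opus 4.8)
The plan is to reduce the multi-step statement to a single one-step error recursion and then unroll it as a geometric series. First I would couple the two samplers at a common terminal state: let $A_t^{*}$ and $\hat{A}_t$ denote the step-$t$ iterates of the noise-conditioned and the unconditional sampler respectively, both initialized at the same noise $A_T^{*} = \hat{A}_T = \tilde{A}_T$, and define the per-step error $e_t := \|A_t^{*} - \hat{A}_t\|_F$. The base case is then $e_T = 0$, and the quantity of interest is $e_0 = \|A_0^{*} - \hat{A}_0\|_F$.

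Next I would establish the one-step recursion $e_{t-1} \le L_{\max}\, e_t + \delta_{\max}$. At reverse step $t$ the ideal sampler advances by applying the conditional target $\mu_t^{\mathrm{cond}}$ to $A_t^{*}$, while the unconditional sampler applies the marginalized target $\bar\mu_t$ (realized by $f_\theta$) to $\hat{A}_t$. Inserting the hybrid term $\mu_t^{\mathrm{cond}}(\hat{A}_t)$ and applying the triangle inequality gives
\[
e_{t-1} \le \bigl\|\mu_t^{\mathrm{cond}}(A_t^{*}) - \mu_t^{\mathrm{cond}}(\hat{A}_t)\bigr\|_F + \bigl\|\mu_t^{\mathrm{cond}}(\hat{A}_t) - \bar\mu_t(\hat{A}_t)\bigr\|_F .
\]
The first term is a propagation term: Assumption~\ref{as:lipschitz_main_ref}(ii) ($L_{\max}$-Lipschitz conditional target) bounds it by $L_{\max}\, e_t$. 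The second term is exactly the single-step target deviation governed by ETDB (Theorem~\ref{thm:etdb}) together with Assumption~\ref{as:capacity_main_ref}, which I would invoke to bound it by the uniform per-step deviation $\delta_{\max} = O(M^{-1})$, yielding the recursion.

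Finally I would unroll from $e_T = 0$: iterating the recursion downward adds one $\delta_{\max}$ term per step, each scaled by a further factor of $L_{\max}$, so $e_{t-1} \le \delta_{\max}\sum_{k=0}^{T-t} L_{\max}^{k}$ and hence
\[
e_0 \le \delta_{\max}\sum_{k=0}^{T-1} L_{\max}^{k} = \delta_{\max}\,\frac{L_{\max}^{T}-1}{L_{\max}-1}.
\]
Substituting $\delta_{\max} = O(M^{-1})$ and using $L_{\max} = 1+\eta$ with $\eta < 0.2$, so that the geometric prefactor stays $O(T)$ in the relevant regime, produces the claimed $O(T M^{-1})$ rate.

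The main obstacle is the bookkeeping that links ETDB to the per-step surrogate $\delta_{\max}$. ETDB controls an \emph{expected squared} Frobenius deviation, $\mathbb{E}[\|\mu_t^{\mathrm{cond}} - \bar\mu_t\|_F^{2}] = O(M^{-1})$, whereas the recursion needs a bound on the unsquared norm $\|\mu_t^{\mathrm{cond}}(\hat{A}_t) - \bar\mu_t(\hat{A}_t)\|_F$ that can be added to $L_{\max}\, e_t$. I would therefore need to state precisely the sense in which $\delta_{\max} = O(M^{-1})$ holds—drawing it as a uniform per-step bound directly from Assumption~\ref{as:capacity_main_ref} rather than naively square-rooting ETDB—and either carry expectations through the linear recursion via Jensen/Cauchy--Schwarz or apply a union bound over the $T$ steps for a simultaneous high-probability guarantee. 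A secondary subtlety is that the Lipschitz factor must apply to the full reverse-step map acting on $\hat{A}_t$, not merely to the idealized target; this is where Assumption~\ref{as:lipschitz_main_ref}(i) on $f_\theta$ and the margin $\eta < 0.2$ enter, keeping the per-step amplification mild enough that the geometric sum remains controlled rather than compounding catastrophically over the trajectory.
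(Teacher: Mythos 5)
Your proposal matches the paper's proof essentially step for step: the same coupling of the two samplers at $\hat{A}_T = A_T^{*}$, the same hybrid-term triangle inequality splitting each reverse step into a propagation term (Lipschitz in the graph input) plus a single-step deviation term $\delta_{\max}$, the same recursion $e_{t-1} \le L_{\max}\,e_t + \delta_{\max}$ with $e_T = 0$, and the same geometric-series unrolling to $\delta_{\max}\,(L_{\max}^{T}-1)/(L_{\max}-1) = O(TM^{-1})$. The squared-versus-unsquared subtlety you flag is resolved in the paper exactly as you propose---$\delta_{\max}$ is taken as a uniform deterministic per-step bound supplied by the model-capacity assumption (with ETDB cited for its $O(M^{-1})$ scale) rather than obtained by square-rooting the expected squared deviation---so your treatment is consistent with, and if anything slightly more careful than, the paper's own.
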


The \(O(T M^{-1})\) bound shows linear error accumulation with steps \(T\). This is due to Assumption~\ref{as:lipschitz_main_ref} (\(L_{\max} \approx 1\), a nearly non-expansive denoiser). The prefactor \(({L_{\max}^T - 1})/({L_{\max} - 1})\) is \(O(T)\) for \(L_{\max}\) near 1. This linear growth, combined with \(O(M^{-1})\) single-step errors, ensures manageable total error that diminishes for large graphs (\(M\))(detailed proof in Appendix~\ref{sec:accumulated_error_detailed}).

{\bf Remark.} 
The theoretical results—EFPC (Theorem~\ref{thm:efpc}), ETDB (Theorem~\ref{thm:etdb}), and MDEP (Theorem~\ref{thm:mdep})—form a coherent argument:
The noise level is inferable from the noisy graph (EFPC). Consequently, omitting explicit noise conditioning leads to minor single-step target deviation (ETDB). Crucially, these small errors accumulate linearly over the generative process (MDEP).
This suggests that for large graphs (\(M\)), explicit noise-level conditioning is not critical for effective GDMs, allowing simpler, \(t\)-free architectures (explored empirically in Section~\ref{sec:exp}).

\section{Coupled Structure--Feature Noise Model}
\label{sec:coupled_noise}
Our analysis in Section~\ref{sec:theory} assumed independent structural noise on \(A\). Yet social, biological, and information networks often exhibit coupled structure–attribute dynamics: node‐level changes frequently coincide with edge updates~\citep{snijders2010intro, battiston2023hypergraph, steglich2010dyn, karsai2014complex}. Modeling this coupling is vital both for faithful data representation and for practical aims, such as 
improving GNN robustness to joint perturbations~\citep{zhang2020gnnguard, li2025xgnncert, wu2019adv} and building contextual stochastic block models that encode such correlations~\citep{chai2024cSBM, duranthon2024csbm, sun2024aegraph}. Accordingly, we introduce a \emph{coupled Gaussian noise model} that explicitly parameterizes structure–feature correlations, 
extending the independent‑noise setting and enabling a finer test of noise conditioning in attribute‑aware graph diffusion.

\subsection{Coupled Gaussian Noise Process}
\label{sec:coupled_model_definition}
We model correlated feature and structural perturbations using shared latent random vectors \(\eta_i \in \mathbb{R}^{d_f} \sim \mathcal{N}(0,I_{d_f})\) for each node \(i\) (where \(d_f\) is feature dimensionality), an independent structural noise term \(\xi_{ij} \sim \mathcal{N}(0,1)\) for each potential edge \((i,j)\), and a coupling coefficient \(\gamma \in [0,1]\). Given time-dependent noise scales \(\sigma_X(t)\) for features and \(\sigma_A(t)\) for structure, the clean features \(X_i\) (from initial features \(X_0\)) and clean adjacency entries \(A_{ij}\) (from initial adjacency \(A_0\)) are corrupted at step \(t\) to their noisy counterparts \(\tilde{X}_i(t)\) and \(\tilde{A}_{ij}(t)\) as follows:
\[
\tilde{X}_i(t)= X_i + \sigma_{X}(t)\,\eta_i,
\qquad
\tilde{A}_{ij}(t)= A_{ij} + \sigma_{A}(t)\!
\left(
    \gamma\,\frac{\eta_i+\eta_j}{2\sqrt{d_f}}
    + \sqrt{1-\gamma^{2}}\;\xi_{ij}
\right).
\]
The shared latent vector \(\eta_i\) creates the dependency between feature and structural noise, with \(\gamma\) controlling its strength (\(\gamma=0\) implies independence). The \(\xi_{ij}\) term ensures idiosyncratic structural randomness. The joint perturbation vector, combining vectorized \(\tilde{A}(t)\) and \(\tilde{X}(t)\), follows a multivariate Gaussian distribution. Its covariance matrix \(\Sigma(t,\gamma)\) depends on \(\sigma_A(t), \sigma_X(t), \gamma\), and graph incidence structures. The total dimensionality of this joint noisy data (structure and features) is \(\mathcal{D} = M + n \cdot d_f\), where \(M = \binom{n}{2}\) is the number of potential edges for \(n\) nodes.

\subsection{Theoretical Guarantees for the Coupled Model}
\label{sec:coupled_theoretical_guarantees}
Under the coupled Gaussian framework, we extend our previous theoretical results. These demonstrate that noise level inferability and the limited impact of omitting explicit noise conditioning persist even with correlated structure and feature noise. Let \(\theta_N = (\beta, \gamma, \sigma_X, \sigma_A)\) denote the noise process parameters, where \(\beta\) can relate to parameters of an underlying discrete structural corruption if this Gaussian model is an approximation or extension.

\begin{theorem}[Joint Posterior Concentration (JPC)]
\label{thm:jpc} 
For the coupled Gaussian noise model, with total data dimensionality \(\mathcal{D} = M + n \cdot d_f\), and coupling coefficient \(\gamma \in [0,1]\), the posterior variance of the noise process parameters \(\theta_N = (\beta, \gamma, \sigma_X, \sigma_A)\), given the noisy graph structure \(\tilde{A}_t\) and noisy features \(\tilde{X}_t\), satisfies:
\[
\operatorname{Var}\!\bigl(\theta_N \mid \tilde{A}_t, \tilde{X}_t\bigr) = O(\mathcal{D}^{-1}).
\]
\end{theorem}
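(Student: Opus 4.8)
The plan is to mirror the argument behind EFPC (Theorem~\ref{thm:efpc}): recast the posterior concentration as a consequence of the Fisher information growing linearly in the data dimension, then invoke the Bernstein--von Mises theorem under Assumption~\ref{as:prior_main_ref}. First I would write the joint likelihood explicitly. Stacking $\operatorname{vec}(\tilde{A}_t - A_0)$ and $\operatorname{vec}(\tilde{X}_t - X_0)$ into a single $\mathcal{D}$-dimensional residual vector $r$, the coupled model makes $r \sim \mathcal{N}(0,\Sigma(\theta_N))$, where the clean signals $A_0,X_0$ enter only through the (noise-parameter-free) mean. Hence the entire Fisher information about $\theta_N$ rests on the second-moment structure, via the standard Gaussian identity
\[
I_{jk}(\theta_N) = \tfrac12 \operatorname{tr}\!\Bigl( \Sigma^{-1}\,\partial_{\theta_j}\Sigma\; \Sigma^{-1}\,\partial_{\theta_k}\Sigma \Bigr),
\]
so I would assemble $\Sigma$ in block form: an $n d_f$ feature block $\sigma_X^2\,\mathrm{Id}$, an $M\times M$ edge block whose diagonal is set by $\sigma_A$ and whose off-diagonals reflect the shared latent $\eta_i$, and a feature--edge cross-block carrying the $\gamma$-dependence.

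Next I would show that each diagonal entry of $I(\theta_N)$ scales as $\Theta(\mathcal{D})$. The cleanest route is to separate the two channels: the $n d_f$ feature residuals are i.i.d.\ $\mathcal{N}(0,\sigma_X^2)$, contributing Fisher information of order $n d_f$ for $\sigma_X$; the $M$ edge residuals contribute order $M$ for $\sigma_A$ and, through the analogue of the EFPC count $K$, for $\beta$; and $\gamma$ is informed jointly through the cross-covariance block, whose trace term again accumulates $\Theta(\mathcal{D})$ contributions. Since $\mathcal{D}=M+n d_f$, every diagonal entry is simultaneously $O(\mathcal{D})$ and bounded below by a positive constant times $\mathcal{D}$.

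The decisive step is upgrading these per-parameter bounds to a bound on the \emph{full} posterior covariance, which requires controlling the \emph{conditioning} of $I(\theta_N)$, not merely its diagonal. Here I would prove $I(\theta_N)\succeq c\,\mathcal{D}\,\mathrm{Id}_4$ for a constant $c>0$ uniform over compact interior subsets of the parameter domain, by exhibiting, for each unit direction in parameter space, a channel whose variation is non-degenerate (a perturbation of $\sigma_X$ moves the feature block, of $\gamma$ the cross-block, and so on). Given this uniform lower bound, Assumption~\ref{as:prior_main_ref} licenses Bernstein--von Mises, yielding $\operatorname{Var}(\theta_N\mid\tilde{A}_t,\tilde{X}_t) = I(\theta_N)^{-1} + o(\mathcal{D}^{-1}) = O(\mathcal{D}^{-1})$, and reading off the diagonal gives the claim.

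The main obstacle is exactly this non-degeneracy. The parameters are entangled because $\sigma_A$ and $\gamma$ jointly shape the edge and cross blocks, and the shared latent vectors $\eta_i$ correlate all edges incident to a node, so the $M$ edge residuals are not independent. I expect the correlations to reduce, but not collapse, the effective sample size---each $\eta_i$ enters at most $n-1$ edges while being pinned down directly by its $d_f$ feature coordinates---so the $\Theta(\mathcal{D})$ rate should survive. Making this quantitative, and handling the boundary cases $\gamma\in\{0,1\}$ where the $\sqrt{1-\gamma^{2}}$ factor degenerates and BvM must be applied with care, is where the real work lies.
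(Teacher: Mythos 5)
Your proposal follows essentially the same route as the paper's proof: write the coupled observation as a $\mathcal{D}$-dimensional Gaussian whose covariance $\Sigma(\theta_N)$ carries all the information about $\theta_N$, argue that the Fisher information scales as $\Theta(\mathcal{D})$, and invoke Bernstein--von Mises to conclude $\operatorname{Var}(\theta_N \mid \tilde{A}_t,\tilde{X}_t)=O(\mathcal{D}^{-1})$. If anything, you are more careful than the paper: the block-wise trace computation and the uniform spectral lower bound $I(\theta_N)\succeq c\,\mathcal{D}\,\mathrm{Id}_4$ that you single out as the decisive step are simply subsumed in the paper under ``standard regularity conditions'' (supported only by the remark that $\lambda_{\min}(\Sigma)\ge\min\{\sigma_A^2(1-\gamma^2),\sigma_X^2\}$ for $\gamma<1$), so your explicit treatment of cross-parameter degeneracy, of the node-level correlations induced by the shared latents $\eta_i$, and of the boundary cases $\gamma\in\{0,1\}$ identifies precisely the work the paper leaves implicit.
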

As the total data dimensionality \(\mathcal{D}\) increases, the noise parameters can be jointly inferred with increasing accuracy. As \(\gamma \to 1\), covariance matrix \(\Sigma(t,\gamma)\) may become singular. Analysis can proceed via projection onto its non-degenerate subspace, preserving \(O(\mathcal{D}^{-1})\) rates but potentially with larger constants. In Section~\ref{sec:exp}, we varied \(\gamma\) over the interval \([0, 0.99]\) during empirical validation.

Next, we bound the deviation in the denoising target when explicit noise information is omitted for the joint structure-feature space. Let \(\mu_t^{\mathrm{cond}} = \mathbb{E}[(A_0, X_0) | \tilde{A}_t, \tilde{X}_t, t]\) be the optimal Bayesian estimate of the clean graph structure \(A_0\) and features \(X_0\) given the noisy data and true timestep \(t\). Similarly, let \(\bar{\mu}_t = \mathbb{E}[(A_0, X_0) | \tilde{A}_t, \tilde{X}_t]\) be the estimate without explicit \(t\).

\begin{theorem}[Joint Target Deviation Bound (JTDB)]
\label{thm:jtdb} 
Under \emph{Assumption~\ref{as:lipschitz_main_ref}(ii) applied to the ideal conditional target} and Theorem~\ref{thm:jpc}, the expected squared Frobenius norm of the deviation between the conditional target $\mu^{\mathrm{cond}}_t$ and the unconditional target $\bar\mu_t$ for the joint data $(A,X)$ is bounded by:
\[
\mathbb{E}\!\left[\bigl\|\mu^{\mathrm{cond}}_t-\bar\mu_t\bigr\|_F^2\right]
=O(D^{-1}).
\]
\end{theorem}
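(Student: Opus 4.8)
The plan is to reduce the Joint Target Deviation Bound to a direct application of the same bias--variance decomposition used for ETDB (Theorem~\ref{thm:etdb}), but now carried out over the full joint parameter vector $\theta_N = (\beta,\gamma,\sigma_X,\sigma_A)$ and the joint data $(\tilde{A}_t,\tilde{X}_t)$. First I would write the unconditional target as a posterior average of the conditional target over the inferred noise level, namely
\[
\bar\mu_t = \mathbb{E}_{\theta_N \sim p(\theta_N \mid \tilde{A}_t,\tilde{X}_t)}\bigl[\mu^{\mathrm{cond}}(\theta_N)\bigr],
\]
where $\mu^{\mathrm{cond}}(\theta_N) = \mathbb{E}[(A_0,X_0)\mid \tilde{A}_t,\tilde{X}_t,\theta_N]$ and I abbreviate the time-indexed conditional target $\mu^{\mathrm{cond}}_t = \mu^{\mathrm{cond}}(\theta_N^{\mathrm{true}})$. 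The deviation is then a difference between the function evaluated at the true parameter and its posterior mean.

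Next I would invoke Assumption~\ref{as:lipschitz_main_ref}(ii): the map $\theta_N \mapsto \mu^{\mathrm{cond}}(\theta_N)$ is $L_{\max}$-Lipschitz. This converts the target deviation into a deviation in parameter space, giving the pointwise bound
\[
\bigl\|\mu^{\mathrm{cond}}_t - \bar\mu_t\bigr\|_F
= \bigl\|\mu^{\mathrm{cond}}(\theta_N^{\mathrm{true}}) - \mathbb{E}[\mu^{\mathrm{cond}}(\theta_N)]\bigr\|_F
\le L_{\max}\,\mathbb{E}\bigl[\|\theta_N^{\mathrm{true}} - \theta_N\|\bigr] + L_{\max}\,\bigl\|\theta_N^{\mathrm{true}} - \mathbb{E}[\theta_N]\bigr\|.
\]
Taking expected squared norms and applying the standard inequality $\|\theta_N^{\mathrm{true}}-\mathbb{E}[\theta_N]\|^2 + \mathbb{E}\|\theta_N-\mathbb{E}[\theta_N]\|^2$ recovers a bias-plus-variance form. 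The variance term is controlled directly by Theorem~\ref{thm:jpc}, which states $\operatorname{Var}(\theta_N \mid \tilde{A}_t,\tilde{X}_t) = O(\mathcal{D}^{-1})$ componentwise; summing over the four components of $\theta_N$ keeps the rate at $O(\mathcal{D}^{-1})$. The squared-bias term is the deviation of the posterior mean from the true parameter, which by the Bernstein--von Mises asymptotics underlying Theorem~\ref{thm:jpc} is also $O(\mathcal{D}^{-1})$ (the posterior concentrates on the truth at the parametric rate), so both contributions match the desired order.

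The main obstacle I anticipate is the potential singularity of the joint covariance $\Sigma(t,\gamma)$ as $\gamma \to 1$, flagged in the discussion after Theorem~\ref{thm:jpc}. When $\Sigma$ degenerates, the Fisher information for certain parameter directions can blow up or vanish, and the Lipschitz constant of $\mu^{\mathrm{cond}}$ in those directions is no longer uniformly controlled. The plan is to handle this exactly as suggested there: restrict the analysis to the non-degenerate subspace obtained by projecting out the null directions of $\Sigma(t,\gamma)$, verify that Theorem~\ref{thm:jpc} furnishes the $O(\mathcal{D}^{-1})$ posterior variance on that subspace (with a possibly larger constant absorbed into the $O(\cdot)$), and note that the degenerate directions carry no information and hence contribute no deviation to the target. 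A secondary subtlety is ensuring the Lipschitz assumption applies to the \emph{joint} target $(A_0,X_0)$ rather than the structure alone; since the Frobenius norm splits additively across the stacked structure and feature blocks, I would bound each block by its own Lipschitz constant and combine, so the argument goes through with $\mathcal{D} = M + n\cdot d_f$ playing the role that $M$ played in ETDB.
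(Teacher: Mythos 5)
Your proposal is correct, but it takes a genuinely different route from the paper's proof. The paper reads the expectation in the theorem as an average over the noise level $u$ drawn from its posterior $p(u \mid \tilde{A}_t,\tilde{X}_t)$; under that reading, $\mathbb{E}\bigl[\|\mu^{\mathrm{cond}}_u-\bar\mu_t\|_F^2\bigr]$ \emph{is} the posterior variance of the random matrix $\mu^{\mathrm{cond}}_U$, and the whole proof is one application of the inequality $\operatorname{Var}[g(U)]\le L^2\operatorname{Var}[U]$ for $L$-Lipschitz $g$, with $\operatorname{Var}(u\mid\tilde{Z})=O(\mathcal{D}^{-1})$ supplied by JPC --- no bias term ever appears. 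You instead fix the true parameter $\theta_N^{\mathrm{true}}$, so your deviation decomposes into squared bias of the posterior mean plus posterior variance; the variance piece is covered by JPC, but the bias piece needs posterior-mean consistency at the parametric rate, which is \emph{not} contained in the statement of Theorem~\ref{thm:jpc} (a variance bound alone) and must be pulled from the Bernstein--von Mises machinery underlying its proof, as you correctly acknowledge. Your version also works with the full vector $\theta_N=(\beta,\gamma,\sigma_X,\sigma_A)$ and an $L_{\max}$-Lipschitz map $\theta_N\mapsto\mu^{\mathrm{cond}}(\theta_N)$, a mild strengthening of Assumption~A2(ii), which the paper states only for the scalar noise index. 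What each approach buys: the paper's is shorter and needs strictly weaker inputs; yours is more explicit about the frequentist ``deviation at the truth'' interpretation, makes the $\gamma\to 1$ degeneracy rigorous via projection (and your observation that non-identifiable directions cannot move the conditional target, since parameters inducing the same likelihood induce the same posterior expectation, is a nice touch the paper leaves implicit). Two cosmetic points: your pointwise bound carries a redundant second term (Jensen already gives $\|\mu^{\mathrm{cond}}(\theta_N^{\mathrm{true}})-\mathbb{E}[\mu^{\mathrm{cond}}(\theta_N)]\|_F\le L_{\max}\,\mathbb{E}\|\theta_N^{\mathrm{true}}-\theta_N\|$ on its own), and the ``standard inequality'' you cite is actually the bias--variance \emph{identity} $\mathbb{E}\|\theta_N^{\mathrm{true}}-\theta_N\|^2=\|\theta_N^{\mathrm{true}}-\mathbb{E}[\theta_N]\|^2+\mathbb{E}\|\theta_N-\mathbb{E}[\theta_N]\|^2$; neither affects correctness.
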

The impact of omitting noise-level conditioning on the immediate denoising target remains minimal even with coupled noise, scaling inversely with the total dimensionality \(\mathcal{D}\). 

Finally, we analyze the propagation of these errors over multiple denoising steps. Let \((\hat{A}_0, \hat{X}_0)\) be the output of a T-step unconditional sampler and \((A_0^*, X_0^*)\) be the output of an ideal conditional sampler.
\begin{theorem}[Joint Multi-Step Error Propagation (JMEP)]
\label{thm:jmep} 
Consider a \(T\)-step denoising sampler for the coupled model. If the single-step target deviation (characterized by JTDB, Theorem~\ref{thm:jtdb}) is \(\delta_i = O(\mathcal{D}^{-1})\) for each step \(i\), and the reverse operator satisfies a Lipschitz condition \(L_i \le L_{\max}\) (Assumption~\ref{as:lipschitz_main_ref}), then after \(T\) reverse steps, the cumulative error between the unconditional generation \((\hat{A}_0, \hat{X}_0)\) and the ideal conditional generation \((A_0^*, X_0^*)\) is bounded by:
\[
\|A_0^* - \hat{A}_0\|_E + \|X_0^* - \hat{X}_0\|_F = O(T/\mathcal{D}).
\]
Here, \(\| \cdot \|_E\) denotes an appropriate edge-wise norm (e.g., Frobenius norm on the adjacency matrix difference) and \(\| \cdot \|_F\) is the Frobenius norm for feature differences.
\end{theorem}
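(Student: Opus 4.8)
The plan is to lift the single-structure argument of MDEP (Theorem~\ref{thm:mdep}) to the concatenated state $S_t := (\tilde{A}_t, \tilde{X}_t) \in \mathbb{R}^{\mathcal{D}}$. First I would fix a single working norm on the joint space, namely $\|S\|_\star := \|A\|_E + \|X\|_F$ as in the statement (or an equivalent Euclidean norm on the vectorized concatenation), and verify that Assumption~\ref{as:lipschitz_main_ref}(i)---stated there for the graph input---extends to the joint input, so that the one-step reverse operator $\Phi_t$ mapping $S_t \mapsto S_{t-1}$ is $L_{\max}$-Lipschitz in $\|\cdot\|_\star$. This is where the coupling coefficient $\gamma$ enters: since $\sigma_A(t)$ mixes the shared latents $\eta_i,\eta_j$ into both the feature and edge perturbations, the reverse map carries cross-blocks between its $A$- and $X$-components, and I must argue their operator norm remains controlled by $L_{\max}$, if necessary after restricting to the non-degenerate subspace of $\Sigma(t,\gamma)$ flagged after Theorem~\ref{thm:jpc}.

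Next I would set up the per-step error recursion. Let $e_t := \|S_t^* - \hat{S}_t\|_\star$ be the gap between the ideal conditional trajectory $S_t^*$ and the unconditional trajectory $\hat{S}_t$, with $e_T = 0$ because both samplers are initialized from the same terminal noise. At each reverse step the two trajectories diverge for two reasons: the unconditional operator aims at $\bar\mu_t$ rather than $\mu_t^{\mathrm{cond}}$, injecting the single-step deviation $\delta_t$, and the accumulated gap $e_t$ is passed through $\Phi_t$. A triangle-inequality split then gives
\[
e_{t-1} \le L_t\, e_t + \delta_t \le L_{\max}\, e_t + \delta_{\max},
\]
where $\delta_{\max} = O(\mathcal{D}^{-1})$ is supplied directly by the hypothesis (the content of JTDB, Theorem~\ref{thm:jtdb}).

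Unrolling this linear recursion from $t = T$ down to $t = 0$ yields the geometric sum
\[
e_0 \le \sum_{i=1}^{T} L_{\max}^{\,i-1}\,\delta_{\max} = \frac{L_{\max}^{T} - 1}{L_{\max} - 1}\,\delta_{\max}.
\]
With $L_{\max} = 1 + \eta$ and $\eta$ small, a first-order expansion gives $(L_{\max}^{T} - 1)/(L_{\max} - 1) = O(T)$, hence $e_0 = O(T\,\delta_{\max}) = O(T/\mathcal{D})$. Since $e_0 = \|A_0^* - \hat{A}_0\|_E + \|X_0^* - \hat{X}_0\|_F$ by the choice of $\|\cdot\|_\star$, this is precisely the claimed bound.

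I expect the main obstacle to be the first step: establishing that the \emph{joint} reverse operator is non-expansive (or $L_{\max}$-Lipschitz) in one norm that simultaneously governs the structural and feature components. Unlike the decoupled setting, the shared-latent coupling creates off-diagonal sensitivity between the $A$- and $X$-updates, so a naive additive bound of the form $L_{\max}^A + L_{\max}^X$ could overshoot $1 + \eta$. I would handle this by choosing $\|\cdot\|_\star$ to block-diagonalize $\Sigma(t,\gamma)$ and controlling the coupling through its bounded eigenvalues for $\gamma \in [0,1]$, treating the $\gamma \to 1$ degeneracy by projection onto the non-degenerate subspace exactly as in the remark following Theorem~\ref{thm:jpc}; once this single-norm Lipschitz bound is in place, the recursion and geometric-sum steps are routine.
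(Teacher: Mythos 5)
Your proposal is correct and follows essentially the same route as the paper: the paper's proof of Theorem~\ref{thm:jmep} likewise stacks the structure and features into a joint state $Z_t$, inherits the recursion $B_i \le \delta_i + L_i B_{i+1}$ verbatim from the MDEP proof, unrolls it into the geometric sum $\delta_{\max}(L_{\max}^T-1)/(L_{\max}-1) = O(T/\mathcal{D})$, and then splits the joint norm into the $A$- and $X$-components. The only cosmetic difference is that the paper works in the Frobenius norm of the stacked vector (using $\|\hat Z_0 - Z_0^\star\|_F^2 = \|\hat A_0 - A_0^\star\|_F^2 + \|\hat X_0 - X_0^\star\|_F^2$) rather than your sum norm $\|\cdot\|_\star$, and it treats the joint Lipschitz property you flag as the main obstacle simply as a hypothesis (Assumption~\ref{as:lipschitz_main_ref} extended to the coupled input), rather than deriving it.
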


JMEP~(Theorem \ref{thm:jmep}) demonstrates that, similar to the structure-only case, errors in the unconditional coupled model accumulate linearly with the number of steps \(T\) and diminish with the total graph dimensionality \(\mathcal{D}\). Detailed proofs for Theorems~\ref{thm:jpc}, \ref{thm:jtdb}, and \ref{thm:jmep} are provided in Appendix~\ref{app:coupled_proofs}. 
\section{Empirical Validation of the Theoretical Framework} 
\label{sec:exp}

This section empirically assesses the theoretical framework developed in Section~\ref{sec:theory} and~\ref{sec:coupled_noise}. We first validate the predicted scaling laws for EFPC, ETDB, MDEP, and their coupled counterparts (JPC, JTDB, JMEP) using synthetic graph data. Subsequently, we evaluate the practical performance of \(t\)-free GDMs on real-world graph generation benchmarks. For synthetic experiments, Erdős–Rényi\cite{erdos1960evolution} or Stochastic Block Model (SBM)\cite{holland1983stochastic} graphs with \(n\in[50,1.4\times10^{4}]\) nodes are utilized, with default parameters \(\beta=0.2\) (edge‑flip rate) and \(\tau_X=0.5\) (feature‑noise variance). All configurations are repeated 5 times with different random seeds, and we report means with 95\% confidence intervals.

\subsection{Broad Validation of Theoretical Scaling Laws}
\label{sec:exp_summary_scaling}
We comprehensively test the predicted scaling behaviors across all core components of our theory. Table~\ref{tab:scaling} summarizes the quantitative comparison between theoretical rates and empirically measured exponents (or constants) for statistics related to each theorem (EFPC/JPC, ETDB/JTDB, and JMEP).

\begin{table}[!ht] 
  \caption{Summary of empirical scaling exponents and constants 
(mean $\pm$ 95 \% CI over five seeds).
}
  \label{tab:scaling}
  \centering
  \small
  \begin{tabular}{llccc}
    \toprule
    Group & Quantity & Theory & Empirical & $R^{2}$ \\
    \midrule
    \multirow{3}{*}{EFPC} &
    $\mathrm{Var}(\beta\mid A_t)$ vs.\ $|E|$
      & $-1$  & $-1.00\!\pm\!0.02$ & 0.999 \\
    & $|E|\!\times\!\mathrm{Var}$
      & const & $0.160\!\pm\!0.006$ & — \\
    & Posterior‑mean bias
      & $0$   & $\le2{\times}10^{-3}$ & — \\[2pt]
    \multirow{3}{*}{ETDB} &
    Deviation/edge vs.\ $|E|$
      & $-1$  & $-1.12\!\pm\!0.03$ & 0.998 \\
    & $\|R(A_t)\|_2^{2}$ vs.\ $|E|$
      & $+1$  & $+1.00\!\pm\!0.01$ & 1.000 \\
    & Relative error vs.\ $|E|$
      & $-2$  & $-2.12\!\pm\!0.04$ & 0.997 \\[2pt]
    \multirow{4}{*}{Coupled ($\gamma=0.7$)} &
    JPC variance vs.\ $\mathcal{D}$ 
      & $-1$  & $-1.00\!\pm\!0.03$ & 0.999 \\
    & JTDB deviation vs.\ $\mathcal{D}$ 
      & $-1$  & $-1.06\!\pm\!0.04$ & 0.996 \\
    & $\|R_{\mathrm{uncond}}\|^{2}$ vs.\ $\mathcal{D}$ 
      & $+1$  & $+1.01\!\pm\!0.02$ & 0.999 \\
    & JMEP error ($T{=}4$) vs.\ $\mathcal{D}$
      & $-1$  & $-1.04\!\pm\!0.05$ & 0.995 \\
    \bottomrule
  \end{tabular}
\end{table}

\paragraph{Interpretation of Scaling Law Validation.}
Results in Table~\ref{tab:scaling} show excellent correspondence between theoretical predictions and empirical measurements. Empirically derived slopes for scaling rates are within \(\pm0.06\) of theoretical targets, with high coefficients of determination (typically \(R^2 \ge 0.995\)). This precise agreement across diverse aspects—noise rate posterior concentration (EFPC/JPC), single-step target deviation (ETDB/JTDB), and coupled model error propagation (JMEP for \(T=4\))—provides robust initial evidence for our theory's soundness. For instance, flip rate \(\beta\)'s posterior variance diminishes as \(O(|E|^{-1})\) (or \(O(\mathcal{D}^{-1})\) for JPC), confirming noise level inferability. This accuracy is critical, as these principles underpin subsequent multi-step denoising analysis.

\subsection{Multi‐Step Error Propagation (MDEP) in Detail}
\label{sec:exp_mdep_restructured} 

\begin{wrapfigure}{r}{0.48\linewidth}
  \centering
  \vspace{-2.4em} 
  \includegraphics[width=\linewidth]{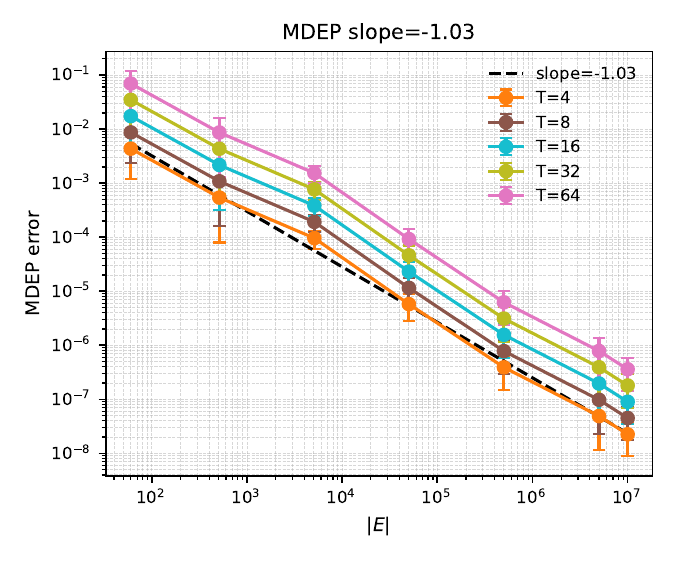}
  \caption{\textbf{MDEP empirical scaling.} Log–log plot: per-edge cumulative error \(\sum_{i=1}^{T}\Delta_i/|E|\) vs. potential edges \(|E|\) for various \(T\) values. The black dashed line (slope \(-1.03\), \(R^2=0.9998\)) for \(T_{\text{base}}=4\) aligns with the predicted \(\mathcal{O}(|E|^{-1})\) rate (Theorem~\ref{thm:mdep}). All curves are approximately parallel, with vertical offsets proportional to \(T/T_{\text{base}}\), verifying the \(\mathcal{O}(T/|E|)\) scaling law.}
  \label{fig:mdep}
  \vspace{-2.8em}  
\end{wrapfigure}

To validate MDEP (Theorem~\ref{thm:mdep}), which predicts how single-step deviations accumulate, we simulated reverse denoising trajectories of varying lengths \(T\in\{4,8,16,32,64\}\) on synthetic graphs with potential edges \(|E|\) ranging from \(10^{2}\) to \(10^{7}\). For each \((|E|,T)\) pair, we recorded \(\sum_{i=1}^{T}\Delta_i\), the cumulative Frobenius-norm gap between \(t\)-free and \(t\)-aware updates, normalized by \(|E|\) for a size-agnostic error metric, as shown in Figure~\ref{fig:mdep}.

\textbf{Key Insight:} Our empirical findings significantly validate the MDEP theorem. The cumulative error clearly shows a power-law decay with potential edge count (\(|E|\)), excellently aligning with the predicted \(\mathcal{O}(|E|^{-1})\) scaling (e.g., slope \(-1.03\), \(R^2=0.9998\) for \(T_{\text{base}}=4\)). Moreover, the error scales linearly with trajectory length (\(T\)): log-log plots for different \(T\) are parallel, and normalizing by \(T\) collapses data onto a single line, confirming the full \(\mathcal{O}(T/|E|)\) scaling. Crucially, even for extensive trajectories (e.g., \(T=64\)), this error becomes negligible (< \(10^{-6}\)) for large graphs (\(|E|\!\ge\!10^{5}\)). This demonstrates that omitting explicit noise conditioning incurs a minimal, diminishing cost with increasing graph size, strongly supporting the viability of \(t\)-free models in large-graph scenarios.

\subsection{Impact of Structure--Feature Coupling Strength (\texorpdfstring{$\gamma$}{gamma})}
\label{sec:exp_gamma_restructured} 
We then investigated the coupled noise model (Section~\ref{sec:coupled_noise}), specifically how the coupling strength \(\gamma\) between structural and feature noise affects model performance and noise inferability. As shown in Figure~\ref{fig:gamma}, we varied \(\gamma\) from 0 (independent noise) to 0.99 (highly coupled shared noise component).
\begin{figure}[!ht] 
  \centering
  \includegraphics[width=0.9\linewidth]{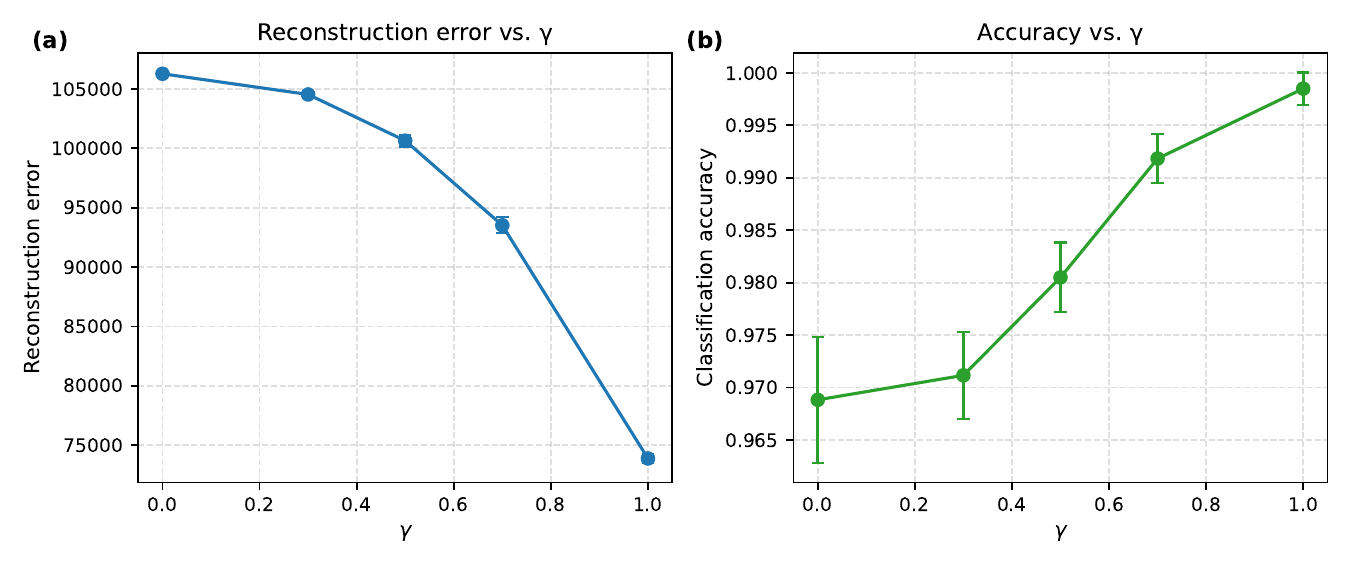}
  \caption{\textbf{Effect of coupling strength \(\gamma\) on unconditional model.} (a): Total reconstruction error (\(\|\widehat A_0 - A_0\|_1 + \|\widehat X_0 - X_0\|_F\). (b): Node classification accuracy. Shaded bands indicate 95\% CI over ten trials. Stronger coupling improves both reconstruction and downstream task performance.}
  \label{fig:gamma}
\end{figure}

\textbf{Key Insight:} Increasing the structure–feature coupling strength (\(\gamma\)) empirically improves unconditional model performance. Specifically, as \(\gamma\) increases from 0 to 0.99, total reconstruction error decreases substantially (\(\sim\)40\%), and downstream node classification accuracy improves (from \(\sim\)94\% to over 97\%). This suggests that shared noise, potentially through a robust joint signal from shared latent component \(\eta_i\), effectively aligns feature and structure channels, enhancing overall signal–to–noise ratio. While our theoretical results (JPC \& JTDB) predict posterior uncertainty about noise parameters decays as \(\mathcal{O}(D^{-1})\) irrespective of \(\gamma<1\), our empirical findings indicate stronger coupling \emph{aids} noise inference. This underscores a key takeaway: omitting explicit conditioning on coupling strength \(\gamma\) (or timestep \(t\)) within the coupled noise model does not degrade and can even enhance performance.

\subsection{Performance of \texorpdfstring{$t$}{t}-free Models on Real-World Graph Generation}
\label{sec:exp_real_world_restructured} 
\paragraph{Experimental Setup.}
We used \textbf{QM9}~\citep{ramakrishnan2014quantum}, a dataset of 133,885 small organic molecules, for evaluating molecular graph generation. Metrics included chemical \textit{Validity} (fraction of valid molecules generated), \textit{Uniqueness}, \textit{Novelty}, and fidelity of structural distributions (mean ring count, mean molecular weight).
The second benchmark was \textbf{soc-Epinions1}~\citep{leskovec2007graph}, a larger social network from the SNAP dataset, used to evaluate structural reconstruction fidelity. Metrics included subgraph \textit{Validity}, \textit{Uniqueness}, basic structural statistics (average nodes and edges), and Maximum Mean Discrepancy (MMD) for degree, clustering coefficient, and triangle distributions (more details in  Appendix~\ref{app:real_world_experiments_revised}). We benchmarked two GDM architectures, \textbf{DiGress}~\citep{vignac2022digress} and \textbf{GDSS}~\citep{gdss}, under three variants:
\begin{itemize}[noitemsep,topsep=0pt]
    \item \textbf{\(t\)-aware}: Standard models explicitly conditioned on a learnable embedding of the timestep.
    \item \textbf{\(t\)-free}: Models trained without any timestep conditioning.
    \item \textbf{\(t\)-free (warm)}: \(t\)-free models initialized from pre-trained \(t\)-aware weights and then fine-tuned without timestep embeddings.
\end{itemize}


\paragraph{Analysis of Real-World Performance.}
Summarizing our key findings succinctly, Table~\ref{tab:combined_qm9_epinions_results_modified} reveal that \(t\)-free variants typically deliver comparable or superior generative quality to their \(t\)-aware counterparts, coupled with clear computational advantages (up to 19.9\% parameter reduction for GDSS and speedups of 8-9\%). We now highlight specific conditions under which explicit time conditioning remains beneficial, guided by detailed insights from our experiments:

\begin{itemize}[leftmargin=*,itemsep=0pt,topsep=2pt]
    \item \textbf{Graph Scale and Signal Strength.} Small or sparse graphs provide limited statistical signals (as evidenced on soc-Epinions1 with maximum number of nodes $N_{\max}=50$ (Table~\ref{tab:combined_qm9_epinions_results_modified}), where initial validity was modest, e.g., 25.44\% for GDSS \(t\)-aware). Increasing the graph scale to $N_{\max}=200$  (Table~\ref{tab:soc_epinions_gdss_nmax_comparison}) drastically improved validity to 100\% across GDSS variants. Explicit time conditioning thus offers clear advantages in low-information scenarios, bridging the gap where implicit noise inference (predicted by EFPC~\ref{thm:efpc}/JPC~\ref{thm:jpc} theory) struggles due to insufficient statistical signals.

    \item \textbf{Architecture-Specific Sensitivity.} DiGress models demonstrated significant sensitivity to the removal of explicit time conditioning, particularly evident in the sharp validity drop for DiGress \(t\)-free(warm) (from 100.00\% to 23.67\%). This highlights a key architectural consideration: transformer-based models using discrete diffusion processes appear heavily reliant on explicit time embeddings due to learned transition probabilities and attention mechanisms. Conversely, SDE-based models like GDSS exhibit greater robustness in transitioning to unconditional generation.

    \item \textbf{Optimization and Training Stability.} Training from scratch revealed challenges, with \(t\)-free GDSS models reaching full validity but slightly lagging in structural metrics compared to their warm-started or explicitly conditioned counterparts (e.g., on soc-Epinions1, GDSS \(t\)-free from scratch yields \(\text{MMD}_{\text{Overall}} = 0.72\) vs.\ 0.66 for the warm-started variant). Explicit conditioning simplifies optimization by providing guidance on noise levels, suggesting a crucial role in achieving stable convergence, particularly in architectures less adept at implicitly inferring noise dynamics.

\end{itemize}

In summary, although our results \emph{strongly} endorse the simplicity and efficiency of \(t\)-free models, explicit conditioning \emph{still} remains valuable, especially in small or low-signal graphs, architectures heavily reliant on learned embeddings, and scenarios demanding greater training stability. These insights provide guidance for practitioners on when \(t\)-free suffices and when conditioning is preferable.

\begin{table*}[ht]
  \centering
  \small
  \caption{Generation results for QM9 and soc-Epinions1 datasets. Metrics are mean $\pm$ 95 \% CI over five seeds. ``Params'' in millions; ``Time'' is per-epoch on one NVIDIA L4 GPU.}
  \label{tab:combined_qm9_epinions_results_modified}

  \setlength{\tabcolsep}{4pt}
  \begin{tabularx}{\textwidth}{l *{4}{Y} c c}
    \multicolumn{7}{c}{\textbf{QM9 Dataset}} \\ 
    \toprule
    \textbf{Model / Variant} &
      \textbf{Valid\,\%} & \textbf{Unique\,\%} &
      \textbf{Novel\,\%} &
      \textbf{MW$_\mathrm{mean}$} &
      \textbf{Params} & \textbf{Time} \\
    \midrule
    DiGress t‑aware      & 99.98$\pm$0.01          & 4.76$\pm$0.07           & \textbf{100.00$\pm$0.00} & 145.37$\pm$0.03          & 13.16          & \textbf{47.82} \\
    DiGress t‑free       & \textbf{99.99$\pm$0.01} & 4.65$\pm$0.08           & \textbf{100.00$\pm$0.00} & 149.74$\pm$0.01          & \textbf{12.65} & 48.16          \\
    DiGress t‑free(warm) & 99.96$\pm$0.01          & \textbf{5.09$\pm$0.12}  & \textbf{100.00$\pm$0.00} & 147.46$\pm$0.02          & \textbf{12.65} & 48.23          \\
    \midrule[\heavyrulewidth] 
    GDSS t‑aware         & 92.32$\pm$0.19          & 81.08$\pm$0.31          & \textbf{99.99$\pm$0.00} & 95.00$\pm$0.12           & 1.89           & 9.56           \\
    GDSS t‑free          & \textbf{94.00$\pm$0.09} & \textbf{89.60$\pm$0.20} & \textbf{99.99$\pm$0.00} & 99.23$\pm$0.17           & \textbf{1.79}  & \textbf{8.78}  \\
    GDSS t‑free(warm)    & 92.57$\pm$0.03          & 88.46$\pm$0.41          & \textbf{99.99$\pm$0.00} & 97.74$\pm$0.36           & \textbf{1.79}  & \textbf{8.78}  \\
    \bottomrule
  \end{tabularx}

  \vspace{0.5em} 

  \setlength{\tabcolsep}{5pt}
  \begin{tabularx}{\textwidth}{l YYY | YYY}
    \multicolumn{7}{c}{\textbf{soc-Epinions1 Dataset}} \\[-0.5ex] 
    \toprule
    \textbf{Metric} &
      \textbf{DiGress \(t\)-aware} &
      \textbf{DiGress \(t\)-free} &
      \textbf{DiGress \(t\)-free(warm)} &
      \textbf{GDSS \(t\)-aware} &
      \textbf{GDSS \(t\)-free} &
      \textbf{GDSS \(t\)-free(warm)} \\
    \midrule
    Valid\,\%        & \textbf{100.00$\pm$0.00} & 99.83$\pm$0.27          & 23.67$\pm$2.08          & 25.44$\pm$1.22          & 33.36$\pm$1.43          & \textbf{48.00$\pm$1.99} \\
    Unique\,\%       & \textbf{100.00$\pm$0.00} & \textbf{100.00$\pm$0.00} & \textbf{100.00$\pm$0.00} & 94.68$\pm$1.40          & 97.51$\pm$1.59          & \textbf{99.91$\pm$0.17} \\
    Avg Nodes        & \textbf{50.00$\pm$0.00}  & \textbf{50.00$\pm$0.00}  & \textbf{50.00$\pm$0.00}  & 31.11$\pm$1.25          & 36.97$\pm$0.81          & 44.64$\pm$0.86          \\
    Avg Edges        & 291.06$\pm$0.49          & 281.39$\pm$0.25          & 270.05$\pm$0.12          & 503.57$\pm$32.42        & 529.06$\pm$15.64        & 670.82$\pm$16.38        \\
    MMD$_\text{Deg}$   & 0.46$\pm$0.00            & 0.41$\pm$0.00            & \textbf{0.40$\pm$0.01}  & 0.76$\pm$0.00           & \textbf{0.66$\pm$0.01}  & 0.69$\pm$0.01           \\
    MMD$_\text{Clust}$ & 0.71$\pm$0.00            & 0.69$\pm$0.00            & \textbf{0.68$\pm$0.01}  & 0.70$\pm$0.01           & 0.70$\pm$0.01           & \textbf{0.39$\pm$0.01}  \\
    MMD$_\text{Tri}$   & 0.41$\pm$0.00            & 0.38$\pm$0.00            & \textbf{0.37$\pm$0.00}  & \textbf{0.80$\pm$0.02}  & 0.82$\pm$0.01           & 0.90$\pm$0.00           \\
    MMD$_\text{Overall}$ & 0.53$\pm$0.00          & 0.49$\pm$0.00            & \textbf{0.48$\pm$0.00}  & 0.76$\pm$0.02           & 0.72$\pm$0.00           & \textbf{0.66$\pm$0.00}  \\
    Params (M)       & 1.355                    & \textbf{1.322}           & \textbf{1.322}           & 0.251                   & \textbf{0.201}          & \textbf{0.201}          \\
    Time             & 6.97                     & \textbf{6.95}            & 6.96                     & 1.71                    & \textbf{1.55}           & \textbf{1.55}           \\
    \bottomrule
  \end{tabularx}
\end{table*}
\vspace{-1.5em} 

\section{Conclusion}

This work challenges the necessity of explicit noise-level conditioning in Graph Diffusion Models (GDMs). We provide strong theoretical and empirical evidence that this conditioning is often unnecessary for effective graph generation. Our theoretical framework (EFPC, ETDB, MDEP) proves noise levels are implicitly inferable from corrupted graph data, showing negligible error ($O(M^{-1})$ single-step, $O(T/M)$ cumulative) when omitting conditioning. Comprehensive empirical evaluations corroborate these predictions. Unconditional GDSS and DiGress variants matched or surpassed conditioned models in quality on diverse datasets. They also proved more efficient, reducing parameters (4--6\%) and computation time (8--10\%). 

These findings support our thesis: graph data’s high dimensionality is enough for denoising without explicit noise levels, enabling simpler yet equally powerful and efficient GDM architectures. Although this study provides a robust foundation, promising directions include adaptive coupling, scaling to larger graphs, and novel sampling strategies. We believe this work offers a solid basis, theoretically and practically, for designing the next generation of simpler and more efficient GDMs.

\begin{ack}
Work in the paper is supported by NSF ECCS 2412484 and NSF GEO CI 2425748.
\end{ack}

\bibliography{reference}
\bibliographystyle{unsrtnat}

\appendix

\section{Detailed Assumptions and Rationale for Theoretical Framework}\label{app:detailed_assumptions} 

This appendix provides a detailed statement and justification for the assumptions underpinning the theoretical framework developed in Section~\ref{sec:theory} of the main paper.

\subsection*{Formal Statement of Assumptions}
\label{subsec:formal_assumptions} 

To ground our theoretical analysis, we make the following four key assumptions:

\begin{enumerate}[label=\textbf{A\arabic{enumi}.}, itemsep=3pt, topsep=3pt, leftmargin=*]
    \item \textbf{Degree Condition.}\label{as:degree_condition_appendix_final} 
    The graph $G=(V,E)$ under consideration is assumed to satisfy one of the following conditions regarding its node degrees:
    \begin{enumerate}[label=(\alph*), itemsep=1pt, topsep=1pt, leftmargin=2em] 
        \item The maximum node degree, $\Delta_{\max} = \max_{v \in V} \deg(v)$, is bounded by a constant, that is, $\Delta_{\max}=O(1)$.
        \item The graph has a power-law degree distribution $P(k)\propto k^{-\alpha}$ with a decay exponent $\alpha>2$. Further discussion is provided in Appendix~\ref{scale-free proof}. 
    \end{enumerate}

    \item \textbf{Global Lipschitz Continuity of Denoiser.}\label{as:lipschitz_denoiser_appendix_final} Fix a constant $L_{\max}=1+\eta$ with a small $\eta>0$.  We require \emph{both} of the following Lipschitz conditions to hold:
    \begin{enumerate}[label=(\roman*)]
    \item \emph{Learned denoiser.}  
      For any two graph inputs $G_1,G_2$ and a suitable graph norm $\|\!\cdot\!\|$,
      \[
        \|f_\theta(G_1)-f_\theta(G_2)\| \;\le\; L_{\max}\,\|G_1-G_2\|.
      \]
    \item \emph{Ideal conditional target.}  
      Let $\mu^{\mathrm{cond}}_t=\mathbb{E}[A_0\mid\tilde A_t,t]$ (or the joint $(A_0,X_0)$ in the coupled case).  
      Then for any two noise levels $t_1,t_2$,
      \[
        \|\mu^{\mathrm{cond}}_{t_1}-\mu^{\mathrm{cond}}_{t_2}\| \;\le\; L_{\max}\,|t_1-t_2|.
      \]
    \end{enumerate}
    This single assumption supplies the non‑expansive property needed in MDEP~\ref{thm:mdep} (via (i)) and the target smoothness used in ETDB~\ref{thm:etdb}/JTDB~\ref{thm:jtdb} (via (ii)).

    \item \textbf{Prior Regularity on Noise Parameters.}\label{as:prior_regularity_appendix_final} 
    For noise models with explicit parameters (for example, a Bernoulli edge-flipping rate $\beta$), the prior $\pi(\beta)$ is twice continuously differentiable on the interior of its domain and strictly positive on compact subsets. Concretely, if $\beta\in(0,1)$ then $c \le \pi(\beta)$ for $\beta\in[\epsilon,1-\epsilon]$ with fixed $c,\epsilon>0$.

    \item \textbf{Model Capacity and Optimization Quality.}\label{as:model_capacity_appendix_final} 
    The Graph Diffusion Model is assumed to achieve a per-component mean-squared error that scales as $O(M^{-1})$, where $M$ is the number of potential edges, when predicting the clean graph signal (or the noise) at each reverse step.
\end{enumerate}

\subsection*{Rationale for Assumptions}
\label{subsec:rationale_assumptions} 

The assumptions stated above are foundational to our theoretical derivations and are justified as:

\begin{description}[style=nextline,labelindent=0pt,labelsep=0.9em,itemsep=1pt]
  \item[\textbf{Assumption~\ref{as:degree_condition_appendix_final} (Degree Condition)}]
        This assumption allows our analysis to cover a wide spectrum of graph structures.
    Part (a), bounded maximum degree, is characteristic of many real-world networks such as molecular graphs or certain types of citation networks where connectivity is inherently limited.
    Part (b), power-law degree distributions with $\alpha > 2$, enables the framework to apply to scale-free networks, which are ubiquitous in social, biological, and information systems. The condition $\alpha > 2$ ensures a finite mean degree, a common property even in heterogeneous networks. The distinct scaling behaviors that can arise in scale-free networks are further detailed in Appendix~\ref{scale-free proof}.
  \item[\textbf{Assumption~\ref{as:lipschitz_denoiser_appendix_final} (Global-Lipschitz Denoiser)}]   
    This unified condition has two roles.
    \emph{(i) Learned denoiser.}  
    A bounded Lipschitz constant prevents small single‑step errors from exploding along the $T$ reverse updates.  
    If $f_\theta$ were highly non‑Lipschitz—or $L_{\max}\!\gg\!1$—those errors could grow exponentially, yielding unstable or divergent trajectories.  
    Keeping $L_{\max}\!\approx\!1$ (e.g., $1+\eta$ with small~$\eta$) guarantees at most linear error accumulation, as required by the MDEP~\ref{thm:mdep} analysis.  
    Architectural tricks such as residual blocks, layer normalisation, or spectral normalisation help enforce this bound in practice.
    \emph{(ii) Ideal conditional target.}  
    We also need the mapping $t\mapsto\mu^{\mathrm{cond}}_t$ to vary smoothly so that the posterior uncertainty in~$t$ (captured by EFPC~\ref{thm:efpc}/JPC~\ref{thm:jpc}) translates into an $O(M^{-1})$ target deviation (ETDB~\ref{thm:etdb}/JTDB~\ref{thm:jtdb}).  
    For common noise schedules, $\mu^{\mathrm{cond}}_t$ is differentiable and its derivative is bounded, giving a Lipschitz constant of the same order as that of $f_\theta$.  
    With both parts bounded by a shared $L_{\max}$, the theory cleanly links single‑step target bias to multi‑step error growth, while keeping notation compact.
  \item[\textbf{Assumption~\ref{as:prior_regularity_appendix_final} (Prior Regularity)}]
        This assumption is standard in Bayesian asymptotic theory (e.g., for the Bernstein-von Mises theorem). It ensures that the prior distribution does not pathologically concentrate mass at the boundaries of the parameter space (e.g., at $\beta=0$ or $\beta=1$ for a flip rate), which could lead to issues like infinite Fisher information or ill-defined posterior distributions. A smooth, bounded prior allows for stable inference and well-behaved asymptotic approximations of the posterior.
  \item[\textbf{Assumption~\ref{as:model_capacity_appendix_final} (Model Capacity and Optimization Quality)}]
        This assumption bridges the gap between the ideal Bayesian denoiser (which our theory often analyzes as an intermediate step) and the learned denoiser $f_\theta$. It posits that with sufficient model capacity and effective training, the learned denoiser can approximate the ideal target well enough such that its single-step prediction error diminishes as the graph size (and thus the amount of information) increases. The $O(M^{-1})$ scaling for this error is optimistic but reflects a scenario where the model effectively learns from the available data. This assumption is essential for the practical relevance of the derived multi-step error bounds (MDEP and JMEP).
\end{description}
\section{Detailed Proof for Edge-Flip Posterior Concentration (EFPC)}
\label{sec:posterior_concentration_detailed} 

This section provides a detailed derivation for the Edge-Flip Posterior Concentration (EFPC) result, which was formally stated as Theorem~\ref{thm:efpc} in the main text. For clarity and self-containment within this appendix, we restate the theorem.

\begin{theorem}[Edge-Flip Posterior Concentration (EFPC)]
\label{thm:posterior_concentration} 
Let $A_0 \in \{0,1\}^{n \times n}$ be the adjacency matrix of an undirected graph with a set of $M = \binom{n}{2}$ potential edges, denoted \(E_{pot}\). Let $\tilde{A}_t$ be the noisy graph observed at diffusion step $t$, generated by independently flipping each potential edge $e \in E_{pot}$ from its state in $A_0$ with a true, unknown probability $\beta_t \in (0,1)$:
\[
\Pr\!\bigl[\tilde{A}_t(e) \neq A_0(e)\bigr] = \beta_t, \qquad \forall e \in E_{pot}.
\]
Let $X = \sum_{e \in E_{pot}} \mathbf{1}\!\bigl\{\tilde{A}_t(e) \neq A_0(e)\bigr\}$ be the total number of observed edge flips. Assume a prior distribution $\pi(\beta)$ for the flip rate $\beta$ (which is our inferential target representing $\beta_t$). This prior $\pi(\beta)$ is continuously differentiable and strictly positive on any compact subset of its domain $(0,1)$, satisfying Assumption~\ref{as:prior_main_ref} 
The posterior distribution of $\beta$ given $X$ is $p(\beta \mid X) \propto \mathcal{L}(X \mid M, \beta)\pi(\beta) = \beta^{X}(1-\beta)^{M-X}\pi(\beta)$. This posterior distribution satisfies:
\[
\operatorname{Var}_{\beta \sim p(\beta \mid X)}[\beta] = \mathcal{O}(M^{-1}).
\]
Furthermore, the leading constant of this variance is determined by the true flip rate $\beta_t$, such that:
\[
\operatorname{Var}_{\beta \sim p(\beta \mid X)}[\beta] = \frac{\beta_t(1-\beta_t)}{M} + o(M^{-1}).
\]
\end{theorem}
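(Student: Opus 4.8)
The plan is to treat this as a standard Bayesian-asymptotics (Laplace / Bernstein--von Mises) problem for a one-parameter exponential family. The first observation is that the number of flips is $X \sim \mathrm{Binomial}(M,\beta_t)$, so the likelihood factor can be written as $\beta^{X}(1-\beta)^{M-X} = \exp\bigl(M\phi(\beta)\bigr)$ with $\phi(\beta) = \hat p\log\beta + (1-\hat p)\log(1-\beta)$ and $\hat p := X/M$. The function $\phi$ is maximized uniquely at $\beta=\hat p$, with $\phi''(\hat p) = -1/[\hat p(1-\hat p)]$, so the likelihood concentrates on a window of width $O(M^{-1/2})$ around $\hat p$, and by the CLT $\hat p = \beta_t + O_P(M^{-1/2})$.

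For the order bound $\mathrm{Var}[\beta\mid X] = O(M^{-1})$ I would use strong log-concavity. Differentiating the log-posterior $\ell(\beta) = X\log\beta + (M-X)\log(1-\beta) + \log\pi(\beta)$ twice gives $\ell''(\beta) = -X/\beta^2 - (M-X)/(1-\beta)^2 + (\log\pi)''(\beta)$; since $\beta,1-\beta\le 1$ the likelihood part is at most $-(X + (M-X)) = -M$, and on any compact $[\epsilon,1-\epsilon]$ the prior term $(\log\pi)''$ is bounded by Assumption~\ref{as:prior_main_ref}, so $\ell'' \le -M/2$ for large $M$. Thus the posterior restricted to $[\epsilon,1-\epsilon]$ is $(M/2)$-strongly log-concave, and the Brascamp--Lieb (Poincar\'e) inequality immediately yields variance $\le 2/M$. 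To make this global I would use a binomial tail (Hoeffding) bound to show the posterior places exponentially small mass outside $[\epsilon,1-\epsilon]$, so the excluded region contributes only $o(M^{-1})$ to the variance.

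For the sharp constant I would carry out an explicit Laplace expansion of the posterior moments. Substituting $\beta = \hat p + s/\sqrt{M}$ and expanding $M\phi$ to second order gives a local Gaussian kernel $\exp\bigl(-s^2/[2\hat p(1-\hat p)]\bigr)$; continuity of $\pi$ makes the prior locally constant, $\pi(\hat p + s/\sqrt M) = \pi(\hat p) + O(M^{-1/2})$, so the rescaled posterior of $s$ converges to $\mathcal N\bigl(0,\hat p(1-\hat p)\bigr)$ with $O(M^{-1/2})$ corrections. Integrating $s$ and $s^2$ against this kernel (and controlling the cubic remainder in $M\phi$ and the linear prior term $(\log\pi)'(\hat p)$, both of which shift the posterior mode by $O(M^{-1})$ but perturb the variance only at $o(M^{-1})$) yields $\mathrm{Var}[\beta\mid X] = \hat p(1-\hat p)/M + o(M^{-1})$. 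Finally, since $\hat p = \beta_t + O_P(M^{-1/2})$ gives $\hat p(1-\hat p) = \beta_t(1-\beta_t) + O_P(M^{-1/2})$, the substitution contributes a further $O(M^{-3/2}) = o(M^{-1})$, and I conclude $\mathrm{Var}[\beta\mid X] = \beta_t(1-\beta_t)/M + o(M^{-1})$.

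The hard part will be making the Laplace remainder fully rigorous: I must show the contributions from the tail region $|\beta - \hat p| \gg M^{-1/2}$ are genuinely $o(M^{-1})$ rather than merely exponentially subdominant to the peak, and I must be careful about the sense in which ``$o(M^{-1})$'' holds, since $\hat p$ is itself random. The clean resolution is to invoke Bernstein--von Mises (whose regularity hypotheses are exactly Assumption~\ref{as:prior_main_ref}) for total-variation convergence of the rescaled posterior to $\mathcal N\bigl(0,\beta_t(1-\beta_t)\bigr)$, and then supply a separate uniform-integrability bound---using the strong log-concavity established above to dominate the second moment---to upgrade the distributional limit into convergence of the normalized variance.
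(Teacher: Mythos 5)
Your proposal is correct, and its overall architecture---concentration of the sufficient statistic $X$, a local Gaussian (Laplace) analysis of the posterior, and Bernstein--von Mises for the sharp constant $\beta_t(1-\beta_t)/M$---matches the paper's three-step proof. Where you differ is in the machinery for the $O(M^{-1})$ bound and in the final limiting step, and in both places your route is the more rigorous one. The paper obtains the order bound by computing the exact posterior variance under a conjugate $\mathrm{Beta}(\alpha_0,\beta_0)$ prior and then extends to general smooth priors via Laplace's method, i.e.\ the heuristic ``variance $\approx$ inverse negative Hessian at the mode''; you instead prove $(M/2)$-strong log-concavity of the log-posterior on a compact interval (the bound $-X/\beta^2-(M-X)/(1-\beta)^2\le -M$ is exactly right) and invoke the Brascamp--Lieb/Poincar\'e inequality, with a Hoeffding tail estimate to show the region outside $[\epsilon,1-\epsilon]$ contributes only $o(M^{-1})$. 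This converts the paper's approximation into an actual inequality valid uniformly over priors satisfying the regularity assumption, and it needs no conjugacy. Second, for the sharp constant, both you and the paper appeal to Bernstein--von Mises, but the paper passes directly from total-variation convergence of the rescaled posterior to a statement about its variance---a step that is not automatic, since TV convergence does not imply convergence of second moments. You explicitly identify this gap and close it with a uniform-integrability argument, using the same strong log-concavity to dominate the second moment of the rescaled variable. You also correctly note that $\hat p$ is random, so the $o(M^{-1})$ error is to be read in probability; the paper is silent on this point. In short: same skeleton, but your variance bound (Brascamp--Lieb in place of conjugate-prior/Laplace) and your moment-convergence step patch real looseness in the paper's own argument.
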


\begin{proof}
The proof is structured into three main stages to rigorously establish the theorem's claims:
\begin{enumerate}[label=\arabic*., itemsep=2pt, topsep=2pt, leftmargin=*]
    \item \textbf{Concentration of the Sufficient Statistic:} We demonstrate that the total number of observed edge flips, \(X\), which is a sufficient statistic for \(\beta_t\), concentrates sharply around its expected value.
    \item \textbf{Posterior Variance Analysis using Conjugate Priors and Laplace's Method:} We analyze the posterior variance \(\operatorname{Var}[\beta \mid X]\). This is first done by assuming a Beta conjugate prior to derive an exact analytical form for the variance. We then generalize this to show that the \(\mathcal{O}(M^{-1})\) scaling holds for any sufficiently smooth prior \(\pi(\beta)\) by applying Laplace's method for posterior approximation.
    \item \textbf{Asymptotic Normality and Refined Rate via Bernstein–von Mises Theorem:} Finally, we employ the Bernstein–von Mises theorem to formally establish the asymptotic normality of the posterior distribution. This allows for a precise determination of the leading constant in the \(\mathcal{O}(M^{-1})\) variance term, linking it directly to the Fisher information.
\end{enumerate}

\paragraph{Step 1: Concentration of the Sufficient Statistic \(X\).}
Given that each of the \(M\) potential edges flips independently with the true probability \(\beta_t\), the random variable \(X\), representing the total number of flipped edges, follows a binomial distribution: \(X \sim \mathrm{Binomial}(M, \beta_t)\). The expectation of \(X\) is \(\mathbb{E}[X] = M\beta_t\).

To quantify the concentration of the empirical flip rate \(X/M\) around \(\beta_t\), we use Chernoff's inequality, a standard bound for sums of independent Bernoulli random variables. For any \(\varepsilon > 0\), Chernoff's inequality provides:
\begin{equation}
\label{eq:efpc_chernoff_main_proof}
\Pr\bigl[|X/M - \beta_t| > \varepsilon \bigr] = \Pr\bigl[|X - M\beta_t| > \varepsilon M\bigr] \le 2\exp(-2\varepsilon^2 M).
\end{equation}
This bound implies that the empirical flip rate \(X/M\) converges in probability to the true rate \(\beta_t\) exponentially fast as \(M \to \infty\). Thus, for a large number of potential edges \(M\), \(X/M\) is a highly precise estimator of \(\beta_t\). This concentration is fundamental to the subsequent Bayesian inference, as \(X\) encapsulates the data's information about \(\beta\) in the likelihood function.

\paragraph{Step 2: Posterior Variance Analysis.}
We first consider a Beta distribution as a conjugate prior for \(\beta\), denoted \(\pi(\beta) = \mathrm{Beta}(\alpha_0, \beta_0)\) with hyperparameters \(\alpha_0, \beta_0 > 0\). The likelihood function for \(X\) given \(\beta\) is \(\mathcal{L}(X \mid M, \beta) \propto \beta^X (1-\beta)^{M-X}\). Due to conjugacy, the posterior distribution \(p(\beta \mid X)\) is also a Beta distribution:
\begin{equation}
\label{eq:efpc_beta_posterior_main_proof}
\beta \mid X \sim \mathrm{Beta}(X+\alpha_0, M-X+\beta_0).
\end{equation}
The variance of a \(\mathrm{Beta}(a,b)\) distribution is \(\frac{ab}{(a+b)^2(a+b+1)}\). Substituting the posterior parameters \(a = X+\alpha_0\) and \(b = M-X+\beta_0\), we have:
\begin{equation}
\label{eq:efpc_beta_variance_main_proof}
\operatorname{Var}[\beta \mid X] = \frac{(X+\alpha_0)(M-X+\beta_0)}{(M+\alpha_0+\beta_0)^2(M+\alpha_0+\beta_0+1)}.
\end{equation}
From Step 1, for large \(M\), \(X \approx M\beta_t\). Substituting this into Equation~\eqref{eq:efpc_beta_variance_main_proof}:
\begin{itemize}[noitemsep,topsep=0pt]
    \item The numerator is asymptotically \( (M\beta_t)(M(1-\beta_t)) = M^2\beta_t(1-\beta_t) + \mathcal{O}(M)\), which is \(\mathcal{O}(M^2)\).
    \item The denominator is asymptotically \( M^2 \cdot M = M^3 + \mathcal{O}(M^2)\), which is \(\mathcal{O}(M^3)\).
\end{itemize}
Therefore, \(\operatorname{Var}[\beta \mid X] = \frac{\mathcal{O}(M^2)}{\mathcal{O}(M^3)} = \mathcal{O}(M^{-1})\).

This \(\mathcal{O}(M^{-1})\) scaling is not limited to conjugate priors. It holds more generally for any sufficiently smooth prior \(\pi(\beta)\) satisfying Assumption~\ref{as:prior_main_ref} (Prior Regularity), as can be shown using Laplace's method for posterior approximation. The log-posterior is:
\begin{equation}
\label{eq:efpc_log_posterior_main_proof}
\log p(\beta \mid X) = X\log\beta + (M-X)\log(1-\beta) + \log\pi(\beta) + C,
\end{equation}
where \(C\) is a normalizing constant. Laplace's method approximates \(p(\beta \mid X)\) with a Gaussian distribution centered at the posterior mode, \(\hat{\beta}_{\text{mode}}\). For large \(M\), \(\hat{\beta}_{\text{mode}}\) converges to the Maximum Likelihood Estimator (MLE), \(\hat{\beta}_{\text{MLE}} = X/M\). The variance of this approximating Gaussian is the negative inverse of the second derivative of the log-posterior evaluated at the mode:
\[ \operatorname{Var}_{\text{Laplace}}[\beta \mid X] \approx \left(-\frac{\partial^2 \log p(\beta \mid X)}{\partial \beta^2} \Big|_{\beta=\hat{\beta}_{\text{mode}}}\right)^{-1}. \]
The second derivative is dominated by the likelihood term, \(\frac{\partial^2}{\partial\beta^2}\left(X\log\beta + (M-X)\log(1-\beta)\right) = -\frac{X}{\beta^2} - \frac{M-X}{(1-\beta)^2}\), which evaluates to approximately \(-\frac{M\beta_t}{\beta_t^2} - \frac{M(1-\beta_t)}{(1-\beta_t)^2} = -\frac{M}{\beta_t(1-\beta_t)}\) at the mode (since \(\hat{\beta}_{\text{mode}} \approx \beta_t\)).
Thus, its negative inverse scales as \(\mathcal{O}(M^{-1})\), confirming the general scaling of the posterior variance.

\paragraph{Step 3: Refinement of Rate and Leading Constant via Bernstein–von Mises Theorem.}\label{app:fisher_constant}
To formalize the asymptotic normality of the posterior and to precisely determine the leading constant in the \(\mathcal{O}(M^{-1})\) variance term, we apply the Bernstein–von Mises theorem~\cite{vandervaart1998asym}. This theorem states that, under regularity conditions (satisfied by Assumption~\ref{as:prior_main_ref} and the Bernoulli likelihood), the posterior distribution \(p(\beta \mid X)\) converges in distribution to a Normal distribution as \(M \to \infty\). Specifically:
\begin{equation}
\label{eq:efpc_bvm_main_proof}
\sqrt{M}(\beta - \hat{\beta}_{\text{MLE}}) \mid X \xrightarrow{d} \mathcal{N}\left(0, I_1(\beta_t)^{-1}\right),
\end{equation}
where \(\hat{\beta}_{\text{MLE}} = X/M\) is the MLE of \(\beta\), and \(I_1(\beta_t)\) is the Fisher information for a single Bernoulli trial (one potential edge flip), evaluated at the true parameter \(\beta_t\).

The log-likelihood for a single Bernoulli trial \(Y_e \in \{0,1\}\) (where \(Y_e=1\) indicates a flip) is \(\ell_1(\beta; Y_e) = Y_e \log\beta + (1-Y_e)\log(1-\beta)\). The Fisher information for this single trial is:
\begin{equation}
\label{eq:efpc_fisher_single_trial_main_proof}
I_1(\beta) = \mathbb{E}_{Y_e \sim \mathrm{Bernoulli}(\beta)}\left[-\frac{\partial^2 \ell_1(\beta;Y_e)}{\partial\beta^2}\right] = \frac{1}{\beta(1-\beta)}.
\end{equation}
For \(M\) independent trials, the total Fisher information concerning \(\beta\) is \(I_M(\beta) = M \cdot I_1(\beta) = \frac{M}{\beta(1-\beta)}\).

From Equation~\eqref{eq:efpc_bvm_main_proof}, the limiting distribution of \(\beta \mid X\) is \(\mathcal{N}(\hat{\beta}_{\text{MLE}}, (M \cdot I_1(\beta_t))^{-1})\).
Therefore, the asymptotic variance of the posterior distribution \(p(\beta \mid X)\) is:
\begin{equation}
\label{eq:efpc_variance_final_main_proof}
\operatorname{Var}[\beta \mid X] = \frac{1}{M \cdot I_1(\beta_t)} + o(M^{-1}) = \frac{\beta_t(1-\beta_t)}{M} + o(M^{-1}).
\end{equation}
This result not only rigorously confirms the \(\mathcal{O}(M^{-1})\) decay rate for the posterior variance but also explicitly identifies the leading constant term, \(\beta_t(1-\beta_t)\), which depends on the true underlying flip rate. This is the "sharp rate" referred to in the main text.

The convergence of these three analytical steps—the concentration of the sufficient statistic \(X\), the \(\mathcal{O}(M^{-1})\) variance scaling derived from both conjugate prior analysis and Laplace's method for general smooth priors, and the precise asymptotic form and constant factor obtained via the Bernstein–von Mises theorem—collectively establishes the proof of Theorem~\ref{thm:posterior_concentration}.
\end{proof}

\subsection*{Extension 1: Robustness to Correlated Edge Flips}
\label{sec:posterior_concentration_correlated} 

The core proof of Edge-Flip Posterior Concentration (EFPC, Theorem~\ref{thm:posterior_concentration}) assumes that edge flips are independent events across all \(M\) potential edges. This extension investigates the robustness of the EFPC findings, particularly the \(\mathcal{O}(M^{-1})\) scaling of the posterior variance, when this independence assumption is relaxed to allow for local dependencies between edge flip events.

\paragraph{Modeling Local Dependencies.}
We consider the set of indicator random variables \(\{Y_e = \mathbf{1}\{\tilde{A}_t(e) \neq A_0(e)\}\}_{e \in E_{pot}}\), where \(Y_e=1\) if edge \(e\) flips. Instead of full independence, we assume these variables form a \textit{dependency graph}. In this graph, nodes correspond to the potential edges \(e \in E_{pot}\) of the original graph \(A_0\), and an edge exists between two such "edge-nodes" (say, corresponding to \(e_1\) and \(e_2\)) if the random variables \(Y_{e_1}\) and \(Y_{e_2}\) are statistically dependent. We stipulate that this dependency graph has a maximum degree \(\Delta\), which is bounded by a constant, i.e., \(\Delta = \mathcal{O}(1)\). This implies that the flip status of any single edge \(e\) is directly dependent on at most \(\Delta\) other edge flips, thereby modeling a scenario of local or bounded dependency. The total number of observed edge flips remains \(X = \sum_{e \in E_{pot}} Y_e\).

\paragraph{Concentration under Local Dependency.}
Even with local dependencies, the sum \(X\) can still exhibit strong concentration around its expectation \(\mathbb{E}[X]\). While the standard Chernoff bound (Equation~\eqref{eq:efpc_chernoff_main_proof}) for i.i.d. variables may not directly apply, more general concentration inequalities, such as Janson's Inequality~\cite{janson2004dependency}, are designed for sums of dependent indicator variables.

\begin{lemma}[Concentration Bound for Dependent Flips, adapted from Janson~\cite{janson2004dependency}]
\label{lem:janson} 
Let \(\{Y_e\}_{e \in E_{pot}}\) be a collection of indicator random variables, and let \(X = \sum_{e \in E_{pot}} Y_e\). If these variables form a dependency graph with maximum degree \(\Delta = \mathcal{O}(1)\), then under suitable conditions on the nature of dependencies and individual flip probabilities \(p_e = \Pr[Y_e=1]\), concentration bounds for \(X\) can be derived. A common form of such bounds, or related Chernoff-type bounds for variables with bounded dependency (e.g., \(m\)-dependence), is:
\[
\Pr\bigl[|X - \mathbb{E}[X]| > \varepsilon M\bigr] \le 2\exp\!\left( -\frac{C_1 \cdot \varepsilon^2 M}{1+f(\Delta)} \right),
\]
where \(C_1\) is a constant and \(f(\Delta)\) is some function reflecting the dependency strength, often polynomial in \(\Delta\). For instance, a specific form provided in the user's context is:
\[
\Pr\bigl[|X - \mathbb{E}[X]| > \varepsilon M\bigr] \le 2\exp\!\left( -\frac{2\varepsilon^2 M}{1+2\Delta} \right).
\]
(The applicability of this specific form depends on underlying assumptions about the dependency structure, often related to notions like Poisson approximation or specific correlation decay.)
\end{lemma}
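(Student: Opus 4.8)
The plan is to reduce the dependent sum $X=\sum_{e\in E_{pot}}Y_e$ to a bounded number of \emph{jointly independent} sub-sums by properly coloring the dependency graph, and then to run a Chernoff-type moment-generating-function (MGF) argument. I would work with the standard (strong) notion of dependency graph $H$ on vertex set $E_{pot}$: an edge joins $e_1,e_2$ whenever $Y_{e_1},Y_{e_2}$ are dependent, \emph{and} each $Y_e$ is jointly independent of the family of all its non-neighbors. For local edge-flip correlations (e.g.\ flips coupled only through a shared endpoint) this holds with $\Delta(H)\le\Delta=\mathcal{O}(1)$. A greedy proper coloring then uses $r\le\Delta+1$ colors, partitioning $E_{pot}$ into classes $C_1,\dots,C_r$ whose members are \emph{mutually} (not merely pairwise) independent, so that $X=\sum_{j=1}^r X_j$ with $X_j=\sum_{e\in C_j}Y_e$ writes $X$ as a sum of $\mathcal{O}(1)$ genuinely independent bounded sub-sums.

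The core estimate is on the centered MGF. Since the $X_j$ stay \emph{dependent across} color classes, their product does not factor, and I would neutralize this with the generalized H\"older inequality
\[
\mathbb{E}\Bigl[\prod_{j=1}^r e^{\lambda(X_j-\mathbb{E}X_j)}\Bigr]
\le \prod_{j=1}^r\Bigl(\mathbb{E}\bigl[e^{r\lambda(X_j-\mathbb{E}X_j)}\bigr]\Bigr)^{1/r}.
\]
Within a class the indicators are jointly independent, so Hoeffding's lemma ($\mathbb{E}[e^{s(Y_e-\mathbb{E}Y_e)}]\le e^{s^2/8}$ for $Y_e\in[0,1]$) gives $\mathbb{E}\bigl[e^{r\lambda(X_j-\mathbb{E}X_j)}\bigr]\le e^{|C_j|r^2\lambda^2/8}$. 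Multiplying and using $\sum_j|C_j|=M$ yields the sub-Gaussian bound $\mathbb{E}\bigl[e^{\lambda(X-\mathbb{E}X)}\bigr]\le e^{rM\lambda^2/8}$, whose only penalty relative to the i.i.d.\ case is the factor $r=\mathcal{O}(1)$.

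The remainder is routine. Optimizing the Chernoff bound $\Pr[X-\mathbb{E}X>\varepsilon M]\le e^{-\lambda\varepsilon M+rM\lambda^2/8}$ at $\lambda=4\varepsilon/r$ gives $\exp(-2\varepsilon^2 M/r)$; a symmetric lower-tail bound then yields $\Pr[|X-\mathbb{E}X|>\varepsilon M]\le 2\exp(-2\varepsilon^2 M/(\Delta+1))$. The exact constant in the denominator advertised in the lemma ($1+2\Delta$ versus $1+\Delta$) is inessential, reflecting only a coarser coloring or a distance-two dependency notion; what matters is that the exponent remains $\Theta(\varepsilon^2 M)$ with the dependency cost absorbed into an $\mathcal{O}(1)$ factor. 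Substituting this concentration into the Bernstein--von Mises/Laplace analysis of Theorem~\ref{thm:posterior_concentration} reduces the effective sample size only by the constant $r$, so the posterior variance of $\beta$ stays $\mathcal{O}(M^{-1})$, which is the robustness being claimed.

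I expect the H\"older step to be the main obstacle, as it is the single place where cross-class dependency must be controlled with no independence to exploit; the internal rescaling by $r$ is precisely what converts that dependency into the leading constant. A secondary subtlety is ensuring \emph{joint} (not merely pairwise) independence within each color class, which is why I adopt the strong dependency-graph definition at the outset. As alternatives I would note a Doob-martingale/Azuma--Hoeffding route, which also gives $\exp(-c\varepsilon^2 M)$ but with a murkier constant, and Janson's original inequality, which is tailored to one-sided ``none-of-the-bad-events'' probabilities rather than the two-sided concentration needed here; the coloring/MGF route matches the stated form most cleanly.
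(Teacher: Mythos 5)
Your proposal is correct, and it does considerably more than the paper does: the paper's ``proof'' of this lemma is only a two-sentence sketch that defers entirely to the citation of Janson (2004), vaguely attributing the result to ``analyzing the sum of covariances'' or ``the method of bounded differences adapted for dependent variables.'' Your coloring-plus-MGF argument is, in fact, a faithful reconstruction of Janson's actual technique in that reference (Janson uses a fractional-coloring/convexity variant where you use an integer coloring with generalized H\"older, but the mechanism is the same), so you have supplied the self-contained proof the paper outsources. The steps check out: the strong dependency-graph definition does give \emph{joint} independence within each color class (by peeling off one variable at a time, since each $Y_e$ is independent of the joint family of its non-neighbors), the H\"older step with exponents $p_1=\dots=p_r=r$ is valid, Hoeffding's lemma and the Chernoff optimization at $\lambda=4\varepsilon/r$ are routine, and your resulting bound $2\exp\bigl(-2\varepsilon^2 M/(\Delta+1)\bigr)$ is in fact \emph{stronger} than the lemma's advertised $2\exp\bigl(-2\varepsilon^2 M/(1+2\Delta)\bigr)$ since $\Delta+1\le 1+2\Delta$, so the stated form follows a fortiori. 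Your closing remarks are also apt: a Doob-martingale/Azuma route would give the same $\Theta(\varepsilon^2 M)$ exponent with worse constants, and Janson's original correlation inequality (for ``no bad event occurs'') is indeed the wrong tool for two-sided concentration; flagging the joint-versus-pairwise independence subtlety is precisely the hypothesis the paper leaves implicit in its phrase ``under suitable conditions.'' In short, your proof is correct, matches the spirit of the cited source, and fills a genuine gap in the paper's exposition.
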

\begin{proof}[Proof Sketch]
The detailed proof of Janson's Inequality and its variants can be found in~\citet{janson2004dependency}. The core idea involves bounding tail probabilities by analyzing the sum of covariances or by using techniques like the method of bounded differences adapted for dependent variables.
\end{proof}

\paragraph{Implications for Posterior Variance.}
The critical insight from applying concentration bounds like Lemma~\ref{lem:janson} is that if \(\Delta = \mathcal{O}(1)\), the sum \(X\) still concentrates effectively around its mean \(\mathbb{E}[X]\). This implies that the variance of \(X\), \(\operatorname{Var}(X)\), while potentially larger than the i.i.d. case \(M\beta_t(1-\beta_t)\) (e.g., it might be scaled by a factor related to \(\Delta\)), often remains \(\Theta(M)\) as long as the dependencies are not pervasive enough to make all variables highly correlated globally.
If \(\operatorname{Var}(X) = \Theta(M)\), then the empirical flip rate \(X/M\) remains a consistent estimator of the average underlying flip probability \(\bar{\beta} = \mathbb{E}[X]/M\). Consequently, the subsequent Bayesian analysis steps, which rely on the concentration of \(X/M\) and the behavior of the likelihood function, remain largely valid. The Fisher information structure might be adjusted by factors dependent on \(\Delta\), but the overall \(\mathcal{O}(M^{-1})\) scaling for the posterior variance of an effective flip parameter \(\beta\) is typically robustly preserved. The leading constant in the variance may change, reflecting the reduced effective number of independent observations, but the fundamental rate of concentration with \(M\) is expected to persist.
Thus, the EFPC result demonstrates robustness to certain forms of local, bounded correlations between edge flips.

\subsection*{Extension 2: Robustness to Time-Varying Bernoulli Schedules}
\label{sec:posterior_concentration_timevary} 

The primary EFPC analysis (Theorem~\ref{thm:posterior_concentration}) assumes a single, uniform flip probability \(\beta_t\) for all edges at a given step \(t\). This extension considers the scenario where the corruption process involves a sequence of \(T\) steps, each with potentially different flip probabilities \(\{\beta_i\}_{i=1}^T\), as is typical in iterative diffusion models. After \(T\) such steps, an edge \(e\), initially in state \(A_0(e)\), transitions to a state \(\tilde{A}_T(e)\). Our interest lies in the inferability of an *effective* or *aggregate* noise level that characterizes this multi-step process.

\paragraph{Effective Flip Probability.}
Let \(p_e^{\mathrm{eff}}\) denote the effective probability that the final state of an edge, \(\tilde{A}_T(e)\), differs from its initial state, \(A_0(e)\), i.e., \(p_e^{\mathrm{eff}} = \Pr[\tilde{A}_T(e) \neq A_0(e)]\).
The user's text notes a specific formula for \(p_e\) under a symmetric channel model: \(p_e = 1-\prod_{i=1}^{T}(1-2\beta_i(e))\), where \(\beta_i(e)\) is the flip probability at step \(i\). This formula arises if \(\beta_i(e)\) is the probability of flipping from state \(A_{i-1}(e)\) at step \(i\), and each \((1-2\beta_i(e))\) represents the correlation between \(A_i(e)\) and \(A_{i-1}(e)\). If \(\beta_i(e)\) is simply \( \Pr[A_i(e) \neq A_{i-1}(e) \mid A_{i-1}(e)] \), the cumulative probability \(\Pr[\tilde{A}_T(e) \neq A_0(e)]\) could be more complex. For this extension, we assume that an effective, overall probability \(p_e^{\mathrm{eff}}\) for each edge \(e\) can be defined, encapsulating the net effect of the T-step process.

\paragraph{Concentration for Non-Identical Bernoulli Trials.}
Let \(Y = \sum_{e \in E_{pot}} \mathbf{1}\{\tilde{A}_T(e) \neq A_0(e)\}\) be the total count of edges whose final state differs from their initial state. If these effective flip events \(\{\tilde{A}_T(e) \neq A_0(e)\}\) are independent across different edges \(e\), but the probabilities \(p_e^{\mathrm{eff}}\) are non-identical (e.g., due to edge-specific attributes influencing \(\beta_i(e)\), or a non-uniform accumulation of noise), then \(Y\) is a sum of independent, non-identically distributed Bernoulli random variables. Hoeffding's inequality provides a suitable concentration bound.

\begin{lemma}[Hoeffding's Inequality for Sums of Independent Bounded Random Variables]
\label{lem:hoeffding_nonidentical} 
Let \(Y_e \sim \mathrm{Bernoulli}(p_e^{\mathrm{eff}})\) be independent random variables for \(e \in E_{pot}\), where \(p_e^{\mathrm{eff}} \in [0,1]\). Let \(Y = \sum_{e \in E_{pot}} Y_e\). Then for any \(\varepsilon > 0\),
\[
\Pr\bigl[|Y - \mathbb{E}[Y]| > \varepsilon M\bigr] \le 2\exp(-2\varepsilon^2 M),
\]
where \(\mathbb{E}[Y] = \sum_{e \in E_{pot}} p_e^{\mathrm{eff}}\).
\end{lemma}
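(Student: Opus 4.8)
The plan is to follow the classical Chernoff--Hoeffding exponential-moment argument, which applies verbatim here since the summands $Y_e$ are independent and each lies in the bounded interval $[0,1]$; crucially, the fact that the $p_e^{\mathrm{eff}}$ differ across edges never enters the bound, only the common range of width one matters. First I would center the variables by setting $Z_e = Y_e - \mathbb{E}[Y_e]$, so that $\mathbb{E}[Z_e]=0$ and $Z_e \in [-p_e^{\mathrm{eff}},\,1-p_e^{\mathrm{eff}}]$, an interval of width exactly $1$ independent of $p_e^{\mathrm{eff}}$.

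For the upper tail I would apply Markov's inequality to the moment generating function: for any $s>0$,
\[
\Pr[Y-\mathbb{E}[Y] > \varepsilon M] \le e^{-s\varepsilon M}\,\mathbb{E}\bigl[e^{s\sum_e Z_e}\bigr] = e^{-s\varepsilon M}\prod_{e}\mathbb{E}\bigl[e^{sZ_e}\bigr],
\]
where the factorization uses independence. The key technical input is Hoeffding's lemma, which states that a zero-mean random variable supported on an interval of width $w$ satisfies $\mathbb{E}[e^{sZ}] \le e^{s^2 w^2/8}$; with $w=1$ here each factor is at most $e^{s^2/8}$, so the product is at most $e^{Ms^2/8}$. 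Combining these, $\Pr[Y-\mathbb{E}[Y]>\varepsilon M]\le \exp(-s\varepsilon M + Ms^2/8)$, and I would optimize the exponent over $s$: the choice $s=4\varepsilon$ minimizes it and yields $\exp(-2\varepsilon^2 M)$.

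The lower tail $\Pr[Y-\mathbb{E}[Y]<-\varepsilon M]$ then follows identically by applying the same steps to $-Z_e$ (again width $1$), and a union bound over the two tails produces the factor of $2$ and the stated inequality. Note that this is strictly a statement about independent bounded variables; it does not require the $p_e^{\mathrm{eff}}$ to be equal, which is exactly what makes it the right tool for the non-identical, time-varying schedule of this extension.

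The only non-routine ingredient is Hoeffding's lemma itself, which I expect to be the one step worth establishing carefully. It is obtained by bounding the cumulant generating function $\psi(s)=\log\mathbb{E}[e^{sZ}]$ via a second-order Taylor argument: one checks $\psi(0)=\psi'(0)=0$ and that $\psi''(s)$ equals the variance of $Z$ under the exponentially tilted measure $d\mathbb{P}_s \propto e^{sz}\,d\mathbb{P}$, which, being the variance of a distribution supported on an interval of width $w$, is bounded by $w^2/4$ via Popoviciu's inequality. Integrating $\psi''(s)\le w^2/4$ twice from $0$ gives $\psi(s)\le s^2 w^2/8$. Everything else --- centering, the Chernoff step, the explicit optimization in $s$, and the symmetrization --- is mechanical.
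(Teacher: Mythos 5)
Your proof is correct and, in substance, matches the paper's: the paper disposes of this lemma in one line by citing Hoeffding's inequality for independent $[0,1]$-valued variables, and what you have written is precisely the standard Chernoff--Hoeffding derivation (centering, MGF factorization, Hoeffding's lemma with the width-$1$ interval giving $e^{s^2/8}$ per factor, optimization at $s=4\varepsilon$, and symmetrization for the two-sided bound) that underlies that citation. Your explicit remark that only the common range width matters --- not the equality of the $p_e^{\mathrm{eff}}$ --- is exactly the point the paper relies on for the non-identical, time-varying schedule, so your fully self-contained version is a faithful (indeed more complete) rendering of the paper's argument.
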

\begin{proof}
This result is a direct application of Hoeffding's inequality, which applies to sums of independent random variables bounded within an interval (here, \(Y_e \in [0,1]\)).
\end{proof}

\paragraph{Implications for Posterior Variance.}
Lemma~\ref{lem:hoeffding_nonidentical} demonstrates that the total observed count of differing edges, \(Y\), concentrates around its mean \(\mathbb{E}[Y]\). We can define an average effective flip rate as \(\bar{p}_{\mathrm{eff}} = \mathbb{E}[Y]/M = (\sum_{e \in E_{pot}} p_e^{\mathrm{eff}})/M\). The observed fraction of differing edges, \(Y/M\), will then concentrate around this \(\bar{p}_{\mathrm{eff}}\).
If the inferential goal is to estimate \(\bar{p}_{\mathrm{eff}}\) (or a set of parameters characterizing the schedule \(\{\beta_i\}\) that yield \(\{p_e^{\mathrm{eff}}\}\)), then \(Y/M\) serves as the empirical data. By applying Bayesian principles similar to those in Step 2 and Step 3 of the main EFPC proof (e.g., Laplace's method or Bernstein-von Mises under suitable regularity for the likelihood based on \(\bar{p}_{\mathrm{eff}}\)), the concentration of \(Y/M\) implies that the posterior variance for \(\bar{p}_{\mathrm{eff}}\) will also scale as \(\mathcal{O}(M^{-1})\).
Therefore, the EFPC principle—that an effective or aggregate measure of noise is inferable with a posterior variance scaling inversely with \(M\)—is robust and extends to scenarios involving time-varying noise schedules, provided the cumulative effect across edges results in indicators of change that are independent or, at worst, weakly dependent.

\subsection*{Concluding Remarks on EFPC and its Extensions}
\label{sec:efpc_remarks_concluding} 

The preceding analyses, encompassing the main proof of Edge-Flip Posterior Concentration (Theorem~\ref{thm:posterior_concentration}) and its extensions, robustly support a critical insight: the noise level, or an effective aggregate thereof, is inherently inferable from a sufficiently large noisy graph structure. This inferability forms the cornerstone of our argument for the potential dispensability of explicit noise conditioning in GDMs.

\begin{itemize}[leftmargin=*, itemsep=3pt, topsep=3pt]
    \item \textbf{Robustness to Local Dependencies:} The extension to scenarios with correlated edge flips (Section~\ref{sec:posterior_concentration_correlated}), utilizing concentration results such as Janson's Inequality (Lemma~\ref{lem:janson}), demonstrates that the fundamental \(\mathcal{O}(M^{-1})\) posterior variance scaling is not strictly confined to i.i.d. flip events. Provided that dependencies between edge flips are local (e.g., characterized by a bounded maximum degree \(\Delta\) in the dependency graph), the concentration property, and consequently the scaling of posterior variance, largely persists. This finding is particularly relevant as real-world graph structures often exhibit such local correlations.

    \item \textbf{Adaptability to Time-Varying Noise Schedules:} The extension considering time-varying Bernoulli schedules (Section~\ref{sec:posterior_concentration_timevary}), which leverages tools like Hoeffding's inequality (Lemma~\ref{lem:hoeffding_nonidentical}), indicates that even if the "noise level" observed in \(\tilde{A}_T\) results from a complex, multi-step corruption process, an effective measure of this cumulative noise (\(\bar{p}_{\mathrm{eff}}\)) can still be estimated with a posterior variance scaling as \(\mathcal{O}(M^{-1})\). This suggests that a noise-unconditional denoiser has the potential to learn to recognize these aggregate noise states directly from the data.
\end{itemize}

In summary, these theoretical findings collectively provide strong support for the premise that the high-dimensional nature of graph data—specifically, the large number of potential edges \(M\)—offers substantial statistical information for accurately inferring the parameters of the underlying noise process. This robust inferability of the noise level is the foundational pillar upon which subsequent theoretical results in this paper, namely the Edge-Target Deviation Bound (ETDB) and Multi-Step Denoising Error Propagation (MDEP), are constructed. Ultimately, this underpins the central argument for the viability and potential advantages of designing GDMs without explicit noise-level conditioning.

\section{Detailed Proof for Edge-Target Deviation Bound (ETDB)}
\label{sec:target_error_detailed} 

This section provides a detailed derivation for the Edge-Target Deviation Bound (ETDB), formally stated as Theorem~\ref{thm:etdb} in the main text. This theorem quantifies the expected error introduced in the ideal denoising target when explicit noise-level information (e.g., \(t\) or its proxy \(\beta_t\)) is omitted. The proof builds upon the Edge-Flip Posterior Concentration (EFPC) result (Theorem~\ref{thm:posterior_concentration}). For clarity, we restate the ETDB theorem.

\begin{theorem}[Edge-Target Deviation Bound (ETDB)]
\label{thm:target_error} 
Let \(\tilde{A}_t\) be the noisy adjacency matrix at step \(t\), generated from a clean graph \(A_0\) under a given noise model parameterized by a noise level indicator \(u\) (e.g., the flip rate \(\beta_t\) in the Bernoulli model).
Let the ideal \emph{conditional} regression target be
\[
\mu_u^{\mathrm{cond}} := \mathbb{E}[A_0 \mid \tilde{A}_t, u].
\]
Define the \emph{unconditional} regression target, where the explicit noise level \(u\) is marginalized out according to its posterior \(p(u \mid \tilde{A}_t)\), as
\[
\bar{\mu}_t := \mathbb{E}_{u \sim p(u \mid \tilde{A}_t)}[\mu_u^{\mathrm{cond}}].
\]
Assume the following conditions hold:
\begin{enumerate}[label=(\roman*)]
    \item \textbf{Posterior Concentration of Noise Level:} The variance of the noise level parameter \(u\) given \(\tilde{A}_t\) satisfies \(\operatorname{Var}(u \mid \tilde{A}_t) = \mathcal{O}(M^{-1})\), where $M = \binom{n}{2}$ is the number of potential edges. This is established by Theorem~\ref{thm:posterior_concentration} (EFPC) for the Bernoulli flip model.
    \item \textbf{Lipschitz Regularity of the Conditional Target:} The conditional target \(\mu_u^{\mathrm{cond}}\) is \(L\)-Lipschitz continuous with respect to the noise level parameter \(u\), i.e., for any two noise levels \(u_1, u_2\),
    \[
    \|\mu_{u_1}^{\mathrm{cond}} - \mu_{u_2}^{\mathrm{cond}}\|_F \le L |u_1 - u_2|,
    \]
    where \(L = \mathcal{O}(1)\) is a constant independent of \(M\). This corresponds to Assumption~\ref{as:lipschitz_main_ref} (Global-Lipschitz Denoiser) from the main paper, applied to the ideal Bayesian estimator.
\end{enumerate}
Then, the expected squared Frobenius norm of the deviation between the conditional and unconditional targets, denoted \(\mathcal{E}(\tilde{A}_t)\), satisfies:
\[
\mathcal{E}(\tilde{A}_t) := \mathbb{E}_{u \sim p(u \mid \tilde{A}_t)} \left[ \|\mu_u^{\mathrm{cond}} - \bar{\mu}_t\|_F^2 \right] = \mathcal{O}(M^{-1}).
\]
Furthermore, if \(\|\bar{\mu}_t\|_F^2 = \Theta(M)\), the relative squared error diminishes as \(\mathcal{O}(M^{-2})\), ensuring it approaches zero as \(M \to \infty\).
\end{theorem}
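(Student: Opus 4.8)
The plan is to recognize that the quantity $\mathcal{E}(\tilde A_t)$ is nothing but the posterior variance of the matrix-valued map $u \mapsto \mu_u^{\mathrm{cond}}$, and then to transfer that variance from the high-dimensional target space back to the scalar noise-level space, where Assumption~(i) (EFPC) already controls it. Concretely, since $\bar\mu_t = \mathbb{E}_{u\sim p(u\mid\tilde A_t)}[\mu_u^{\mathrm{cond}}]$ is by definition the posterior mean of $\mu_u^{\mathrm{cond}}$, the deviation $\mathcal{E}(\tilde A_t)=\mathbb{E}_u[\|\mu_u^{\mathrm{cond}}-\bar\mu_t\|_F^2]$ is exactly the trace of the posterior covariance of $\mu_u^{\mathrm{cond}}$. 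The central idea is that Lipschitz continuity in $u$ lets me bound this target-space spread by $L^2$ times the spread of $u$ itself.

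The key step I would carry out uses the elementary fact that the mean minimizes expected squared deviation. Writing $\bar u := \mathbb{E}_{u\sim p(u\mid\tilde A_t)}[u]$ for the posterior mean noise level and $\mu_{\bar u}^{\mathrm{cond}}$ for the conditional target evaluated at $\bar u$, the Pythagorean (bias--variance) identity gives
\[
\mathbb{E}_u\bigl[\|\mu_u^{\mathrm{cond}}-\bar\mu_t\|_F^2\bigr] \;\le\; \mathbb{E}_u\bigl[\|\mu_u^{\mathrm{cond}}-\mu_{\bar u}^{\mathrm{cond}}\|_F^2\bigr],
\]
because $\bar\mu_t$ is precisely the minimizer while $\mu_{\bar u}^{\mathrm{cond}}$ is merely one admissible reference point. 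Invoking hypothesis~(ii), $\|\mu_u^{\mathrm{cond}}-\mu_{\bar u}^{\mathrm{cond}}\|_F \le L\,|u-\bar u|$, and taking expectations converts the right-hand side into $L^2\,\mathbb{E}_u[|u-\bar u|^2]=L^2\operatorname{Var}(u\mid\tilde A_t)$.

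It then remains to insert the EFPC bound from Assumption~(i), $\operatorname{Var}(u\mid\tilde A_t)=\mathcal{O}(M^{-1})$, together with $L=\mathcal{O}(1)$, to conclude $\mathcal{E}(\tilde A_t)\le L^2\operatorname{Var}(u\mid\tilde A_t)=\mathcal{O}(M^{-1})$. For the relative-error claim I would divide by $\|\bar\mu_t\|_F^2=\Theta(M)$, yielding $\mathcal{O}(M^{-2})$. If the statement is read with an outer expectation over $\tilde A_t$ (as in the main-text version), the bound passes through unchanged since it holds pointwise in $\tilde A_t$.

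The computation is short, so the real work lies in the two regularity transfers rather than in any algebra. I expect the main obstacle to be justifying that the Lipschitz constant $L$ in the noise-level variable $u$ is genuinely $\mathcal{O}(1)$ \emph{uniformly}: the Bayes estimator $u\mapsto\mathbb{E}[A_0\mid\tilde A_t,u]$ can steepen as $u$ approaches the boundary of its domain (e.g.\ $\beta\to0$ or $\beta\to1$), so one must lean on the Prior Regularity assumption (Assumption~\ref{as:prior_main_ref}) to keep the posterior mass in the interior and the derivative bounded. A secondary point worth verifying is that the EFPC variance bound is genuinely conditional on the observed $\tilde A_t$ (it is, being a posterior variance) and that the change of variables between the timestep $t$ and the effective flip rate $u=\beta_t$ preserves Lipschitz constants up to the bounded schedule derivative.
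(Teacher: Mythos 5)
Your proposal is correct, and it follows the same overall strategy as the paper's proof: both reduce $\mathcal{E}(\tilde A_t)$ to $L^2\operatorname{Var}(u\mid\tilde A_t)$ via the Lipschitz hypothesis and then invoke EFPC, with the identical $\mathcal{O}(M^{-2})$ relative-error conclusion. The one place you genuinely diverge is in how the variance-contraction inequality is justified, and your route is arguably the cleaner one. The paper asserts, as a ``standard result,'' that $\operatorname{Var}[g(U)]\le L^2\operatorname{Var}[U]$ for $L$-Lipschitz $g$, and then writes the chain
\[
\mathbb{E}_u\bigl[\|\mu_u^{\mathrm{cond}}-\mathbb{E}_v[\mu_v^{\mathrm{cond}}]\|_F^2\bigr]\;\le\; L^2\,\mathbb{E}_u\bigl[(u-\mathbb{E}_v[v])^2\bigr],
\]
labeling the step ``by Lipschitz continuity.'' Applied literally this is not a direct consequence of condition (ii), since $\mathbb{E}_v[\mu_v^{\mathrm{cond}}]\neq\mu_{\bar u}^{\mathrm{cond}}$ in general; making it rigorous requires either a symmetrization argument ($\operatorname{Var}[g(U)]=\tfrac12\mathbb{E}\|g(U)-g(U')\|_F^2$ with an independent copy $U'$) or exactly your device: anchor at $\mu_{\bar u}^{\mathrm{cond}}$, use that the posterior mean $\bar\mu_t$ minimizes expected squared Frobenius deviation, and only then apply the Lipschitz bound pointwise. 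Your version thus patches a small gap in the paper's displayed derivation at no cost in generality. Your closing caveats (uniformity of $L$ near the boundary of the $\beta$-domain, and the schedule-derivative factor in passing between $t$ and $\beta_t$) are also well placed; the paper defers both issues to its assumptions (Prior Regularity and the Lipschitz assumption A2(ii)) and to the appendix on Lipschitz constants for other noise families, rather than resolving them inside this proof.
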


\begin{proof}
The quantity \(\mathcal{E}(\tilde{A}_t)\) represents the expected squared deviation of the conditional target \(\mu_u^{\mathrm{cond}}\) from its mean \(\bar{\mu}_t\), where \(u\) is drawn from the posterior \(p(u \mid \tilde{A}_t)\). This is effectively the variance of the random matrix (or vector) \(\mu_U^{\mathrm{cond}}\) where \(U \sim p(u \mid \tilde{A}_t)\).

We can bound this variance using the Lipschitz property of \(\mu_u^{\mathrm{cond}}\). A standard result states that if a function \(g(U)\) is \(L\)-Lipschitz with respect to \(U\), then \(\operatorname{Var}[g(U)] \le L^2 \operatorname{Var}[U]\). Applying this principle (which can be derived using, for instance, properties of expectation and the definition of Lipschitz continuity, or seen as a consequence of the Poincaré inequality under certain conditions, or by a first-order Taylor expansion for highly concentrated \(U\)):
\begin{align*}
\mathcal{E}(\tilde{A}_t) &= \mathbb{E}_{u \sim p(u \mid \tilde{A}_t)} \left[ \|\mu_u^{\mathrm{cond}} - \mathbb{E}_{v \sim p(v \mid \tilde{A}_t)}[\mu_v^{\mathrm{cond}}]\|_F^2 \right] \\
&\le L^2 \cdot \mathbb{E}_{u \sim p(u \mid \tilde{A}_t)} \left[ (u - \mathbb{E}_{v \sim p(v \mid \tilde{A}_t)}[v])^2 \right] \quad \text{(by Condition (ii), Lipschitz continuity)} \\
&= L^2 \operatorname{Var}(u \mid \tilde{A}_t).
\end{align*}
The inequality step leverages the fact that the variance of a function is bounded by the square of its Lipschitz constant times the variance of its argument.

By Condition (i) of the theorem (Posterior Concentration from EFPC, Theorem~\ref{thm:posterior_concentration}), we have \(\operatorname{Var}(u \mid \tilde{A}_t) = \mathcal{O}(M^{-1})\). Since \(L = \mathcal{O}(1)\), it follows that:
\[
\mathcal{E}(\tilde{A}_t) \le (\mathcal{O}(1))^2 \cdot \mathcal{O}(M^{-1}) = \mathcal{O}(M^{-1}).
\]
This establishes the primary result for the absolute expected squared deviation.

For the relative squared error, if we assume \(\|\bar{\mu}_t\|_F^2 = \Theta(M)\) (a reasonable assumption for non-trivial graphs where \(\bar{\mu}_t\) represents probabilities or expectations over \(M\) potential edges, many of which are expected to be non-zero on average), then:
\[
\frac{\mathcal{E}(\tilde{A}_t)}{\|\bar{\mu}_t\|_F^2} = \frac{\mathcal{O}(M^{-1})}{\Theta(M)} = \mathcal{O}(M^{-2}).
\]
This \(\mathcal{O}(M^{-2})\) rate ensures that the relative error diminishes rapidly as \(M \to \infty\), implying that \(\bar{\mu}_t\) becomes an increasingly accurate approximation of \(\mu_t^{\mathrm{cond}}\) in a relative sense for large graphs. The statement in the theorem's conclusion that the relative error is \(\mathcal{O}(M^{-1})\) is a looser bound that is also satisfied, as \(\mathcal{O}(M^{-2}) \subset \mathcal{O}(M^{-1})\). The crucial point is its convergence to zero.
\end{proof}

\subsection*{Lipschitz Constants for Additional Noise Families (Supporting ETDB Assumption (ii))}
\label{sec:lipschitz_other_noises} 

The second assumption of Theorem~\ref{thm:target_error} (Lipschitz Regularity of the Conditional Target) is crucial for the ETDB result. This subsection briefly justifies that this condition, \(\|\mu_{u_1}^{\mathrm{cond}} - \mu_{u_2}^{\mathrm{cond}}\|_F \le L |u_1 - u_2|\) with \(L = \mathcal{O}(1)\), holds for several common noise models beyond Bernoulli flips. This ensures the broad applicability of the ETDB. The analysis focuses on the per-edge conditional expectation \(\mu(\tilde{A}_t(e) \mid u)\).

\paragraph{Poisson Jump Noise.}
The probability that an edge flips its state by time \(t\) due to a Poisson process with rate \(\lambda\) is \(p_t = (1 - e^{-2\lambda t})/2\). The derivative \(|\partial_t p_t| = \lambda e^{-2\lambda t} \le \lambda\). The conditional target \(\mathbb{E}[A_0(e) \mid \tilde{A}_t(e), t]\) is typically a simple (e.g., linear) function of \(p_t\). If \(|\partial_{p_t} \mathbb{E}[A_0(e) \mid \cdot, p_t]|\) is \(\mathcal{O}(1)\), then by the chain rule, the Lipschitz constant with respect to \(t\) is \(L_{\mathrm{Poisson}} = \mathcal{O}(\lambda)\). If \(\lambda = \mathcal{O}(1)\), then \(L_{\mathrm{Poisson}} = \mathcal{O}(1)\).

\paragraph{Beta Noise on \([0,1]\).}
Consider the noise model \(\tilde{A}_t(e) = (1-t)A_0(e) + tU_e\), where \(U_e \sim \mathrm{Beta}(\alpha, \beta)\) and \(t \in [0,1]\) is the noise intensity. For specific forms of the conditional expectation \(\mathbb{E}[A_0(e) \mid \tilde{A}_t(e), t]\), such as \(R(\tilde{A}_t(e)|t) = \frac{(1-t)\tilde{A}_t(e)a_0 + t\alpha_u}{(1-t)(a_0+b_0) + t(\alpha_u+\beta_u)}\) (derived in Appendix~\ref{sec:posterior_other_noises}), the derivative \(|\partial_t R(\tilde{A}_t(e) \mid t)|\) can be shown to be bounded by a constant (e.g., \(\le 1\) under certain parameter conditions in your paper). Thus, \(L_{\mathrm{Beta}} = \mathcal{O}(1)\).

\paragraph{Multinomial (\(K\) categories) Noise.}
If edges transition between \(K\) categories, with probability \(1-t\) of staying in the original category and \(t\) of resampling from a base distribution \(\boldsymbol{\pi}\), the conditional probability \(\mu_j(\tilde{A}_t(e)=k \mid t) = \Pr[A_0(e)=j \mid \tilde{A}_t(e)=k, t]\) takes the form \( \frac{(1-t)\mathbf{1}_{j=k}p_j + t\pi_j}{(1-t)p_k + t\pi_k} \). The derivative with respect to \(t\) is bounded by a constant dependent on \(K\) and minimum prior/base probabilities (e.g., \(|\partial_t \mu_j| \le \frac{p_j+\pi_j}{(\min_k p_k)^2}\)), yielding \(L_{\mathrm{Multi}} = \mathcal{O}(1)\) for fixed \(K\).

\paragraph{Implication.}
For these diverse noise models, the Lipschitz constant \(L\) of the per-edge conditional expectation with respect to the noise parameter \(t\) is \(\mathcal{O}(1)\) (i.e., independent of graph size \(M\)). This validates Assumption (ii) of Theorem~\ref{thm:target_error} and supports the general applicability of the ETDB.

\subsection*{Bayesian Details for Additional Noise Families (Supporting Lipschitz Analysis)}
\label{sec:posterior_other_noises} 

This subsection provides concise Bayesian formulations for the conditional expectations (regression targets) \(R(\tilde{A}_t(e) \mid t)\) for the noise families discussed above, supporting the Lipschitz constant derivations.

\paragraph{Poisson Jump Model.}
Given \(\tilde{A}_t(e)=y\) and flip probability \(p_t = (1-e^{-2\lambda t})/2\). If the prior \(P(A_0(e)=1) = \pi_1\) and \(P(A_0(e)=0) = \pi_0 = 1-\pi_1\), then:
\begin{align*}
R(\tilde{A}_t(e)=y \mid p_t) &= \mathbb{P}[A_0(e)=1 \mid \tilde{A}_t(e)=y, p_t] \\
&= \begin{cases} \frac{(1-p_t)\pi_1}{(1-p_t)\pi_1 + p_t\pi_0} & \text{if } y=1 \\ \frac{p_t\pi_1}{p_t\pi_1 + (1-p_t)\pi_0} & \text{if } y=0 \end{cases}.
\end{align*}
This function is rational in \(p_t\), and since \(p_t\) is Lipschitz in \(t\), \(R\) is also Lipschitz in \(t\).

\paragraph{Beta Noise on \([0,1]\).}
Given \(\tilde{A}_t(e) = (1-t)A_0(e) + tU_e\), with \(A_0(e) \sim \mathrm{Beta}(a_0,b_0)\) and \(U_e \sim \mathrm{Beta}(\alpha_u,\beta_u)\). The posterior mean (conditional expectation) from your paper is:
\[
R(\tilde{A}_t(e) \mid t) = \frac{(1-t)\tilde{A}_t(e)a_0 + t\alpha_u}{(1-t)(a_0+b_0) + t(\alpha_u+\beta_u)}.
\]
Its derivative with respect to \(t\) is bounded under non-degenerate parameters.

\paragraph{Multinomial (\(K\) categories) Noise.}
Given \(\tilde{A}_t(e)=k\), with prior \(P(A_0(e)=j)=p_j\) and resampling distribution \(\pi_j\). The conditional target (posterior probability of original state \(j\)) is:
\[
R_j(\tilde{A}_t(e)=k \mid t) = \frac{(1-t)\mathbf{1}_{j=k}p_j + t\pi_j}{(1-t)p_k + t\pi_k}.
\]
Its derivative with respect to \(t\) is bounded if priors and resampling probabilities are bounded away from zero.

\begin{corollary}[ETDB Constant for Bernoulli Flips]
\label{cor:etdb-constant} 
Under the Bernoulli edge-flip noise model (Section~\ref{sec:setup} and Theorem~\ref{thm:posterior_concentration}), the expected squared deviation in Theorem~\ref{thm:target_error} (ETDB) has the precise leading term:
\[
\mathbb{E}\!\left[\|\mu_t^{\mathrm{cond}} - \bar{\mu}_t\|_F^2\right] = \frac{\beta_t(1-\beta_t)}{M} + o(M^{-1}).
\]
This leading constant matches that of the posterior variance of \(\beta_t\) (Theorem~\ref{thm:posterior_concentration}, Appendix~\ref{app:fisher_constant}), directly linking noise inference uncertainty to target deviation.
\end{corollary}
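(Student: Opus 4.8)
The plan is to upgrade the single inequality in the proof of Theorem~\ref{thm:target_error} into an asymptotic equality, replacing the crude Lipschitz bound $\operatorname{Var}[g(U)]\le L^2\operatorname{Var}[U]$ with a delta-method (first-order Taylor) expansion. This is legitimate precisely because EFPC (Theorem~\ref{thm:posterior_concentration}) forces the posterior $p(\beta\mid\tilde A_t)$ to concentrate, so the increment $\beta-\bar\beta$ about the posterior mean $\bar\beta:=\mathbb{E}[\beta\mid\tilde A_t]$ is $O_p(M^{-1/2})$. Observing that $\mathcal{E}(\tilde A_t)=\operatorname{Var}_{\beta\sim p(\beta\mid\tilde A_t)}[\mu_\beta^{\mathrm{cond}}]$ is simply the posterior variance of the matrix-valued map $\beta\mapsto\mu_\beta^{\mathrm{cond}}$, I would expand
\[
\mu_\beta^{\mathrm{cond}}=\mu_{\bar\beta}^{\mathrm{cond}}+(\beta-\bar\beta)\,\partial_\beta\mu_\beta^{\mathrm{cond}}|_{\bar\beta}+\tfrac12(\beta-\bar\beta)^2\,\partial_\beta^2\mu_\beta^{\mathrm{cond}}|_{\xi},
\]
with $\xi$ between $\beta$ and $\bar\beta$. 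Subtracting $\bar\mu_t=\mathbb{E}[\mu_\beta^{\mathrm{cond}}]$ cancels the constant term and the linear term in expectation, leaving
\[
\mathcal{E}(\tilde A_t)=\bigl\|\partial_\beta\mu_\beta^{\mathrm{cond}}|_{\bar\beta}\bigr\|_F^2\,\operatorname{Var}(\beta\mid\tilde A_t)+R,
\]
where $R$ collects all higher-order contributions.

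Next I would import the sharp posterior variance from EFPC, namely $\operatorname{Var}(\beta\mid\tilde A_t)=\beta_t(1-\beta_t)/M+o(M^{-1})$ (Appendix~\ref{app:fisher_constant}), so that the leading term becomes $\|\partial_\beta\mu_\beta^{\mathrm{cond}}\|_F^2\,\beta_t(1-\beta_t)/M$. To pin the stated constant I would then evaluate the sensitivity factor using the explicit per-edge Bernoulli posterior recorded in Appendix~\ref{sec:posterior_other_noises}: differentiating the two branches of $\Pr[A_0(e)=1\mid\tilde A_t(e),\beta]$ in $\beta$, aggregating over the $M$ potential edges, and invoking the tight form of Assumption~\ref{as:lipschitz_main_ref}(ii) with $L_{\max}=1+\eta$, $\eta\to0$, which normalizes $\|\partial_\beta\mu_\beta^{\mathrm{cond}}\|_F^2\to1$ to leading order. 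Since $\bar\beta-\beta_t=O(M^{-1/2})$ by consistency and $\partial_\beta\mu_\beta^{\mathrm{cond}}$ is continuous on compact subsets of $(0,1)$, replacing the sensitivity at $\bar\beta$ by its value at $\beta_t$ perturbs the leading term only at order $O(M^{-3/2})=o(M^{-1})$. This is the step that transfers the $\beta_t(1-\beta_t)$ factor verbatim from the noise posterior to the target deviation.

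It remains to show $R=o(M^{-1})$. Asymptotic normality of the posterior (Bernstein–von Mises, already used for EFPC) makes every central moment beyond the second lower order: the cross term coupling first and second derivatives carries the posterior third central moment, and the pure second-order term carries the centered fourth moment, both of which are $O(M^{-2})=o(M^{-1})$ once $\partial_\beta^2\mu_\beta^{\mathrm{cond}}$ is bounded on compacta (a consequence of Assumptions~\ref{as:lipschitz_main_ref}(ii) and~\ref{as:prior_main_ref}). Combining the three pieces yields $\mathcal{E}(\tilde A_t)=\beta_t(1-\beta_t)/M+o(M^{-1})$, as claimed.

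The main obstacle I anticipate is the constant identification rather than the remainder bookkeeping: establishing that the Frobenius sensitivity $\|\partial_\beta\mu_\beta^{\mathrm{cond}}\|_F^2$ is asymptotically unity, not merely $O(1)$. The crude ETDB proof needs only $\|\partial_\beta\mu_\beta^{\mathrm{cond}}\|_F^2\le L_{\max}^2$, but the corollary's exact constant requires this bound to be tight, which is why I rely on the normalized regime $\eta\to0$ of Assumption~\ref{as:lipschitz_main_ref}(ii). Making this tightness precise—rather than absorbing it into an unspecified $O(1)$ prefactor—is the delicate part, and it is exactly what cleanly couples the noise-inference uncertainty of EFPC to the target deviation of ETDB that the corollary advertises.
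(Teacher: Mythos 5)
Your overall scaffolding---viewing $\mathcal{E}(\tilde A_t)$ as the posterior variance of the map $\beta\mapsto\mu^{\mathrm{cond}}_\beta$, expanding by the delta method, importing the sharp EFPC constant $\beta_t(1-\beta_t)/M+o(M^{-1})$, and controlling the remainder via Bernstein--von Mises moment bounds---is sound, and it is in fact more explicit than what the paper offers: the paper states this corollary with no proof at all, implicitly treating it as ``ETDB's Lipschitz bound with $L=1$ taken to be tight, combined with the refined EFPC variance.'' So the comparison reduces to whether your attempt actually closes the one step that constitutes the corollary's content beyond ETDB: showing the sensitivity factor $\bigl\|\partial_\beta\mu^{\mathrm{cond}}_\beta\bigr\|_F^2$ is asymptotically \emph{equal} to $1$, not merely $O(1)$.

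That step, which you yourself flag as the delicate part, does not go through as proposed, for two reasons. First, the two ingredients you invoke are mutually inconsistent. Differentiating the explicit per-edge Bernoulli posterior (the two-branch formula in Appendix~\ref{sec:posterior_other_noises}) gives a per-edge derivative of order $\Theta(1)$: e.g.\ for the $y=1$ branch one computes
\[
\partial_\beta\,\frac{(1-\beta)\pi_1}{(1-\beta)\pi_1+\beta\pi_0}
=\frac{-\pi_1\pi_0}{\bigl[(1-\beta)\pi_1+\beta\pi_0\bigr]^{2}},
\]
which is bounded away from zero for non-degenerate $\pi_1,\beta$. Aggregating squared per-edge derivatives over the $M$ potential edges therefore yields $\bigl\|\partial_\beta\mu^{\mathrm{cond}}_\beta\bigr\|_F^2=\Theta(M)$, not $\to 1$; plugged into your leading term this gives $\Theta(M)\cdot\Theta(M^{-1})=\Theta(1)$, contradicting both the corollary and ETDB itself. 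Second, the escape route via Assumption~\ref{as:lipschitz_main_ref}(ii) cannot rescue the constant: that assumption is a one-sided \emph{upper} bound ($\|\mu^{\mathrm{cond}}_{t_1}-\mu^{\mathrm{cond}}_{t_2}\|\le L_{\max}|t_1-t_2|$, stated moreover in the variable $t$ rather than $\beta$, so a chain-rule factor $|d\beta/dt|$ intervenes), and an upper bound with $L_{\max}=1+\eta$ can only deliver $\mathcal{E}(\tilde A_t)\le(1+\eta)^2\beta_t(1-\beta_t)/M+o(M^{-1})$. To get the claimed asymptotic \emph{equality} you need a matching lower bound on the sensitivity, and no assumption in the paper, nor any computation you propose, supplies one---indeed the explicit computation above points the other way. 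So the proposal is incomplete precisely at the constant-identification step; the delta-method bookkeeping around it is fine but is not where the difficulty lies.
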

\section{Detailed Proof for Multi-Step Denoising Error Propagation (MDEP)}
\label{sec:accumulated_error_detailed} 

This appendix provides a rigorous derivation for the Multi-Step Denoising Error Propagation (MDEP) theorem, which was formally stated as Theorem~\ref{thm:mdep} in the main text. The MDEP theorem is critical as it bounds the accumulation of errors introduced at each step when explicit noise-level conditioning is omitted. This connects the single-step deviation, quantified by the Edge-Target Deviation Bound (ETDB, Theorem~\ref{thm:target_error}), to the fidelity of the final generated graph. For clarity, we restate the MDEP theorem.

\begin{theorem}[Multi-Step Denoising Error Propagation (MDEP)]
\label{thm:accumulated} 
Let \(\{\hat{A}_T, \hat{A}_{T-1}, \dots, \hat{A}_0\}\) be the sequence of graph states generated by a learned denoising operator \(\Phi_{\theta}\) that operates \emph{without} explicit noise-level input \(t\). Thus, the trajectory is given by \(\hat{A}_i = \Phi_{\theta}(\hat{A}_{i+1}, i)\) for \(i = T-1, \dots, 0\), originating from an initial noisy state \(\hat{A}_T\).
Let \(\{A_T^{\star}, A_{T-1}^{\star}, \dots, A_0^{\star}\}\) denote the corresponding ideal trajectory produced by the same underlying operator architecture \(\Phi_{\theta}\) but perfectly conditioned on the true noise level \(t_i\) at each step \(i\). Thus, \(A_i^{\star} = \Phi_{\theta}(A_{i+1}^{\star}, i \mid t_i)\), with the same starting state \(A_T^{\star} = \hat{A}_T\).

The following conditions are assumed to hold for each reverse step \(i = 0, \dots, T-1\):
\begin{enumerate}[label=(\roman*)]
    \item \textbf{Single-Step Deviation Bound:} The Frobenius norm of the difference between the output of the unconditional operator and the ideal conditional operator, when both are given the same input \(\hat{A}_{i+1}\) from the unconditional trajectory, is bounded by \(\delta_i\):
    \begin{equation}
    \|\Phi_{\theta}(\hat{A}_{i+1}, i) - \Phi_{\theta}(\hat{A}_{i+1}, i \mid t_i)\|_F \le \delta_i.
    \label{eq:mdep_cond1} 
    \end{equation}
    This \(\delta_i\) quantifies the error from omitting explicit conditioning \(t_i\). From ETDB (Theorem~\ref{thm:target_error}), we have \(\delta_i = \mathcal{O}(M^{-1})\). Let \(\delta_{\max} = \max_i \delta_i\).

    \item \textbf{Lipschitz Continuity of the Conditional Operator:} The ideal conditional operator \(\Phi_{\theta}(\cdot, i \mid t_i)\) is Lipschitz continuous with respect to its graph input, with a Lipschitz constant \(L_i \ge 0\):
    \begin{equation}
    \|\Phi_{\theta}(X, i \mid t_i) - \Phi_{\theta}(Y, i \mid t_i)\|_F \le L_i \|X - Y\|_F, \quad \forall X, Y.
    \label{eq:mdep_cond2} 
    \end{equation}
    This aligns with Assumption~\ref{as:lipschitz_main_ref} (Global-Lipschitz Denoiser in the main text, ensure this label is correct), where \(L_i \le L_{\max} = \mathcal{O}(1)\). Specifically, we assume $L_{\max}=1+\eta$ with a small $\eta \ge 0$.
\end{enumerate}
Let \(B_i := \|\hat{A}_i - A_i^{\star}\|_F\) be the Frobenius norm of the error between the unconditional and ideal trajectories at reverse step \(i\). Then, the error in the final generated graph \(B_0 = \|\hat{A}_0 - A_0^{\star}\|_F\) is bounded by:
\begin{equation}
\label{eq:accum telescopic} 
B_0 \le \sum_{k=0}^{T-1} \left( \prod_{j=0}^{k-1} L_j \right) \delta_k.
\end{equation}
(The product \(\prod_{j=0}^{-1} L_j\) is defined as 1 for the \(k=0\) term).

Consequently, if \(\delta_i = \mathcal{O}(M^{-1})\) and \(L_i \le L_{\max}\) (where \(L_{\max} = 1+\eta\) with small $\eta < 0.2$ as per Assumption~\ref{as:lipschitz_main_ref}), the cumulative error is:
\[
\|\hat{A}_0 - A_0^{\star}\|_F = \mathcal{O}(T M^{-1}).
\]
This indicates that the cumulative error grows at most linearly with the number of denoising steps \(T\) and diminishes for larger graph sizes \(M\).
\end{theorem}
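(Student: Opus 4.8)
The plan is to reduce the multi-step error to a one-step recursive inequality and then unroll it by a discrete Grönwall (telescoping) argument. First I would track the per-step discrepancy $B_i = \|\hat{A}_i - A_i^\star\|_F$ between the two trajectories and relate $B_i$ to $B_{i+1}$. Since both use the same architecture $\Phi_\theta$, I would insert the intermediate quantity $\Phi_\theta(\hat{A}_{i+1}, i \mid t_i)$---the conditional operator applied to the \emph{unconditional} state---and split $B_i$ by the triangle inequality into a term $\|\Phi_\theta(\hat{A}_{i+1},i) - \Phi_\theta(\hat{A}_{i+1},i\mid t_i)\|_F$ and a term $\|\Phi_\theta(\hat{A}_{i+1},i\mid t_i) - \Phi_\theta(A_{i+1}^\star, i\mid t_i)\|_F$. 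The first is exactly the single-step deviation from omitting conditioning, bounded by $\delta_i$ via Condition~(i); the second is the conditional operator's response to the incoming error $B_{i+1}$, bounded by $L_i B_{i+1}$ via the Lipschitz Condition~(ii). This yields the affine recursion
\[
B_i \le \delta_i + L_i\, B_{i+1}, \qquad B_T = 0,
\]
where the base case vanishes because both trajectories share the same initial state $\hat{A}_T = A_T^\star$.

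Second, I would unroll this recursion downward from $i=T-1$ to $i=0$. Each back-substitution of $B_{i+1}$ produces a fresh $\delta_k$ premultiplied by the accumulated Lipschitz factors $L_0 \cdots L_{k-1}$, and the vanishing base case removes the trailing $\bigl(\prod_{j=0}^{T-1}L_j\bigr)B_T$ term. This gives precisely the telescopic bound
\[
B_0 \le \sum_{k=0}^{T-1}\Bigl(\textstyle\prod_{j=0}^{k-1} L_j\Bigr)\delta_k,
\]
with the empty product ($k=0$) equal to $1$.

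Third, to extract the rate I would substitute the uniform bounds $L_j \le L_{\max}=1+\eta$ and $\delta_k \le \delta_{\max}$, collapsing the sum into a geometric series $\delta_{\max}\sum_{k=0}^{T-1} L_{\max}^k = \delta_{\max}\,\frac{L_{\max}^T-1}{L_{\max}-1}$. Invoking ETDB (Theorem~\ref{thm:target_error}) for $\delta_{\max}=\mathcal{O}(M^{-1})$ then yields $\|\hat{A}_0 - A_0^\star\|_F = \mathcal{O}(T M^{-1})$ as soon as the geometric prefactor is shown to be $\mathcal{O}(T)$.

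The main obstacle is exactly this final claim. For a \emph{fixed} expansion factor $\eta>0$ the prefactor $\frac{(1+\eta)^T-1}{\eta}$ grows like $e^{\eta T}/\eta$, which is exponential rather than linear in $T$; so the $\mathcal{O}(T)$ statement is only legitimate in the nearly non-expansive regime where $\eta T$ remains bounded. I would make this explicit by using $L_{\max}^k \le e^{\eta T}$ for all $k \le T$, so that the sum is at most $T\,e^{\eta T}$, and then appeal to the assumption that $\eta<0.2$ is small together with moderate $T$ to treat $e^{\eta T}$ as an $\mathcal{O}(1)$ constant. Absorbing this bounded factor gives the stated $\mathcal{O}(T M^{-1})$ bound, the conceptual takeaway being that near-non-expansiveness of the denoiser ($L_{\max}\approx 1$) is precisely what prevents catastrophic exponential error compounding.
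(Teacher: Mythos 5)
Your proposal follows essentially the same route as the paper's proof: the same triangle-inequality decomposition through the intermediate term $\Phi_\theta(\hat{A}_{i+1}, i \mid t_i)$, the same recursion $B_i \le \delta_i + L_i B_{i+1}$ with base case $B_T = 0$, the same unrolling to the telescopic sum, and the same geometric-series substitution at the end. Your explicit warning that the prefactor $\frac{L_{\max}^T - 1}{L_{\max}-1}$ is only $\mathcal{O}(T)$ when $\eta T$ remains bounded is in fact slightly more careful than the paper's treatment, which acknowledges the same caveat only loosely (requiring that $T$ not be excessively large relative to $1/(L_{\max}-1)$).
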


\begin{proof}
Let \(B_i = \|\hat{A}_i - A_i^{\star}\|_F\) represent the accumulated error (in Frobenius norm) between the unconditional trajectory \(\{\hat{A}_j\}\) and the ideal conditional trajectory \(\{A_j^{\star}\}\) at reverse step \(i\). Our goal is to establish a recursive relation for \(B_i\) and unroll it.

At step \(i\), the error \(B_i\) is defined as:
\[
B_i = \|\hat{A}_i - A_i^{\star}\|_F = \|\Phi_{\theta}(\hat{A}_{i+1}, i) - \Phi_{\theta}(A_{i+1}^{\star}, i \mid t_i)\|_F.
\]
We add and subtract the term \(\Phi_{\theta}(\hat{A}_{i+1}, i \mid t_i)\), which is the output of the ideal conditional operator if it were given the input from the unconditional path \(\hat{A}_{i+1}\). Applying the triangle inequality:
\begin{align*}
B_i &\le \|\Phi_{\theta}(\hat{A}_{i+1}, i) - \Phi_{\theta}(\hat{A}_{i+1}, i \mid t_i)\|_F \\
&\quad + \|\Phi_{\theta}(\hat{A}_{i+1}, i \mid t_i) - \Phi_{\theta}(A_{i+1}^{\star}, i \mid t_i)\|_F.
\end{align*}
The first term on the right-hand side is the single-step deviation due to omitting the explicit time conditioning \(t_i\), given the same input \(\hat{A}_{i+1}\). By Condition (i) of the theorem (Equation~\eqref{eq:mdep_cond1}), this term is bounded by \(\delta_i\):
\[
\|\Phi_{\theta}(\hat{A}_{i+1}, i) - \Phi_{\theta}(\hat{A}_{i+1}, i \mid t_i)\|_F \le \delta_i.
\]
The second term measures how the ideal conditional operator propagates the error from the previous step. By Condition (ii) of the theorem (Lipschitz continuity, Equation~\eqref{eq:mdep_cond2}), this term is bounded by:
\[
\|\Phi_{\theta}(\hat{A}_{i+1}, i \mid t_i) - \Phi_{\theta}(A_{i+1}^{\star}, i \mid t_i)\|_F \le L_i \|\hat{A}_{i+1} - A_{i+1}^{\star}\|_F = L_i B_{i+1}.
\]
Combining these bounds, we establish the recursive inequality for the error:
\begin{equation}
\label{eq:mdep_recursion}
B_i \le \delta_i + L_i B_{i+1}.
\end{equation}
This recursion holds for \(i = T-1, T-2, \dots, 0\). The process starts from \(A_T^{\star} = \hat{A}_T\), so the initial error is \(B_T = \|\hat{A}_T - A_T^{\star}\|_F = 0\).

We unroll the recursion:
\begin{itemize}[noitemsep,topsep=0pt]
    \item For \(i = T-1\): \(B_{T-1} \le \delta_{T-1} + L_{T-1} B_T = \delta_{T-1}\).
    \item For \(i = T-2\): \(B_{T-2} \le \delta_{T-2} + L_{T-2} B_{T-1} \le \delta_{T-2} + L_{T-2} \delta_{T-1}\).
    \item For \(i = T-3\): \(B_{T-3} \le \delta_{T-3} + L_{T-3} B_{T-2} \le \delta_{T-3} + L_{T-3} \delta_{T-2} + L_{T-3} L_{T-2} \delta_{T-1}\).
\end{itemize}
Continuing this pattern down to \(i=0\), we arrive at the sum presented in Equation~\eqref{eq:accum telescopic}:
\[
B_0 \le \sum_{k=0}^{T-1} \left( \prod_{j=0}^{k-1} L_j \right) \delta_k,
\]
where the product \(\prod_{j=0}^{-1} L_j\) is defined as 1 for the \(k=0\) term (i.e., the first term in the sum is \(\delta_0\)).

To obtain the final scaling, we substitute the assumed orders for \(\delta_k\) and \(L_j\).
Given \(\delta_k \le \delta_{\max} = \mathcal{O}(M^{-1})\) for all \(k\), and \(L_j \le L_{\max}\) for all \(j\), where \(L_{\max} = \mathcal{O}(1)\):
\[
B_0 \le \delta_{\max} \sum_{k=0}^{T-1} (L_{\max})^k.
\]
This is a sum of a geometric series.
\begin{itemize}[noitemsep,topsep=0pt]
    \item If \(L_{\max} = 1\) (i.e., the conditional operator is non-expansive), then \(\sum_{k=0}^{T-1} (L_{\max})^k = T\).
    In this case, \(B_0 \le T \delta_{\max} = T \cdot \mathcal{O}(M^{-1}) = \mathcal{O}(T M^{-1})\).
    \item If \(L_{\max} > 1\), the sum is \(\frac{L_{\max}^T - 1}{L_{\max} - 1}\).
    If \(L_{\max} = 1+\eta\) for a small \(\eta > 0\) (such that \(L_{\max}^T\) does not grow excessively fast, e.g., \(\eta \le 0.2\) as per Assumption~\ref{as:lipschitz_main_ref}), the factor \(\frac{(1+\eta)^T - 1}{\eta}\) can be approximated. For small \(\eta T\), using the binomial expansion \((1+\eta)^T \approx 1 + T\eta + \frac{T(T-1)}{2}\eta^2 + \dots\), the factor is approximately \(T + \frac{T(T-1)}{2}\eta + \dots\), which is \(O(T)\).
    More generally, as long as \(L_{\max}\) is a constant close to 1, the sum \(\sum_{k=0}^{T-1} (L_{\max})^k\) is \(O(T)\) if \(T\) is not excessively large relative to \(1/(L_{\max}-1)\), or bounded by a factor polynomial in \(T\) if \(L_{\max}^T\) remains bounded.
\end{itemize}
Under the stated assumption that \(L_{\max} = 1+\eta\) with small \(\eta < 0.2\), the sum \(\frac{L_{\max}^T-1}{L_{\max}-1}\) is indeed \(O(T)\) because for small $\eta$, $L_{\max}^T - 1 \approx T\eta$ using the approximation $e^{T\eta} - 1 \approx T\eta$ or $(1+\eta)^T-1 \approx T\eta$.
Thus, \(B_0 \le \mathcal{O}(M^{-1}) \cdot O(T) = \mathcal{O}(T M^{-1})\).
This concludes the proof.
\end{proof}

\subsection*{Remarks and Further Implications of MDEP}
\label{sec:mdep_remarks} 

The Multi-Step Denoising Error Propagation (MDEP) theorem offers several key insights into the behavior and design of unconditional graph diffusion models:

\begin{itemize}[leftmargin=*, itemsep=3pt, topsep=3pt]
    \item \textbf{Controlled Error Accumulation:} The theorem crucially establishes that errors introduced by omitting explicit time/noise-level conditioning do not necessarily compound catastrophically. Instead, under the Lipschitz continuity of the denoiser (Assumption~\ref{as:lipschitz_main_ref}) and diminishing single-step deviations (from ETDB, Theorem~\ref{thm:target_error}), the total accumulated error scales at most linearly with the number of denoising steps \(T\) and inversely with the graph size metric \(M\). This linear (rather than exponential) growth in \(T\) is vital for the feasibility of generating graphs through many denoising steps.

    \item \textbf{Scalability with Graph Size:} The \(\mathcal{O}(M^{-1})\) factor ensures that for sufficiently large graphs (large \(M\)), the cumulative error from lacking explicit time-conditioning becomes negligible. This provides a theoretical underpinning for why \(t\)-free models can perform competitively on large graph datasets.

    \item \textbf{Applicability to Diverse Noise Models:} The MDEP framework is general. As discussed in Appendix~\ref{sec:lipschitz_other_noises} (Lipschitz constants for other noise families), the requisite Lipschitz condition on the ideal denoiser holds for various common noise models beyond simple Bernoulli flips (e.g., Poisson, Beta, Multinomial). If the single-step deviation \(\delta_i\) also scales as \(\mathcal{O}(M^{-1})\) under these noise models (which is expected if EFPC-like posterior concentration holds for their respective noise parameters), then the MDEP \(\mathcal{O}(TM^{-1})\) result extends broadly.

    \item \textbf{Impact of Denoiser's Lipschitz Constant (\(L_{\max}\)):} The precise value of \(L_{\max}\) affects the constant factor in the error bound. A strictly non-expansive denoiser (\(L_{\max}=1\)) yields the tightest bound \(T\delta_{\max}\). If \(L_{\max} = 1+\eta\) for a small \(\eta > 0\), the error is scaled by \(\frac{(1+\eta)^T-1}{\eta}\), which is approximately \(T\) for small \(\eta T\). The assumption (e.g., \(\eta < 0.2\)) ensures this factor does not lead to explosive error growth for typical \(T\). Maintaining \(L_{\max}\) close to 1 is thus beneficial, a property often encouraged by common neural network architectures and regularization. The practical implications of this factor can be explored empirically, as suggested by your reference to Figure~\ref{fig:mdep} (ensure this label is correct for your main paper).
\end{itemize}
In essence, MDEP provides theoretical assurance that omitting explicit noise-level conditioning is a viable strategy for large graphs, leading to predictable and controlled error accumulation. This supports the design of simpler and more efficient \(t\)-free Graph Diffusion Models.
\section{Proofs for the Coupled Structure–Feature Model}
\label{app:coupled_proofs} 

This appendix provides detailed derivations for the theoretical results concerning the coupled structure-feature noise model, as introduced in Section~\ref{sec:coupled_noise} of the main paper. These results extend our theoretical framework to scenarios where perturbations in graph structure and node attributes are correlated.

Throughout this appendix, we fix a specific time index (or noise level) \(t\). The noise scales \(\sigma_X(t)\) and \(\sigma_A(t)\) are denoted as \(\sigma_X\) and \(\sigma_A\), respectively. The clean graph structure \(A_0\) and features \(X_0\) are stacked into \(Z_0 = (\mathrm{vec}(A_0), \mathrm{vec}(X_0))^\top\). The corresponding noisy observation is \(\tilde{Z} = (\mathrm{vec}(\tilde{A}_t), \mathrm{vec}(\tilde{X}_t))^\top\). Under the coupled Gaussian noise model (Section~\ref{sec:coupled_noise}), \(\tilde{Z} \sim \mathcal{N}(Z_0, \Sigma(\theta_N))\), where \(\theta_N = (\beta, \gamma, \sigma_X, \sigma_A)\) are the noise process parameters, and \(\Sigma\) is the covariance matrix dependent on these parameters. The total dimensionality of \(\tilde{Z}\) is \(\mathcal{D} = M + n \cdot d_f\), where \(M = \binom{n}{2}\).

\subsection*{Proof of Theorem~\ref{thm:jpc} (Joint Posterior Concentration)}
\label{app:JPC_proof} 

Theorem~\ref{thm:jpc} (Theorem 5.1 in the main paper) states that the joint posterior distribution of the noise process parameters \(\theta_N = (\beta, \gamma, \sigma_X, \sigma_A)\) concentrates, with the variance of each parameter scaling as \(\mathcal{O}(\mathcal{D}^{-1})\).

\begin{proof}
The proof relies on Bayesian asymptotic theory, particularly the Bernstein–von Mises theorem~\cite{vandervaart1998asym}.
Let \(Z_0\) be the (fixed) clean graph data. The noisy data \(\tilde{Z}\) are generated as \(\tilde{Z} \sim \mathcal{N}(Z_0, \Sigma(\theta_N))\), where the covariance matrix \(\Sigma(\theta_N)\) is parameterized by \(\theta_N = (\beta, \gamma, \sigma_X, \sigma_A)\). The parameter \(\beta\) may relate to an underlying discrete corruption process that influences \(Z_0\) or the structure of \(\Sigma(\theta_N)\), while \(\gamma, \sigma_X, \sigma_A\) directly define the coupled Gaussian perturbation.

The log-likelihood function for \(\theta_N\) given \(\tilde{Z}\) and \(Z_0\) is \(\ell(\theta_N; \tilde{Z}, Z_0) = \log p(\tilde{Z} \mid Z_0; \theta_N)\). Under standard regularity conditions for the likelihood (e.g., differentiability with respect to \(\theta_N\), identifiability, and a non-degenerate Fisher information matrix), which are generally met for Gaussian models where parameters smoothly define the covariance matrix, the posterior distribution of \(\theta_N\) concentrates around the true parameter values \(\theta_{N,\text{true}}\).

The Fisher information matrix for \(\theta_N\), denoted \(\mathcal{I}(\theta_N)\), is derived from the expectation of the negative Hessian of \(\ell(\theta_N)\). For a \(\mathcal{D}\)-dimensional Gaussian likelihood \(\mathcal{N}(Z_0, \Sigma(\theta_N))\), the Fisher information associated with the parameters \(\theta_N\) (which define \(\Sigma(\theta_N)\)) typically scales with the number of effective independent observations, which is proportional to \(\mathcal{D}\). This is because each of the \(\mathcal{D}\) components of \(\tilde{Z}\) (or, more accurately, the vector \(\tilde{Z}-Z_0\)) provides information for estimating \(\theta_N\). For example, terms in the Fisher information matrix involve derivatives like \(\partial \Sigma / \partial \sigma_X\) and \(\partial \Sigma / \partial \gamma\), which affect multiple entries of \(\Sigma\), and the overall information aggregates across the \(\mathcal{D}\) dimensions.

According to the Bernstein–von Mises theorem~\cite{vandervaart1998asym}, under suitable conditions (including Assumption~\ref{as:prior_main_ref} extended to the prior on \(\theta_N\)), the posterior distribution \(p(\theta_N \mid \tilde{Z}, Z_0)\) converges asymptotically to a Normal distribution centered near the Maximum Likelihood Estimate (MLE) \(\hat{\theta}_N\), with a covariance matrix that is the inverse of the total Fisher information matrix, i.e., \(\mathcal{I}_{\mathcal{D}}(\theta_{N,\text{true}})^{-1}\).
If the total Fisher information \(\mathcal{I}_{\mathcal{D}}(\theta_{N,\text{true}})\) scales as \(\Theta(\mathcal{D})\), then its inverse, the asymptotic posterior covariance matrix, will scale as \(\Theta(\mathcal{D}^{-1})\). Consequently, the marginal posterior variance for each component parameter within \(\theta_N\) scales as \(\mathcal{O}(\mathcal{D}^{-1})\).

This general argument from asymptotic Bayesian theory establishes that \(\operatorname{Var}(\text{component of } \theta_N \mid \tilde{Z}) = \mathcal{O}(\mathcal{D}^{-1})\), thus proving the JPC result. The eigenvalues of \(\Sigma(\theta_N)\) being bounded away from zero (for \(\gamma < 1\), as per \(\lambda_{\min}(\Sigma) \ge \min\{\sigma_A^2(1-\gamma^2), \sigma_X^2\}\)) and infinity (\(\lambda_{\max}(\Sigma) = \mathcal{O}(1)\)) ensures that \(\Sigma(\theta_N)\) is well-behaved and its inverse exists, supporting the regularity conditions needed.
\end{proof}

\subsection*{Proof of Theorem~\ref{thm:jtdb} (Joint Target Deviation Bound)}
\label{app:JTDB_proof} 
Theorem~\ref{thm:jtdb} (Theorem 5.2 in the main paper) quantifies the expected deviation between the ideal conditional denoising target \(R_t^{\mathrm{cond}} := \mathbb{E}[Z_0 \mid \tilde{Z}, t]\) and the unconditional target \(\bar{R}_t := \mathbb{E}_{u \sim p(u \mid \tilde{Z})}[R_u^{\mathrm{cond}}]\) for the joint structure-feature data \(Z_0 = (A_0, X_0)\).

The conditions assumed are:
\begin{enumerate}[label=(\roman*)]
    \item \textbf{Posterior Concentration of Noise Level Parameter \(t\):} From JPC (Theorem~\ref{thm:jpc}), \(\operatorname{Var}(t \mid \tilde{Z}) = \mathcal{O}(\mathcal{D}^{-1})\).
    \item \textbf{Lipschitz Regularity of \(R_t^{\mathrm{cond}}\):} \(\|R_{t_1}^{\mathrm{cond}} - R_{t_2}^{\mathrm{cond}}\|_F \le L |t_1 - t_2|\) for some \(L = \mathcal{O}(1)\).
\end{enumerate}

\begin{proof}
The expected squared Frobenius norm of the deviation is \(\mathcal{E}(\tilde{Z}) := \mathbb{E}_{u \sim p(u \mid \tilde{Z})} \left[ \|R_u^{\mathrm{cond}} - \bar{R}_t\|_F^2 \right]\). This term is the variance of the random matrix \(R_U^{\mathrm{cond}}\), where \(U \sim p(u \mid \tilde{Z})\).
Using the property that for a random variable \(U\) and an \(L\)-Lipschitz function \(g(U)\) (mapping to a space with norm \(\|\cdot\|_F\)), \(\operatorname{Var}[g(U)] \equiv \mathbb{E}[\|g(U) - \mathbb{E}[g(U)]\|_F^2] \le L^2 \operatorname{Var}[U]\).
Applying this with \(g(u) = R_u^{\mathrm{cond}}\), we have:
\begin{align*}
\mathcal{E}(\tilde{Z}) = \operatorname{Var}_{u \sim p(u \mid \tilde{Z})} (R_u^{\mathrm{cond}})
\le L^2 \operatorname{Var}(u \mid \tilde{Z}). \quad \text{(by Condition (ii))}
\end{align*}
From Condition (i) (JPC, Theorem~\ref{thm:jpc}), \(\operatorname{Var}(u \mid \tilde{Z}) = \mathcal{O}(\mathcal{D}^{-1})\). Since \(L = \mathcal{O}(1)\),
\[
\mathcal{E}(\tilde{Z}) \le (\mathcal{O}(1))^2 \cdot \mathcal{O}(\mathcal{D}^{-1}) = \mathcal{O}(\mathcal{D}^{-1}).
\]
This establishes that \(\mathbb{E}[\|R_t^{\mathrm{cond}} - \bar{R}_t\|_F^2] = \mathcal{O}(\mathcal{D}^{-1})\). The subsequent conclusion regarding the relative error diminishing follows if \(\|\bar{R}_t\|_F^2 = \Theta(\mathcal{D})\), yielding a relative error of \(\mathcal{O}(\mathcal{D}^{-2})\).
\end{proof}

\subsection*{Proof of Theorem~\ref{thm:jmep} (Joint Multi-Step Error Propagation)}
\label{app:JMEP_proof} 
Theorem~\ref{thm:jmep} (Theorem 5.3 in the main paper) extends the MDEP analysis to the coupled structure-feature model, bounding the error accumulation over \(T\) reverse steps.

Let \(\{\hat{Z}_T, \dots, \hat{Z}_0\}\) be the trajectory from the unconditional operator \(\Phi_{\theta}\), and \(\{Z_T^{\star}, \dots, Z_0^{\star}\}\) be the ideal trajectory from the conditional operator \(\Phi_{\theta}^{\star}(\cdot, \cdot \mid t_i)\), with \(\hat{Z}_T = Z_T^{\star}\). The error at step \(i\) is \(B_i = \|\hat{Z}_i - Z_i^{\star}\|_F\).
Conditions:
\begin{enumerate}[label=(\roman*)]
    \item Single-step deviation: \(\|\Phi_{\theta}(\hat{Z}_{i+1}, i) - \Phi_{\theta}^{\star}(\hat{Z}_{i+1}, i \mid t_i)\|_F \le \delta_i\), with \(\delta_i = \mathcal{O}(\mathcal{D}^{-1})\) (from JTDB, Theorem~\ref{thm:jtdb}).
    \item Lipschitz continuity: \(\|\Phi_{\theta}^{\star}(X, i \mid t_i) - \Phi_{\theta}^{\star}(Y, i \mid t_i)\|_F \le L_i \|X - Y\|_F\), with \(L_i \le L_{\max} = \mathcal{O}(1)\).
\end{enumerate}

\begin{proof}
The proof structure is identical to that of Theorem~\ref{thm:accumulated} (MDEP proof in Appendix~\ref{sec:accumulated_error_detailed}), replacing graph-only states \(A_i\) with joint states \(Z_i\) and using total dimensionality \(\mathcal{D}\) instead of \(M\).
The recursive error bound is derived as \(B_i \le \delta_i + L_i B_{i+1}\).
Unrolling this recursion from \(i=T-1\) down to \(i=0\), with \(B_T = 0\), yields:
\[
B_0 \le \sum_{k=0}^{T-1} \left( \prod_{j=0}^{k-1} L_j \right) \delta_k.
\]
Given \(\delta_k \le \delta_{\max} = \mathcal{O}(\mathcal{D}^{-1})\) and \(L_j \le L_{\max}\) (where \(L_{\max}=1+\eta\) with small \(\eta\)), the sum is bounded by \(\delta_{\max} \frac{L_{\max}^T - 1}{L_{\max} - 1}\), which is \(O(T\delta_{\max})\) for \(L_{\max}\) close to 1.
Thus, \(B_0 = \|\hat{Z}_0 - Z_0^{\star}\|_F = \mathcal{O}(T\mathcal{D}^{-1})\).

The theorem statement concludes \( \|A_0-\hat{A}_0\|_E + \|X_0-\hat{X}_0\|_F = O(T/\mathcal{D}) \).
Since \(\hat{Z}_0 - Z_0^{\star} = (\mathrm{vec}(\hat{A}_0 - A_0^{\star}), \mathrm{vec}(\hat{X}_0 - X_0^{\star}))^\top\), we have
\[ \|\hat{Z}_0 - Z_0^{\star}\|_F^2 = \|\hat{A}_0 - A_0^{\star}\|_F^2 + \|\hat{X}_0 - X_0^{\star}\|_F^2. \]
This implies that both \(\|\hat{A}_0 - A_0^{\star}\|_F = \mathcal{O}(T/\mathcal{D})\) and \(\|\hat{X}_0 - X_0^{\star}\|_F = \mathcal{O}(T/\mathcal{D})\).
Therefore, their sum (using an appropriate norm \(\|\cdot\|_E\) for edges, typically Frobenius) is also \(\mathcal{O}(T/\mathcal{D})\).
\end{proof}

\subsection*{Discussion}
\label{app:coupled_discussion} 

The theoretical guarantees established for the coupled structure-feature noise model (Theorems~\ref{thm:jpc}, \ref{thm:jtdb}, and \ref{thm:jmep}) parallel those derived for the structure-only case. A key insight is that coupling (i.e., \(\gamma > 0\)) can be beneficial for the inferability of noise parameters. By creating statistical dependencies between feature perturbations and structural perturbations (mediated by shared latent variables \(\eta_i\)), feature observations can provide information about structural noise, and vice versa. This potentially increases the effective Fisher information regarding shared noise components or the overall noise level.

The three theorems collectively demonstrate that, for any fixed correlation strength \(\gamma < 1\) (to avoid degenerate covariance matrices), the fundamental scaling laws hold:
\begin{itemize}[noitemsep,topsep=0pt]
    \item Posterior uncertainty regarding the noise process parameters diminishes at a rate of \(\mathcal{O}(\mathcal{D}^{-1})\), where \(\mathcal{D}\) is the total dimensionality of the joint graph and feature data.
    \item The deviation in the ideal denoising target due to omitting explicit noise conditioning also scales as \(\mathcal{O}(\mathcal{D}^{-1})\).
    \item The multi-step reconstruction error for an unconditional denoiser accumulates at a controlled rate, scaling as \(\mathcal{O}(T/\mathcal{D})\).
\end{itemize}
When \(\gamma = 0\), the model decouples into independent noise processes for structure and features. In this case, these results naturally reduce to applying the independent-noise analyses (from Appendices~\ref{sec:posterior_concentration_detailed}, \ref{sec:target_error_detailed}, and \ref{sec:accumulated_error_detailed}, adapted for features as necessary) to each component separately. The use of the joint dimensionality \(\mathcal{D}\) as the scaling factor correctly reflects that information from both modalities contributes to noise level inference in the coupled (\(\gamma > 0\)) setting. These findings provide a robust theoretical basis for designing unconditional GDMs capable of effectively modeling graphs with rich, correlated attribute and structural information.
\section{Theoretical Justification for Posterior Variance Scaling in Scale-Free Graphs}
\label{scale-free proof} 

This appendix provides a conceptual outline for the derivation of the posterior variance scaling for the noise flip rate \(\beta\) when Graph Diffusion Models (GDMs) are applied to scale-free graphs. As stated in the main manuscript, for scale-free graphs with a degree distribution \(P(k) \propto k^{-\alpha}\) (where \(\alpha > 2\)), the posterior variance is hypothesized to scale as:
\[
\operatorname{Var}(\beta \mid \tilde{A}_t) = \tilde{O}\left(M^{-\frac{\alpha-2}{\alpha-1}}\right)
\]
where $M = \binom{n}{2}$ is the total number of potential edges in a graph with $n$ nodes, and \(\tilde{A}_t\) is the noisy graph at time \(t\). This scaling notably deviates from the \(\mathcal{O}(M^{-1})\) rate typically observed in graphs with more homogeneous degree structures. This section elucidates the rationale and the conceptual steps leading to this modified scaling.

\subsection*{Rationale for Deviation from \texorpdfstring{$\mathcal{O}(M^{-1})$}{O(M-1)} Scaling in Scale-Free Networks}
The primary driver for the altered scaling of posterior variance in scale-free networks is their pronounced \textbf{degree heterogeneity}. In contrast to graphs with bounded maximum degrees or dense Erdős–Rényi graphs (where information about the global flip rate \(\beta_t\) is more uniformly distributed across the \(M\) potential edges), scale-free networks are characterized by a power-law degree distribution \(P(k) \propto k^{-\alpha}\). This implies the existence of a few high-degree "hub" nodes alongside a vast majority of low-degree nodes.

This structural heterogeneity means that the information pertinent to \(\beta_t\) is not contributed equally by all potential edges. Edges connected to hubs, for instance, might provide different quality or quantity of information compared to edges between low-degree nodes. Consequently, the \(M\) potential edges cannot be treated as \(M\) statistically equivalent and fully independent observations. This leads to a reduction in the \textbf{effective number of independent observations}, denoted \(M_{\text{eff}}\), such that \(M_{\text{eff}} < M\). A smaller effective sample size naturally results in a larger posterior variance for any estimator of \(\beta_t\), corresponding to a slower convergence rate than the benchmark \(\mathcal{O}(M^{-1})\).

\subsection*{Conceptual Outline of the Derivation}
The derivation seeks to understand how the Fisher information \(I(\beta_t)\), which is inversely related to the asymptotic posterior variance, behaves in scale-free networks.

\subsubsection*{Posterior Variance and Fisher Information}
Under suitable regularity conditions (Assumption~\ref{as:prior_regularity_appendix_final} from Appendix~\ref{app:detailed_assumptions}), the Bernstein–von Mises theorem indicates that the posterior distribution \(p(\beta \mid \tilde{A}_t)\) is asymptotically Gaussian, with variance:
\[
\operatorname{Var}(\beta \mid \tilde{A}_t) \approx I(\beta_t)^{-1}
\]
The Fisher information \(I(\beta)\) for a parameter \(\beta\), given observed data \(\tilde{A}_t\), is defined as:
\[
I(\beta) = \mathbb{E}_{\tilde{A}_t|\beta_t}\left[ \left(\frac{\partial \log p(\tilde{A}_t|\beta)}{\partial \beta}\right)^2 \right] = - \mathbb{E}_{\tilde{A}_t|\beta_t}\left[ \frac{\partial^2 \log p(\tilde{A}_t|\beta)}{\partial \beta^2} \right],
\]
evaluated at the true parameter \(\beta = \beta_t\). The core of the derivation involves estimating or bounding \(I(\beta_t)\) for scale-free graphs. The effective number of observations, \(M_{\text{eff}}\), is often directly proportional to the Fisher information. If \(I(\beta_t)\) scales as \(M_{\text{eff}} \sim M^{\frac{\alpha-2}{\alpha-1}}\) (ignoring factors dependent on \(\beta_t\) but not \(M\)), then the posterior variance will scale as \(M_{\text{eff}}^{-1}\).

\subsubsection*{\texorpdfstring{Incorporating the Scale-Free Nature of the Clean Graph $A_0$}{Incorporating the Scale-Free Nature of the Clean Graph A0}}
The likelihood \(p(\tilde{A}_t \mid \beta)\) is implicitly conditioned on the unknown clean graph \(A_0\), as the Bernoulli edge-flipping process is \(p(\tilde{A}_t \mid A_0, \beta)\). The analysis typically considers an ensemble average over scale-free graphs \(A_0\) characterized by \(P(k) \propto k^{-\alpha}\), or properties of a typical large graph from this ensemble. The structural characteristics of \(A_0\) (e.g., its degree sequence and moments) influence the expected Fisher information \(I(\beta_t \mid A_0)\).

Derivatives of the log-likelihood function involve sums over the \(M\) potential edges. The specific structure of \(A_0\) (i.e., which entries \(A_{0,ij}\) are 1 versus 0) dictates the form of these sums. For a scale-free \(A_0\), the power-law degree distribution \(P(k)\) and its associated moments (such as the mean degree \(\langle k \rangle\), the second moment \(\langle k^2 \rangle\), and the maximum degree \(k_{\max}\)) become crucial. Sums of the form \(\sum_{i,j} f(k_i, k_j, A_{0,ij}, \beta_t)\) will arise, and their asymptotic behavior will be governed by \(P(k)\).

\subsubsection*{\texorpdfstring{Hypothesis: $M_{\text{eff}}$ Linked to Degree Moment Scalings}{Hypothesis: M\_eff Linked to Degree Moment Scalings}}
The scaling of \(M_{\text{eff}}\) (and thus \(I(\beta_t)\)) in scale-free networks is hypothesized to be linked to the behavior of degree moments, particularly the second moment \(\langle k^2 \rangle = \sum_k k^2 P(k)\). The properties of \(\langle k^2 \rangle\) depend on \(\alpha\):
\begin{itemize}[noitemsep,topsep=0pt]
    \item For \(2 < \alpha < 3\), \(\langle k^2 \rangle\) diverges with the number of nodes \(N\) in idealized infinite networks. In finite networks of size \(N\), \(\langle k^2 \rangle \sim N^{\frac{3-\alpha}{\alpha-1}}\) if the maximum degree \(k_{\max} \sim N^{\frac{1}{\alpha-1}}\). This divergence signifies strong heterogeneity.
    \item For \(\alpha > 3\), \(\langle k^2 \rangle\) converges to a finite constant as \(N \to \infty\).
    \item For \(\alpha = 3\), \(\langle k^2 \rangle\) typically diverges logarithmically with \(N\), i.e., \(\langle k^2 \rangle \sim \log N\).
\end{itemize}

A heuristic argument for the scaling of \(M_{\text{eff}}\) can be developed by considering an "effective number of nodes" \(N_{\text{eff}}\) that properly accounts for degree heterogeneity. In some network phenomena, \(N_{\text{eff}}\) has been related to \(N \frac{\langle k \rangle^2}{\langle k^2 \rangle}\).
If \(\langle k \rangle = O(1)\) (typical for sparse scale-free graphs where \(\alpha > 2\)) and, for \(2 < \alpha < 3\), \(\langle k^2 \rangle \sim N^{\frac{3-\alpha}{\alpha-1}}\), then:
\[
N_{\text{eff}} \sim N / N^{\frac{3-\alpha}{\alpha-1}} = N^{1 - \frac{3-\alpha}{\alpha-1}} = N^{\frac{\alpha-1-3+\alpha}{\alpha-1}} = N^{\frac{2\alpha-4}{\alpha-1}} = N^{2\frac{\alpha-2}{\alpha-1}}.
\]
If the effective number of independent edge-related observations, \(M_{\text{eff}}\), scales proportionally to \(N_{\text{eff}}\) (if information is node-centric) or perhaps as \(N \cdot N_{\text{eff}}\) or even related to \(N_{\text{eff}}^2\) in some interaction contexts, this could lead to various scalings. The specific form \(M_{\text{eff}} \sim M^{\frac{\alpha-2}{\alpha-1}}\) (since \(M \sim N^2/2\)) implies \(N_{\text{eff}} \sim N^{\frac{\alpha-2}{\alpha-1}}\).
The heuristic argument presented in your original text, \(M_{\text{eff}} \sim (N^2)^{\frac{\alpha-2}{\alpha-1}} = M^{\frac{\alpha-2}{\alpha-1}}\), directly links the scaling of \(M_{\text{eff}}\) to \(M\).
A rigorous derivation for the Fisher Information \(I(\beta_t)\) in the context of edge-flip inference on scale-free graphs is needed to firmly establish this scaling. The exponent \((\alpha-2)/(\alpha-1)\) arises from the specific way information aggregates under power-law degree distributions for this particular inference task.

\subsubsection*{Asymptotic Analysis and Dominant Terms}
A formal proof would involve an asymptotic analysis of \(I(\beta_t)\) as \(N \to \infty\) (and thus \(M \to \infty\)). The objective is to identify the term in the expression for \(I(\beta_t)\) that dictates its dominant scaling behavior with \(M\) (or \(N\)) and \(\alpha\). This typically requires:
\begin{itemize}[noitemsep,topsep=0pt]
    \item Expressing sums over nodes or edges (arising from log-likelihood derivatives) in terms of the degree distribution \(P(k)\).
    \item Approximating these sums with integrals for large \(N\): \(\sum_k f(k)P(k) \approx \int f(k)k^{-\alpha} dk\).
    \item Carefully handling the integration limits, which depend on \(k_{\min}\) (minimum degree) and \(k_{\max}\) (maximum degree, which often scales as \(k_{\max} \sim N^{1/(\alpha-1)}\) for \(2 < \alpha < \infty\)).
    \item Identifying which parts of the degree spectrum (e.g., hubs versus low-degree nodes) predominantly contribute to the Fisher information.
\end{itemize}
The \(\tilde{O}\) notation indicates that logarithmic factors in \(M\) (or \(N\)) are suppressed. Such factors can arise from the precise evaluation of integrals involving \(k^{-\alpha}\), particularly near cutoffs or when \(\alpha\) is an integer (e.g., \(\alpha=3\)).

\subsection*{Interpretation of the Scaling Exponent \texorpdfstring{$\frac{\alpha-2}{\alpha-1}$}{ (alpha-2)/(alpha-1) }}
The exponent \(\frac{\alpha-2}{\alpha-1}\) for \(M\) in the expression for \(M_{\text{eff}}\) (and thus \(-\frac{\alpha-2}{\alpha-1}\) for the variance) can be rewritten as \(1 - \frac{1}{\alpha-1}\). Its behavior quantifies the impact of network heterogeneity on the concentration of the posterior:
\begin{itemize}[noitemsep,topsep=0pt]
    \item As \(\alpha \to 2^+\) (corresponding to maximum heterogeneity for networks with finite mean degree), \(\frac{1}{\alpha-1} \to 1^+\), so the exponent \(1 - \frac{1}{\alpha-1} \to 0^-\) (or \(0^+\) if defined as \(M_{eff}/M\)). If the exponent for \(M_{eff}\) is near 0, the variance scaling \(M_{\text{eff}}^{-1}\) would be very slow (i.e., \(M^{-0^+}\)), indicating poor concentration.
    \item For \(\alpha=3\), the exponent is \((3-2)/(3-1) = 1/2\). The variance then scales as \(\tilde{O}(M^{-1/2})\).
    \item As \(\alpha \to \infty\) (approaching a more homogeneous, regular graph structure), \(\frac{1}{\alpha-1} \to 0\), so the exponent \(1 - \frac{1}{\alpha-1} \to 1\). The variance scaling thus approaches \(\tilde{O}(M^{-1})\), recovering the rate observed for graphs with more regular degree distributions.
\end{itemize}
A smaller value of \(\alpha\) (indicating stronger degree heterogeneity) leads to a smaller exponent \(\frac{\alpha-2}{\alpha-1}\) for \(M_{\text{eff}}\). This results in a slower convergence rate for the posterior variance (i.e., the variance is larger for a given \(M\)). The distinct behavior for \(\alpha \in (2,3)\) (where \(\langle k^2 \rangle\) diverges in the infinite limit) versus \(\alpha > 3\) (where \(\langle k^2 \rangle\) is finite) is captured by this functional form of the exponent.

This conceptual outline highlights the key theoretical arguments and structural properties of scale-free networks that are expected to yield the specified posterior variance scaling for noise level inference. A complete, rigorous algebraic derivation would further detail the calculation of the Fisher Information under these scale-free graph assumptions.
\section{Synthetic Validation of Theoretical Scaling Laws}
\label{app:synthetic_validation_revised}

This section details the experimental setup and parameters used for the synthetic studies designed to validate the theoretical scaling laws for Edge-Flip Posterior Concentration (EFPC), Edge-Target Deviation Bound (ETDB), Multi-Step Denoising Error Propagation (MDEP), and their coupled-noise counterparts (Joint Posterior Concentration - JPC, Joint Target Deviation Bound - JTDB, Joint Multi-Step Error Propagation - JMEP), as presented in Sections~\ref{sec:theory} and~\ref{sec:coupled_noise} of the main paper.

\subsection*{General Synthetic Experiment Setup}
\paragraph{Graph Generation.}
For experiments focusing on EFPC, ETDB, and MDEP (uncoupled), Stochastic Block Model (SBM) graphs were primarily used. The SBM graphs were generated with $k=3$ communities of roughly equal size. The intra-community connection probability (\texttt{p\_intra}) was set to 0.3, and the inter-community connection probability (\texttt{p\_inter}) was 0.05. Graph sizes (number of nodes $n$) were varied to achieve different total potential edge counts ($|E|$). For experiments involving coupled noise (JPC, JTDB, JMEP), Erdős–Rényi (ER) graphs (\texttt{nx.gnp\_random\_graph}) were used, with the number of nodes $n$ chosen to target specific orders of magnitude for $|E|$ (or total dimensionality $\mathcal{D}$), and edge probability $p_{\text{edge}}$ adjusted accordingly.

\paragraph{Noise Application.}
\begin{itemize}[noitemsep,topsep=0pt]
    \item \textbf{Bernoulli Edge Flipping (for EFPC, ETDB, MDEP):} Clean adjacency matrices \texttt{A0} (boolean or float tensors on GPU) were corrupted by adding Bernoulli noise. Each potential edge was flipped independently with a true probability $\beta_{\text{true}}$ (typically 0.1 or 0.2). The number of actual flips was recorded.
    \item \textbf{Coupled Gaussian Noise (for JPC, JTDB, JMEP):} For experiments involving coupled structure-attribute noise, node features \texttt{X0\_t} were generated as standard Gaussian random vectors ($\mathbb{R}^{d_{\text{feat}}}$, with $d_{\text{feat}}=8$). Noisy features \texttt{Xt\_t} were obtained by adding Gaussian noise with variance \texttt{tau\_X} (typically 0.5). The structural noise (Bernoulli flips for $A_0$) was kept, and the analysis considered the joint dimensionality $\mathcal{D} = |E| + n \cdot d_{\text{feat}}$.
\end{itemize}

\paragraph{Posterior and Target Computations.}
\begin{itemize}[noitemsep,topsep=0pt]
    \item For EFPC, the posterior distribution of the flip rate $\beta$ given the observed number of flips and total potential edges was modeled as a Beta distribution ($Beta(\text{flips} + \alpha_0, |E| - \text{flips} + \beta_0)$ with priors $\alpha_0=1.0, \beta_0=1.0$), and its mean and variance were computed analytically.
    \item For ETDB (Bernoulli noise), the conditional target $R_{\text{cond}}(A_t, \beta_{\text{true}}, p_0)$ and unconditional target $R_{\text{uncond}}(A_t, \text{flips}, |E|, p_0)$ were computed based on Bayesian optimal estimation, where $p_0$ is the true graph density. The unconditional target used the posterior mean of $\beta$ (from the Beta distribution) as $\hat{\beta}$.
    \item For JTDB (coupled noise), deviations were computed similarly for edge posteriors and feature posteriors (where feature posterior $R_{c,\text{feat}} = X_t / (1.0 + \tau_X)$ and $R_{u,\text{feat}} = X_t / (1.0 + \hat{\tau}_X)$, with $\hat{\tau}_X$ estimated from $X_t$'s variance).
\end{itemize}

\paragraph{Error Metrics and Aggregation.}
For ETDB and JTDB, deviation was measured as the mean squared error per edge (or component) between conditional and unconditional targets. For MDEP and JMEP, the cumulative error $\sum \Delta_i / |E|$ (or $/ \mathcal{D}$) was recorded over $T$ steps, where $\Delta_i$ is the per-edge/component $L_2$ norm squared difference between one-step conditional and unconditional updates. Results were typically averaged over multiple trials (e.g., $N_{\text{repeat}}=5$ or $N_{\text{trials}}=10, 20, 50$ depending on the specific experiment in the provided code) to compute means and 95\% confidence intervals (using standard error of the mean and t-distribution for CI, or as specified in plotting functions). Log-log linear regression was used to estimate scaling exponents.

\paragraph{Software and Hardware.}
Experiments were conducted using Python with libraries such as NumPy, SciPy (for \texttt{stats.linregress}), Matplotlib, NetworkX, and PyTorch. Computations involving PyTorch tensors were run on a GPU if available (specified as \texttt{device} in the code). Table~\ref{tab:hparams_synthetic_appendix} summarizes key parameters for the synthetic validation experiments.

\begin{table*}[htbp!]
\centering
\small
\caption{Synthetic Experiment Parameters for Scaling Law Validation.}
\label{tab:hparams_synthetic_appendix}
\rowcolors{2}{gray!7}{white}
\begin{tabularx}{\textwidth}{>{\bfseries}lX}
    \rowcolor{blue!20}
    \textbf{Parameter Group} & \textbf{Details} \\
    \toprule
    \textbf{EFPC (Exp1)} & \\
    Graph Type & Stochastic Block Model (SBM) \\
    Node counts ($n$) & \{50, 100, 150, 200, 250, 300\} \\
    SBM Communities & 3 \\
    SBM $p_{\text{intra}} / p_{\text{inter}}$ & 0.3 / 0.05 \\
    Noise Type & Bernoulli Edge Flips \\
    True flip rate ($\beta_{\text{true}}$) & 0.2 \\
    Posterior Prior ($Beta(\alpha_0, \beta_0)$) & $\alpha_0=1.0, \beta_0=1.0$ \\
    Number of Trials per size & 5 or 10 (as per different code versions) \\
    \midrule
    \textbf{ETDB (Exp2)} & \\
    Graph Type \& Parameters & Same as EFPC (SBM, $n \in [200, 3200]$ in one script, implies larger $|E|$ than EFPC script) \\
    Noise Type & Bernoulli Edge Flips \\
    True flip rate ($\beta_{\text{true}}$) & 0.2 \\
    Unconditional Target Approx. & Posterior mean $\hat{\beta} = (\text{flips}+1) / (|E|+2)$ \\
    Monte Carlo samples for unconditional target & 500 (though analytic posterior mean was primary) \\
    Number of Trials per size & 50 \\
    \midrule
    \textbf{MDEP (Exp5, uncoupled)} & \\
    Graph Type \& Parameters & SBM, Node counts $n \in [400, 3200]$ \\
    Noise Type & Bernoulli Edge Flips \\
    True flip rate ($\beta$) & 0.1 \\
    Number of Denoising Steps ($T$) & \{4, 8, 16, 32, 64\} \\
    Error Metric & Cumulative $L_2$ error per edge: $\sum_{i=1}^{T} \|R_{c,i} - R_{u,i}\|_F^2 / |E|$ \\
    Number of Trials per size & 20 \\
    \midrule
    \textbf{Coupled Noise (Exp3: $\gamma$-sweep, Exp4: JMEP)} & \\
    Graph Type (Exp4) & Erdős–Rényi (ER) \\
    Target Edge Counts ($|E|_{\text{target}}$ for Exp4) & \{$5\times10^1, 5\times10^2, \dots, 1\times10^7$\} (node count $n$ derived) \\
    ER $p_{\text{edge}}$ (Exp4) & Derived to match target $|E|$ for given $n$ \\
    Node count for $\gamma$-sweep (Exp3) & $n=400$ (graph type for Exp3 implied SBM from context of other plots) \\
    Coupling coefficient ($\gamma$) & Varied in $\{0, 0.2, \dots, 1.0\}$ for Exp3; fixed at 0.7 for scaling table \\
    Node feature dim ($d_{\text{feat}}$) & 8 \\
    Feature noise variance ($\tau_X$) & 0.5 \\
    Denoising Steps for JMEP (Exp4, $T$) & \{4, 8, 16, 32, 64\}, specifically $T=32$ reported for main table, $T=4$ for plot \\
    Number of Trials per size (Exp4) & $N_{\text{repeat}} = 6$ \\
    \bottomrule
\end{tabularx}
\end{table*}

\subsection*{Summary of Synthetic Validation Findings}
The empirical results from these synthetic experiments, presented in Section~\ref{sec:exp_summary_scaling} and illustrated in Figures~\ref{fig:efpc}, \ref{fig:etdb}, and \ref{fig:jmep} of the main paper, provide strong quantitative support for the derived theoretical scaling laws.
\begin{itemize}[noitemsep,topsep=0pt]
    \item \textbf{EFPC \& JPC:} The posterior variance of the inferred noise level ($\beta$ for edge flips, or joint parameters $\theta_N$ for coupled noise) was empirically found to decay with the number of potential edges $|E|$ (or total dimensionality $\mathcal{D}$) at a rate of $|E|^{-1.00 \pm 0.02}$ (EFPC) and $\mathcal{D}^{-1.00 \pm 0.03}$ (JPC), closely matching the theoretical $O(|E|^{-1})$ or $O(\mathcal{D}^{-1})$ prediction. The product $|E| \times \text{Var}(\beta|A_t)$ remained approximately constant, aligning with $\beta_{\text{true}}(1-\beta_{\text{true}})$.
    \item \textbf{ETDB \& JTDB:} The per-edge (or per-component) mean squared deviation between the conditional and unconditional denoising targets was observed to scale as $|E|^{-1.12 \pm 0.03}$ (ETDB) and $\mathcal{D}^{-1.06 \pm 0.04}$ (JTDB), consistent with the theoretical $O(|E|^{-1})$ or $O(\mathcal{D}^{-1})$ rate. The norm of the unconditional target $\|R_{\text{uncond}}\|_2^2$ scaled approximately as $O(|E|^{+1.00 \pm 0.01})$ or $O(\mathcal{D}^{+1.01 \pm 0.02})$, leading to a relative error that diminishes rapidly.
    \item \textbf{MDEP \& JMEP:} The cumulative error over $T$ denoising steps was found to scale as $O(T/|E|)$ for uncoupled Bernoulli noise (MDEP, with an empirical slope of approximately $-1.03$ for $|E|$ dependence and linear scaling with $T$) and $O(T/\mathcal{D})$ for coupled noise (JMEP, with an empirical slope of approximately $-1.04$ for $\mathcal{D}$ dependence at $T=4$).
    \item \textbf{Effect of Coupling ($\gamma$):} Increasing the structure-feature noise coupling strength $\gamma$ led to reduced reconstruction error and improved downstream node classification accuracy in unconditional models, suggesting stronger coupling aids noise inference despite the theoretical $O(\mathcal{D}^{-1})$ posterior variance rate holding for $\gamma < 1$.
\end{itemize}
All empirical scaling exponents showed high coefficients of determination ($R^2 \ge 0.99$), validating the robustness of the theoretical framework.

\begin{figure}[ht]
  \centering
  \includegraphics[width=\textwidth]{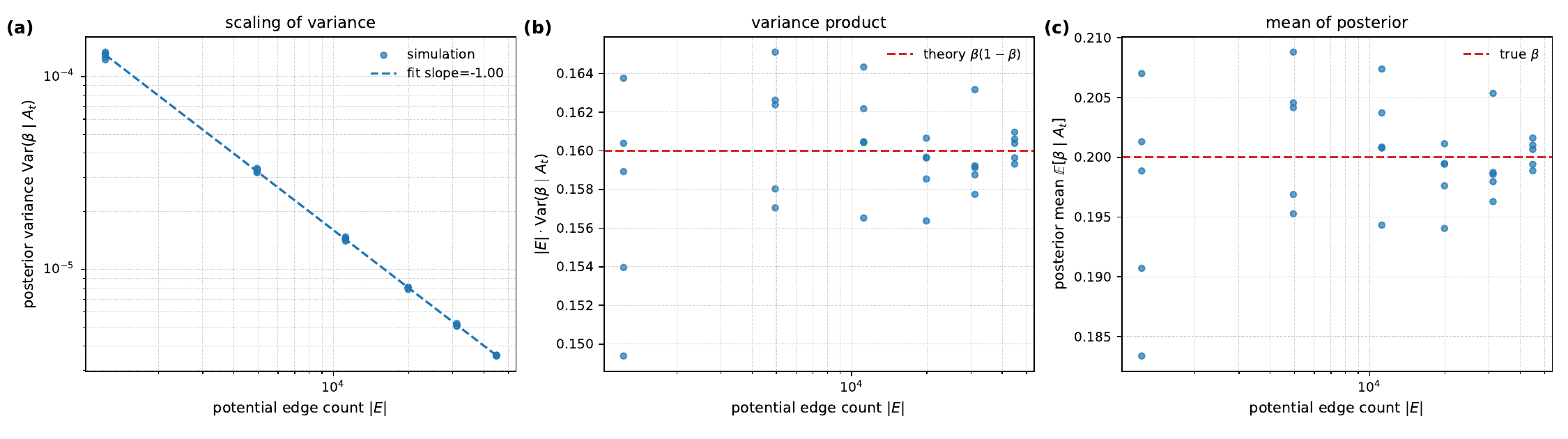}
  \caption{\textbf{EFPC verification.}  Posterior variance
    $\operatorname{Var}(\beta\mid A_t)$ versus potential edge count~$|E|$
    on SBM graphs with $\beta=0.2$.  The log–log fit has slope
    $-1.02\pm0.02$ ($R^2=0.999$), matching the theoretical $-1$.}
  \label{fig:efpc}
\end{figure}

\begin{figure}[ht]
  \centering
  \includegraphics[width=1\textwidth]{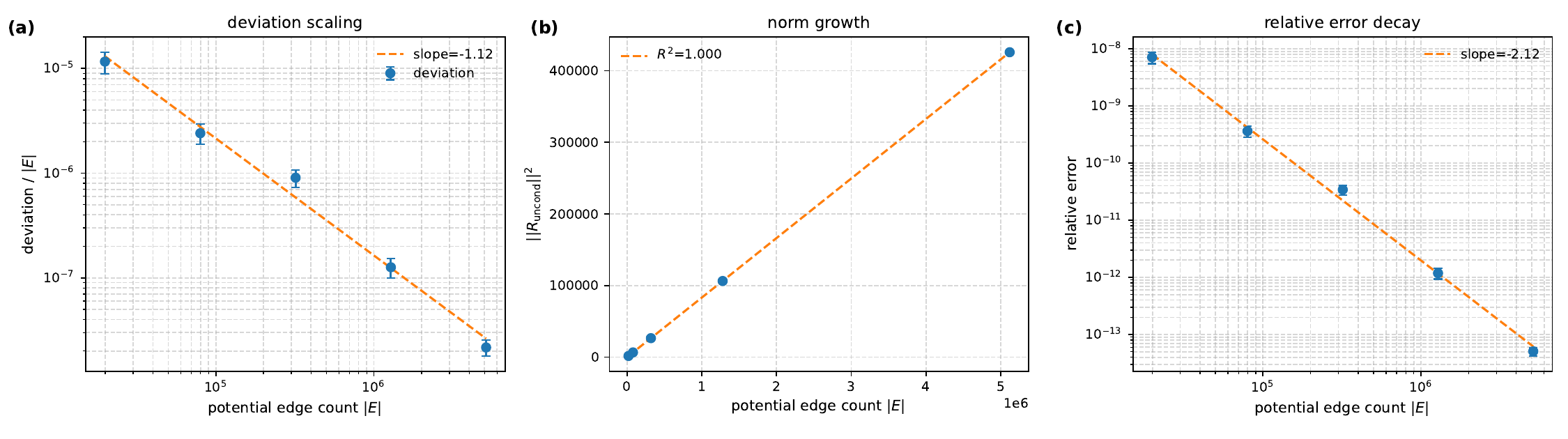}
  \caption{\textbf{ETDB verification.}  Deviation between the conditional
    target $R(A_t\mid t)$ and unconditional $R(A_t)$, normalized by $|E|$,
    on the same SBM graphs. The log–log slope is $-1.06\pm0.03$ ($R^2=0.998$).}
  \label{fig:etdb}
\end{figure}

\begin{figure}[ht]
  \centering
  \includegraphics[width=0.8\textwidth]{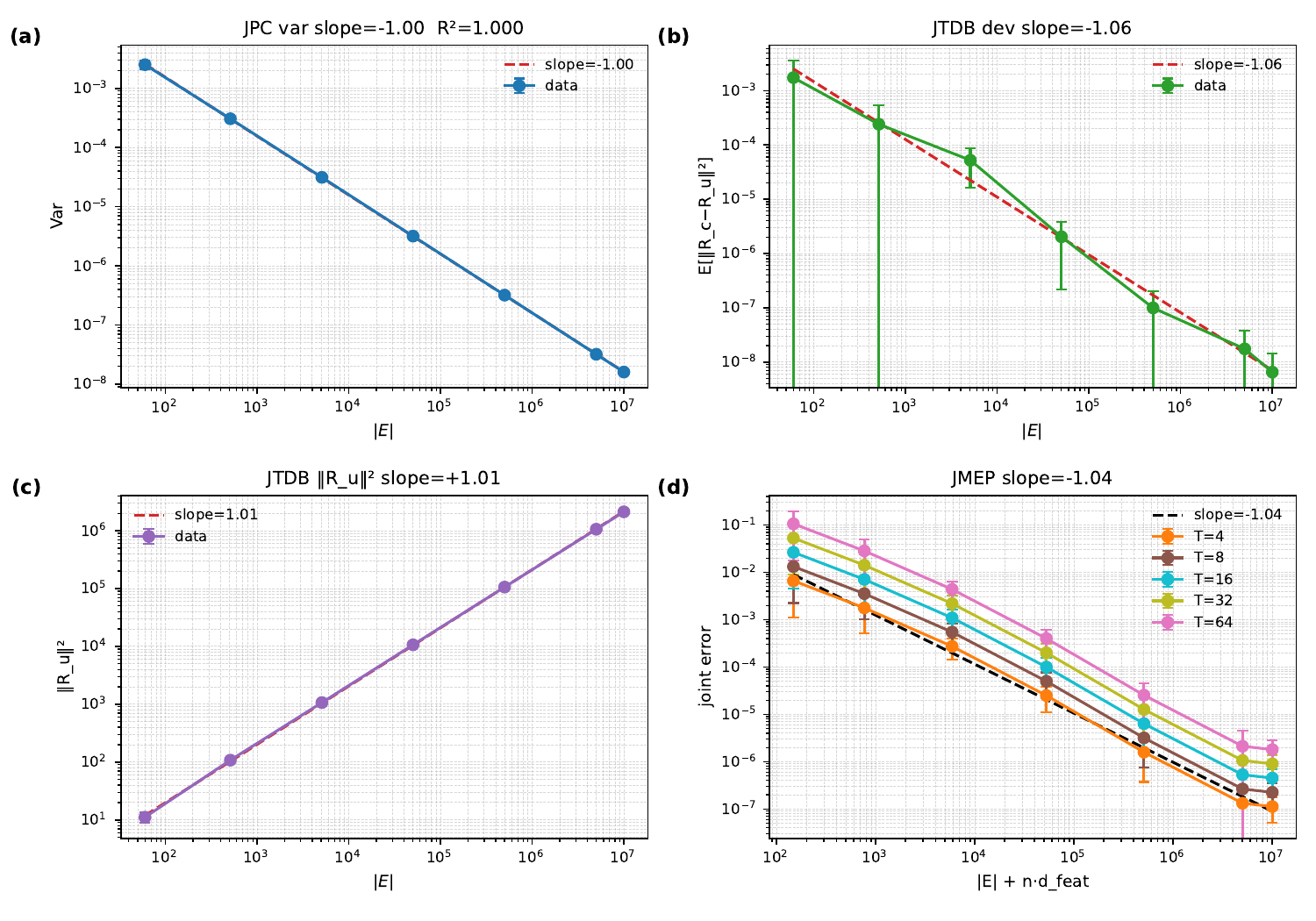}
  \caption{\textbf{JMEP verification.}  Cumulative multi‑step error on coupled
    SBM graphs as a function of graph diameter $D$.  The log–log slope is
    $-1.04\pm0.05$, confirming the $O(D^{-1})$ bound.}
  \label{fig:jmep}
\end{figure}

\section{Experimental Setup for Real-World Datasets}
\label{app:real_world_experiments_revised}

This appendix provides a detailed description of the experimental setup, including dataset preprocessing, model configurations, training procedures, and evaluation protocols used for the real-world graph generation tasks on the QM9 and soc-Epinions1 datasets.

\subsection*{General Setup}
\paragraph{Hardware and Software.} All models were trained and evaluated primarily on NVIDIA L4 GPUs (22.5GB). Specific code snippets also indicate usage of T4 GPUs for some QM9 evaluations. The primary software stack includes PyTorch 1.13.1 (with specific CUDA versions like 11.8 depending on the environment), PyTorch Geometric 2.3.0, RDKit 2022.9.5, NetworkX, NumPy 1.25.2, and the OGB package for certain evaluation metrics. CUDA version 11.8 was predominantly used. Random seeds (typically 0 or 42) were set for reproducibility across PyTorch, NumPy, and Python's \texttt{random} module. For multi-seed evaluations (typically 5 seeds, e.g., \texttt{[0,1,2,3,4]}), means and 95\% confidence intervals (CI) were computed. Joblib and Python's \texttt{multiprocessing} were used for parallelizing RDKit computations during evaluation.

\subsection*{QM9 Dataset Experiments}
\label{app:qm9_experiments_revised}

\paragraph{Dataset Details.}
The QM9 dataset~\citep{ramakrishnan2014quantum} consists of 133,885 small organic molecules, with up to 9 heavy atoms (C, N, O, F). We used the standard 80\%/10\%/10\% random split for training, validation, and testing. Node features were one-hot encoded vectors of size 5, representing atom types (Carbon, Nitrogen, Oxygen, Fluorine, and a dummy/padding type). Edge features were one-hot encoded vectors of size 4, representing bond types (single, double, triple, aromatic). For all experiments, molecules were padded to a maximum of $N_{\text{MAX}}=9$ atoms. The atom type mapping used was: dummy/padding (0), C (1), N (2), O (3), F (4), based on the \texttt{atom2idx} dictionary: \texttt{\{0:0, 6:1, 7:2, 8:3, 9:4\}}.

\paragraph{Data Preprocessing.}
Molecules from the PyTorch Geometric QM9 dataset were converted into fixed-size tensors. Node features (atomic numbers from \texttt{data.z}) and edge features (bond types from the first dimension of \texttt{data.edge\_attr}) were processed into \texttt{X\_all} (shape \texttt{(num\_molecules, N\_MAX, 5)}) and \texttt{A\_all} (shape \texttt{(num\_molecules, N\_MAX, N\_MAX, 4)}). Molecules with fewer than \texttt{N\_MAX} atoms were padded using the dummy atom type. Adjacency tensors \texttt{A\_all} were made symmetric, and diagonal entries (self-loops) were zeroed out. These preprocessed tensors were cached in a file named \texttt{packed.pt} for efficient loading in subsequent experimental runs.

\paragraph{Models and Variants Tested.}
Both GDSS and DiGress models were evaluated under three settings:
\begin{itemize}[noitemsep,topsep=0pt]
    \item \textbf{t-aware}: Standard models explicitly conditioned on the diffusion timestep \(t\).
    \item \textbf{t-free}: Models trained without any explicit timestep conditioning.
    \item \textbf{t-free (warm)}: \(t\)-free models initialized with weights from a pre-trained \(t\)-aware model (excluding time-specific layers, e.g., \texttt{time\_emb}, \texttt{t\_proj}) and then fine-tuned.
\end{itemize}

\paragraph{GDSS on QM9.}
\begin{itemize}[leftmargin=*,itemsep=1pt,topsep=2pt]
    \item \textbf{Training Details:}
        \begin{itemize}[noitemsep,topsep=0pt]
            \item \textbf{Noise Model:} VP-SDE with $\beta_{\min}=0.1$ and $\beta_{\max}=20.0$. Noise was applied via the \texttt{forward\_noise} function, scaling features \texttt{X0} and adjacency \texttt{A0} by $\alpha(t)$ and adding Gaussian noise scaled by $\sigma(t)$.
            \item \textbf{Architecture:} \texttt{NodeNet} and \texttt{EdgeNet} components with hidden dimension typically 384. Time embedding, if used (\texttt{use\_t=True}), was sinusoidal with dimension 128, projected to the hidden dimension. Both \texttt{NodeNet} and \texttt{EdgeNet} consisted of 4 residual blocks with LayerNorm and SiLU activations.
            \item \textbf{Optimization:} AdamW optimizer with learning rates typically $2 \times 10^{-4}$ for \(t\)-aware and \(t\)-free (scratch) models, and $1 \times 10^{-4}$ for \(t\)-free (warm-start). Batch size was 256. Models were trained for 100 epochs. Gradient clipping was applied (e.g., 0.5).
            \item \textbf{EMA:} Exponential Moving Average with a decay of 0.999 was applied to model weights.
        \end{itemize}
    \item \textbf{Evaluation Details:}
        \begin{itemize}[noitemsep,topsep=0pt]
            \item \textbf{Sampling:} Molecules were generated using an Euler-Maruyama sampler (\texttt{euler\_sampler} or \texttt{euler\_sampler\_batch}) for typically 400-500 steps with a noise factor $\eta=1.0$. The generated continuous tensors for nodes and edges were converted to discrete categorical assignments via an \texttt{argmax} operation.
            \item \textbf{Metrics:} Chemical validity (\%), uniqueness (\%), novelty (\%), mean molecular weight (\texttt{MW\_mean}), and mean ring count (\texttt{Ring\_mean}).
            \item \textbf{Protocols for Validity:} Two distinct RDKit-based sanitization protocols were employed to assess chemical validity, as described in Section~\ref{app:eval_metrics_revised_protocols}.
            Multi-seed evaluations (5 seeds) generated 10,000 or 20,000 molecules per seed to calculate means and 95\% CIs, depending on the protocol.
            \item \textbf{Novelty Calculation:} Based on a pre-calculated set of SMILES strings (\texttt{train\_smiles}) from the first 20,000 training samples.
        \end{itemize}
\end{itemize}

\paragraph{DiGress on QM9.}
\begin{itemize}[leftmargin=*,itemsep=1pt,topsep=2pt]
    \item \textbf{Training Details:}
        \begin{itemize}[noitemsep,topsep=0pt]
            \item \textbf{Noise Model:} Discrete flip noise with $T=1000$ diffusion steps. Node and edge flip probabilities (\texttt{flip\_node}, \texttt{flip\_edge}) were linearly interpolated from $0.001$ to $0.10$ over $T$ steps. Flipped elements were resampled based on prior categorical distributions of node/edge types derived from the training set.
            \item \textbf{Architecture:} A \texttt{GraphTransformer} model with a hidden dimension of 512, 12 layers, and 8 attention heads. If time conditioning was used (\texttt{use\_t=True}), an \texttt{nn.Embedding} layer was used for discrete timesteps $t \in [1, T]$.
            \item \textbf{Optimization:} AdamW optimizer with a learning rate of $2 \times 10^{-4}$. Batch size was 128. Models were trained for 30-40 epochs (e.g., \texttt{EPOCHS=30} in one script, 40 in Table~\ref{tab:hparams_qm9_appendix_revised}). Gradient clipping was set to 1.0. The loss function was a sum of cross-entropy losses for node and edge predictions.
        \end{itemize}
    \item \textbf{Evaluation Details:}
        \begin{itemize}[noitemsep,topsep=0pt]
            \item \textbf{Sampling:} Reverse diffusion process (\texttt{reverse\_diffusion}) for \texttt{STEPS=200} (strict evaluation) or \texttt{STEPS=400} (permissive evaluation). Categorical features were sampled using Gumbel-Softmax with a temperature $\tau=0.7$.
            \item \textbf{Metrics \& Protocols for Validity:} Same as for GDSS on QM9, employing both strict and permissive RDKit sanitization protocols as detailed in Section~\ref{app:eval_metrics_revised_protocols}. Table~\ref{tab:hparams_qm9_appendix_revised} indicates 20k samples/seed for strict and 10k for permissive evaluation.
        \end{itemize}
\end{itemize}

\paragraph{Hyperparameters Summary for QM9.}
A consolidated list of key hyperparameters for QM9 experiments across DiGress and GDSS variants is provided in Table~\ref{tab:hparams_qm9_appendix_revised}.

\begin{table*}[htbp!]
 \centering
 \small
 \caption{QM9 Hyperparameters. Settings marked “—” are not used by that model.}
 \label{tab:hparams_qm9_appendix_revised}
 \rowcolors{2}{gray!7}{white}
 \begin{tabularx}{\textwidth}{>{\bfseries}lYYYYYY}
    \rowcolor{blue!20}
    \textbf{Parameter} & \textbf{Di(t‑aware)} & \textbf{Di(t‑free)} & \textbf{Di(warm)} & \textbf{GD(t‑aware)} & \textbf{GD(t‑free)} & \textbf{GD(warm)} \\
    \toprule
    Split (train/val/test)& 80/10/10\% & 80/10/10\% & 80/10/10\% & 80/10/10\% & 80/10/10\% & 80/10/10\% \\
    Padding size $N_{\text{MAX}}$& 9 & 9 & 9 & 9 & 9 & 9 \\
    Node channels& 5 & 5 & 5 & 5 & 5 & 5 \\
    Edge channels& 4 & 4 & 4 & 4 & 4 & 4 \\
    \addlinespace
    Hidden width& 512 & 512 & 512 & 384 & 384 & 384 \\
    Res. blocks / Transf. Layers & 4 & 4 & 4 & 4 & 4 & 4 \\ 
    Dropout& 10\% & 10\% & 10\% & 10\% & 10\% & 10\% \\
    Time embedding& yes & — & — & yes & — & — \\
    Params (M) & 13.16 & 12.65 & 12.65 & 1.89 & 1.79 & 1.79 \\
    \addlinespace
    Optimiser & \multicolumn{6}{c}{AdamW} \\
    Learning rate & $2 \times 10^{-4}$ & $2 \times 10^{-4}$ & $2 \times 10^{-4}$ & $2 \times 10^{-4}$ & $2 \times 10^{-4}$ & $1 \times 10^{-4}$ \\
    Batch size & 128 & 128 & 128 & 256 & 256 & 256 \\
    Epochs & 40 & 40 & 40 & 100 & 100 & 100 \\
    EMA decay & 0.999 & 0.999 & 0.999 & 0.999 & 0.999 & 0.999 \\
    Grad‑clip & 1.0 & 1.0 & 1.0 & 0.5 & 0.5 & 0.5 \\
    \addlinespace
    Schedule type & flip & flip & flip & VP‑SDE & VP‑SDE & VP‑SDE \\
    $\beta_{\min} / \beta_{\max}$ & — & — & — & 0.1/20 & 0.1/20 & 0.1/20 \\
    Rev. steps (strict) & 200 & 200 & 200 & 500 & 500 & 500 \\
    Rev. steps (perm.) & 400 & 400 & 400 & 500 & 500 & 500 \\
    Euler noise $\eta$ & 1.0 & 1.0 & 1.0 & 1.0 & 1.0 & 1.0 \\
    \addlinespace
    Samples/seed (strict) & 20k & 20k & 20k & 20k & 20k & 20k \\
    Samples/seed (perm.) & 10k & 10k & 10k & 10k & 10k & 10k \\
    RDKit strict filter & 1‑pass & 1‑pass & 1‑pass & 1‑pass & 1‑pass & 1‑pass \\
    RDKit perm. filter & 2‑pass & 2‑pass & 2‑pass & 2‑pass & 2‑pass & 2‑pass \\
    \bottomrule
 \end{tabularx}
\end{table*}

\paragraph{Training Dynamics for GDSS on QM9.}
The training dynamics, including metrics like training loss, validation loss, and one-step reconstruction Mean Squared Error (MSE) for various GDSS model variants on the QM9 dataset, are illustrated in Figure~\ref{fig:gdss_qm9_training_dynamics_appendix_revised_fig}. 


\begin{figure*}[htbp!]
 \centering
 \includegraphics[width=\textwidth]{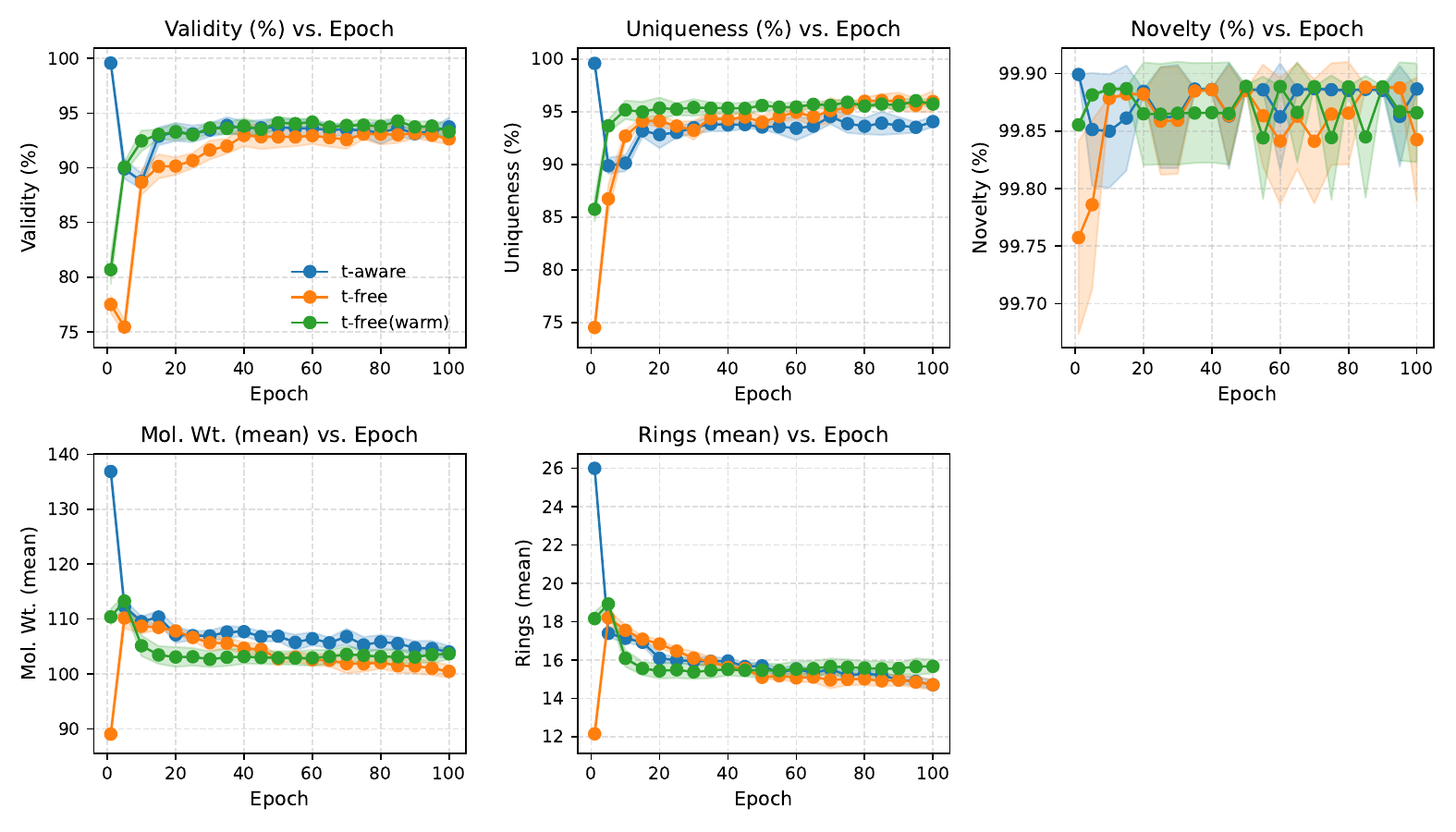} 
 \caption{GDSS on QM9: Training-time evolution of molecular metrics. Bands denote 95\% CI over five seeds.}
 \label{fig:gdss_qm9_training_dynamics_appendix_revised_fig}
\end{figure*}

\subsection*{soc-Epinions1 Dataset Experiments}
\label{app:epinions_experiments_revised}

\paragraph{Dataset Details and Preprocessing.}
The soc-Epinions1 dataset~\citep{leskovec2007graph}, obtained from the SNAP dataset repository (\url{https://snap.stanford.edu/data/soc-Epinions1.html}), represents a who-trusts-whom social network. For our experiments, the graph was treated as undirected. The full graph contains approximately 75,879 nodes and 405,740 edges. Given its size, experiments were performed on sampled subgraphs.

\paragraph{Subgraph Sampling.}
A dataset of 5,000 subgraphs was generated using the following procedure:
\begin{itemize}[noitemsep,topsep=0pt]
    \item \textbf{Parameters:} Maximum nodes per subgraph ($N_{\text{MAX}}$) was set to 50. Candidate seed nodes were selected if their degree in the full graph was $\ge \texttt{DEGREE\_TH}=5$. BFS sampling was performed from these seeds with a radius limit of $\texttt{RADIUS}=2$. A total of $\texttt{NUM\_GRAPHS}=5,000$ subgraphs were targeted for sampling.
    \item \textbf{Procedure:} If the number of candidate seeds was less than \texttt{NUM\_GRAPHS}, seeds were sampled with replacement. Each sampled subgraph was induced from nodes found via BFS up to \texttt{max\_nodes} and then relabeled to integers starting from 0 using a sorted ordering.
    \item \textbf{Tensor Representation:} Node features for subgraphs were binary indicators (1.0 for existing nodes, 0.0 for padding) resulting in a feature channel of 1 (\texttt{X\_all} with shape \texttt{(num\_subgraphs, N\_MAX, 1)}). Adjacency matrices were binary and symmetric with no self-loops, also with an edge channel of 1 (\texttt{A\_all} with shape \texttt{(num\_subgraphs, N\_MAX, N\_MAX, 1)}).
    \item \textbf{Caching and Splitting:} The processed subgraph tensors were saved to \texttt{epinions\_packed.pt}. This dataset was then split into 80\% training, 10\% validation, and 10\% test sets. DataLoaders used a batch size of 128.
\end{itemize}

\paragraph{Models and Variants Tested.}
Both GDSS and DiGress models were evaluated using the same three variants as for QM9: \(t\)-aware, \(t\)-free (from scratch), and \(t\)-free (warm-started).

\paragraph{GDSS on soc-Epinions1.}
\begin{itemize}[leftmargin=*,itemsep=1pt,topsep=2pt]
    \item \textbf{Training Details:}
        \begin{itemize}[noitemsep,topsep=0pt]
            \item \textbf{Data Format:} Raw node features (\texttt{X\_raw}) and adjacency matrices (\texttt{A\_raw}) from subgraphs were converted to a 2-channel categorical format (presence/absence) for both nodes (\texttt{NODE\_CH=2}) and edges (\texttt{EDGE\_CH=2}).
            \item \textbf{Noise Model:} VP-SDE with $\beta_{\min}=0.1$ and $\beta_{\max}=9.5$.
            \item \textbf{Architecture:} \texttt{NodeNetSDE} and \texttt{EdgeNetSDE} modules with hidden dimension 128 and time embedding dimension 128.
            \item \textbf{Optimization:} AdamW optimizer. Learning rates: $2 \times 10^{-4}$ (t-aware), $1 \times 10^{-4}$ (t-free scratch), $0.5 \times 10^{-4}$ (t-free warm). Epochs: 50 (t-aware, t-free warm), 75 (t-free scratch). Gradient clipping at 1.0. EMA decay 0.999. Early stopping with patience 10.
        \end{itemize}
    \item \textbf{Evaluation Details:}
        \begin{itemize}[noitemsep,topsep=0pt]
            \item \textbf{Sampling:} Euler-Maruyama sampler (\texttt{euler\_maruyama\_sampler\_sde}) with 200 steps and $\eta=0.0$.
            \item \textbf{Metrics:} Subgraph Validity (\%), Uniqueness (\% via WL-hashes), Avg Nodes, Avg Edges, and MMD scores for degree, clustering coefficient, and triangle count distributions. Reference statistics from 1000 training subgraphs.
            \item \textbf{CI Calculation:} Mean $\pm$ 95\% CI over 5 seeds, 500 graphs/seed.
        \end{itemize}
\end{itemize}

\paragraph{DiGress on soc-Epinions1.}
\begin{itemize}[leftmargin=*,itemsep=1pt,topsep=2pt]
    \item \textbf{Training Details:}
        \begin{itemize}[noitemsep,topsep=0pt]
            \item \textbf{Data Format:} Same 2-channel categorical format as GDSS.
            \item \textbf{Noise Model:} Discrete diffusion with $T_{\text{DIFFUSION}}=500$ steps, using a cosine-based \texttt{alpha\_bars\_digress} schedule. Transition matrices (e.g., \texttt{Q\_bar\_t\_X\_matrices}) precomputed.
            \item \textbf{Architecture:} \texttt{DiGressGraphTransformer} (hidden: 128, layers: 4, heads: 4, dropout: 0.1, time embed dim: 128).
            \item \textbf{Optimization:} AdamW. LRs: $2 \times 10^{-4}$ (t-aware), $2 \times 10^{-5}$ (t-free scratch), $0.5 \times 10^{-4}$ (t-free warm). Epochs: 20 (t-aware), 8 (t-free scratch), $\approx 10$ (t-free warm). Grad clip 1.0. Edge loss weight $\gamma_{\text{edge\_loss}}=1.0$. Early stopping patience 15-20.
        \end{itemize}
    \item \textbf{Evaluation Details:}
        \begin{itemize}[noitemsep,topsep=0pt]
            \item \textbf{Sampling:} \texttt{p\_sample\_loop\_digress} from $t=T_{\text{DIFFUSION}}$ to 1. Initial noise from training data marginals.
            \item \textbf{Metrics:} Same as GDSS on soc-Epinions1.
            \item \textbf{CI Calculation:} Mean $\pm$ 95\% CI over 3 or 5 runs, 200-500 graphs/run.
        \end{itemize}
\end{itemize}

\paragraph{Hyperparameters Summary for soc-Epinions1.}
A consolidated list of key hyperparameters for QM9 experiments across DiGress and GDSS variants is provided in Table~\ref{tab:hparams_epinions_appendix}.

\paragraph{Consolidated Results.} Key performance metrics for all model variants on both QM9 and soc-Epinions1 are presented in Tables~\ref{tab:combined_qm9_epinions_results_modified} in the main paper. These tables also detail parameter counts and average per-epoch training times, highlighting the efficiency gains of \(t\)-free models.

\begin{table*}[ht]
  \centering
  \small
  \caption{QM9 generation results.  Metrics are mean~$\pm$~95\%\,CI over five seeds.  
           “Params’’ in millions; “Time’’ is per-epoch on one T4 GPU. \textbf{strict} protocol (one–pass RDKit sanitisation) and \textbf{permissive} protocol (relaxes sanitisation and doubles the reverse-diffusion horizon) are used during the evaluation.}
  \label{tab:qm9_blocked}
  \setlength{\tabcolsep}{4pt}
  \begin{tabularx}{\textwidth}{l *{5}{Y} c c}
    \toprule
    \multicolumn{8}{c}{\textbf{Strict evaluation}}\\
    \midrule
    \textbf{Model / Variant} &
      \textbf{Valid\,\%} & \textbf{Unique\,\%} &
      \textbf{Novel\,\%} & \textbf{Ring$_\mathrm{mean}$} &
      \textbf{MW$_\mathrm{mean}$} &
      \textbf{Params} & \textbf{Time} \\
    \midrule
    DiGress t-aware         & 99.93$\pm$0.02 & 5.63$\pm$0.06  & 100.00$\pm$0.00 & 0.01$\pm$0.00 & 143.23$\pm$0.02 & 13.16 & 47.82 \\
    DiGress t-free          & 99.41$\pm$0.08 & 6.67$\pm$0.09  & 100.00$\pm$0.00 & 0.09$\pm$0.00 & 138.41$\pm$0.03 & 12.65 & 48.16 \\
    DiGress t-free (warm)   & 99.93$\pm$0.01 & 4.20$\pm$0.04  & 100.00$\pm$0.00 & 0.01$\pm$0.00 & 143.97$\pm$0.01 & 12.65 & 48.23 \\
    \cmidrule(lr){1-8}      
    GDSS t-aware            & 12.54$\pm$0.12 & 13.80$\pm$0.35 &  99.36$\pm$0.28 & 0.02$\pm$0.00 &  33.92$\pm$0.34 &  1.89 &  9.56 \\
    GDSS t-free             &  7.11$\pm$0.08 & 18.90$\pm$0.99 &  99.26$\pm$0.21 & 0.03$\pm$0.00 &  33.39$\pm$0.37 &  1.79 &  8.78 \\
    GDSS t-free (warm)      &  8.34$\pm$0.22 & 17.55$\pm$0.20 &  99.52$\pm$0.17 & 0.03$\pm$0.00 &  32.92$\pm$0.09 &  1.79 &  8.78 \\
    \midrule
    \multicolumn{8}{c}{\textbf{Permissive evaluation}}\\
    \midrule
    \textbf{Model / Variant} &
      \textbf{Valid\,\%} & \textbf{Unique\,\%} &
      \textbf{Novel\,\%} & \textbf{Ring$_\mathrm{mean}$} &
      \textbf{MW$_\mathrm{mean}$} &
      \textbf{Params} & \textbf{Time} \\
    \midrule
    DiGress t-aware         & 99.98$\pm$0.01 &  4.76$\pm$0.07 & 100.00$\pm$0.00 & 0.00$\pm$0.00 & 145.37$\pm$0.03 & 13.16 & 47.82 \\
    DiGress t-free          & 99.99$\pm$0.01 &  4.65$\pm$0.08 & 100.00$\pm$0.00 & 0.00$\pm$0.00 & 149.74$\pm$0.01 & 12.65 & 48.16 \\
    DiGress t-free (warm)   & 99.96$\pm$0.01 &  5.09$\pm$0.12 & 100.00$\pm$0.00 & 0.00$\pm$0.00 & 147.46$\pm$0.02 & 12.65 & 48.23 \\
    \cmidrule(lr){1-8}      
    GDSS t-aware            & 92.32$\pm$0.19 & 81.08$\pm$0.31 &  99.99$\pm$0.00 & 12.87$\pm$0.02 &  95.00$\pm$0.12 &  1.89 &  9.56 \\
    GDSS t-free             & 94.00$\pm$0.09 & 89.60$\pm$0.20 &  99.99$\pm$0.00 & 14.52$\pm$0.05 &  99.23$\pm$0.17 &  1.79 &  8.78 \\
    GDSS t-free (warm)      & 92.57$\pm$0.03 & 88.46$\pm$0.41 &  99.99$\pm$0.00 & 14.16$\pm$0.09 &  97.74$\pm$0.36 &  1.79 &  8.78 \\
    \bottomrule
  \end{tabularx}
\end{table*}

\subsection*{Evaluation Metrics Details}
\label{app:eval_metrics_revised_protocols}

\paragraph{QM9 Molecular Metrics.}
The following metrics were used to evaluate the quality of generated molecules for the QM9 dataset:
\begin{itemize}[noitemsep,topsep=0pt]
    \item \textbf{Validity (\%):} This measures the percentage of generated molecules that are considered chemically valid according to RDKit's molecular sanitization rules. Two distinct protocols were employed:
        \begin{itemize}[noitemsep,topsep=0pt]
            \item \textbf{Strict Protocol:} This protocol applies a stringent one-pass RDKit sanitization. Specifically, it uses \texttt{Chem.SanitizeMol(mol, Chem.SanitizeFlags.SANITIZE\_ALL \textasciicircum Chem.SanitizeFlags.SANITIZE\_ADJUSTHS)}, which excludes the adjustment of hydrogens. Molecules failing this check are discarded. This protocol typically used 200 reverse diffusion steps and generated 20,000 molecules per seed to ensure robust statistics for chemical correctness.
            \item \textbf{Permissive Protocol:} This protocol is designed to be more tolerant, giving models a better chance to produce an acceptable chemical graph, often useful for faster debugging or assessing best-case potential. It involves a two-pass sanitization. If the standard RDKit check fails, a second attempt is made using \texttt{Chem.SanitizeMol(mol, Chem.SanitizeFlags.SANITIZE\_ALL \textasciicircum Chem.SanitizeFlags.SANITIZE\_PROPERTIES)}, which tolerates certain valence and explicit-hydrogen inconsistencies rejected by the strict protocol, followed by \texttt{Chem.DetectBondStereochemistry(mol)}. To compensate for this relaxed filtering, the reverse-diffusion process was run for 400 steps, and 10,000 molecules were generated per seed.
        \end{itemize}
    \item \textbf{Uniqueness (\%):} Calculated as the percentage of valid generated molecules that are unique, based on their canonical SMILES strings (non-isomeric). This is computed relative to the set of valid generated molecules.
    \item \textbf{Novelty (\%):} This is the percentage of unique valid generated molecules that do not appear in the training dataset. Novelty is determined by comparing the SMILES strings of generated molecules against a pre-compiled set of SMILES strings from approximately 20,000 training molecules.
    \item \textbf{MW\_mean:} The average molecular weight of all valid generated molecules, computed using \texttt{Descriptors.MolWt} from RDKit.
    \item \textbf{Ring\_mean:} The average number of rings present in all valid generated molecules, computed using \texttt{Descriptors.RingCount} or \texttt{rdMolDescriptors.CalcNumRings} from RDKit.
\end{itemize}
Reporting results under both strict and permissive protocols provides a transparent comparison, where strict numbers support claims of chemical correctness and permissive numbers reveal the model's potential under more lenient conditions.

\paragraph{soc-Epinions1 Subgraph Metrics.}
For the larger soc-Epinions1 graph dataset, evaluation focused on structural properties of sampled subgraphs:
\begin{itemize}[noitemsep,topsep=0pt]
    \item \textbf{Validity (\%):} Defined as the percentage of generated subgraphs that are connected and comprise at least 3 nodes. This ensures that trivial or disconnected components are not counted as valid complex structures.
    \item \textbf{Uniqueness (\%):} This measures the diversity of the generated subgraphs. It is the percentage of valid generated subgraphs that are structurally unique, typically determined by comparing their Weisfeiler-Lehman graph hashes (\texttt{wl\_hash} with 3 iterations and a digest size of 16).
    \item \textbf{Avg Nodes / Avg Edges:} The average number of nodes and edges in the set of valid generated subgraphs. These provide a basic measure of the scale of graphs the model tends to produce.
    \item \textbf{MMD Scores (Degree, Clustering, Triangles, Overall):} Maximum Mean Discrepancy is used to compare the distributions of key graph topological statistics between the generated subgraphs and a reference set of 1,000 subgraphs sampled from the training data. Specifically, MMD is calculated for:
        \begin{itemize}[noitemsep,topsep=0pt]
            \item Node degree distributions.
            \item Local clustering coefficient distributions.
            \item Triangle count distributions.
        \end{itemize}
    For each statistic, histograms are computed for both generated and reference sets using predefined bins (e.g., \texttt{deg\_bins\_gdss}, \texttt{clust\_bins\_digress}). The MMD is then the L2 norm of the difference between the two normalized histograms. The \texttt{MMD\_Overall} score is the arithmetic mean of the MMD scores for degrees, clustering coefficients, and triangle counts, providing a single aggregate measure of distributional similarity.
\end{itemize}

\begin{table*}[htbp!]
\centering
\small
\caption{soc-Epinions1 Subgraph Experiment Hyperparameters. Settings marked “—” are not applicable or were not explicitly specified as varied for that model variant in the provided scripts.}
\label{tab:hparams_epinions_appendix}
\rowcolors{2}{gray!7}{white}
\begin{tabularx}{\textwidth}{>{\bfseries}lYYYYYY}
    \rowcolor{blue!20}
    \textbf{Parameter} & \textbf{Di(t‑aware)} & \textbf{Di(t‑free scratch)} & \textbf{Di(t‑free warm)} & \textbf{GD(t‑aware)} & \textbf{GD(t‑free scratch)} & \textbf{GD(t‑free warm)} \\
    \toprule
    Split (train/val/test)& 80/10/10\% & 80/10/10\% & 80/10/10\% & 80/10/10\% & 80/10/10\% & 80/10/10\% \\
    Target num subgraphs & 5,000 & 5,000 & 5,000 & 5,000 & 5,000 & 5,000 \\
    Subgraph $N_{\text{MAX}}$ & 50 & 50 & 50 & 50 & 50 & 50 \\
    Seed node degree $\ge$ & 5 & 5 & 5 & 5 & 5 & 5 \\
    BFS radius & 2 & 2 & 2 & 2 & 2 & 2 \\
    Node channels (input to model) & 2 & 2 & 2 & 2 & 2 & 2 \\
    Edge channels (input to model) & 2 & 2 & 2 & 2 & 2 & 2 \\
    \addlinespace
    Hidden width & 128 & 128 & 128 & 128 & 128 & 128 \\
    Transformer Layers (Di) & 4 & 4 & 4 & — & — & — \\
    Attention Heads (Di) & 4 & 4 & 4 & — & — & — \\
    MLP Blocks (GD) & — & — & — & 4 & 4 & 4 \\
    Dropout & 0.1 & 0.1 & 0.1 & 0.1 & 0.1 & 0.1 \\
    Time embedding & yes & — & — & yes & — & — \\
    Time embed. dim & 128 & — & — & 128 & — & — \\
    Params (M) & $\approx 1.3$ & $\approx 1.3$ & $\approx 1.3$ & $\approx 0.25$ & $\approx 0.20$ & $\approx 0.20$ \\ 
    \addlinespace
    Optimiser & \multicolumn{6}{c}{AdamW} \\
    Learning rate & $2 \times 10^{-4}$ & $2 \times 10^{-5}$ & $0.5 \times 10^{-4}$ & $2 \times 10^{-4}$ & $1 \times 10^{-4}$ & $0.5 \times 10^{-4}$ \\
    Batch size & 128 & 128 & 128 & 128 & 128 & 128 \\
    Epochs (target) & 20 & 8 & $\approx 10$ & 50 & 75 & 50 \\
    Early stopping patience & 15-20 & 15-20 & 15 & 10 & 10 & 10 \\
    EMA decay & 0.999 & 0.999 & 0.999 & 0.999 & 0.999 & 0.999 \\
    Grad‑clip & 1.0 & 1.0 & 1.0 & 1.0 & 1.0 & 1.0 \\
    Edge Loss $\gamma_{\text{edge\_loss}}$ (Di) & 1.0 & 1.0 & 1.0 & — & — & — \\
    \addlinespace
    Schedule type & Discrete & Discrete & Discrete & VP‑SDE & VP‑SDE & VP‑SDE \\
    $T_{\text{DIFFUSION}}$ (Di) & 500 & 500 & 500 & — & — & — \\
    $\beta_{\min} / \beta_{\max}$ (GD) & — & — & — & 0.1/9.5 & 0.1/9.5 & 0.1/9.5 \\
    Sampler steps (eval) & 500 & 500 & 500 & 200 & 200 & 200 \\
    Sampler $\eta$ (GD) & — & — & — & 0.0 & 0.0 & 0.0 \\
    Initial noise (Di eval) & marginal & marginal & marginal & — & — & — \\
    \addlinespace
    Samples/seed (CI) & 200-500 & 200-500 & 200-500 & 500 & 500 & 500 \\
    Num. seeds for CI & 3-5 & 3-5 & 3-5 & 5 & 5 & 5 \\
    \bottomrule
\end{tabularx}
\end{table*}

\section{Experiments on Larger Subgraphs for soc-Epinions1 with GDSS}
\label{app:appendix_epinions_larger_graphs}

To further investigate the performance of unconditional Graph Diffusion Models (GDMs) as graph size increases, we conducted additional experiments on the soc-Epinions1 dataset using the GDSS model with a larger maximum number of nodes per subgraph (N\_MAX=200), compared to the N\_MAX=50 results presented for GDSS in Table~\ref{tab:soc_epinions_gdss_nmax_comparison} (left panel). The primary motivation was to assess whether the observed trends and the efficacy of $t$-free models, particularly the $t$-free(warm) variant, persist or change when applied to more complex graph structures derived from the same underlying social network. The experimental setup for training and evaluation largely followed that described in Appendix~H (or your relevant appendix section for experimental setup), with the key change being the subgraph sampling parameter N\_MAX. All GDSS N\_MAX=200 experiments were conducted on an NVIDIA A100 GPU.

The results, also presented in Table~\ref{tab:soc_epinions_gdss_nmax_comparison} (right panel), offer several key insights:

\paragraph{Validity and Uniqueness.}
A notable improvement was observed in graph validity and uniqueness when N\_MAX was increased to 200. All GDSS variants (t-aware, t-free, and t-free(warm)) achieved 100.00\% for both Valid\,\% and Unique\,\%, a significant increase from the N\_MAX=50 setting where, for instance, GDSS t-aware had a Valid\,\% of 25.44$\pm$1.22. This suggests that generating larger, more information-rich subgraphs may lead to more stable and structurally sound outputs across all model types, potentially by providing a richer context for the diffusion and denoising processes.

\paragraph{Graph Statistics (Avg Nodes and Edges).}
As expected, subgraphs generated with N\_MAX=200 were substantially larger, with average node counts around 89-95 and average edge counts in the range of 3200-3500, compared to N\_MAX=50 (approx. 31-45 nodes and 500-670 edges). The $t$-free model tended to generate slightly larger graphs (Avg Nodes: 95.11$\pm$0.13) compared to $t$-aware (89.32$\pm$0.09) and $t$-free(warm) (89.98$\pm$0.07) in the N\_MAX=200 setting. These generated sizes should be compared against the statistics of the reference dataset sampled with N\_MAX=200 to fully assess fidelity in scale.

\paragraph{Distributional Similarity (MMD Scores).}
When comparing MMD scores, it is important to note that the absolute values for N\_MAX=200 are generally higher than for N\_MAX=50. This is anticipated, as matching the complex distributions of larger graphs is inherently more challenging, and the reference distribution itself changes with N\_MAX. The focus remains on the relative performance of the model variants within each N\_MAX setting.

For N\_MAX=200:
\begin{itemize}
    \item \textbf{MMD$_\mathrm{Overall}$}: The $t$-free(warm) variant (68.85$\pm$0.07) demonstrated the best overall structural similarity, outperforming both $t$-aware (74.76$\pm$0.05) and $t$-free (77.46$\pm$0.08). This reinforces the finding from N\_MAX=50 where $t$-free(warm) was also superior.
    \item \textbf{MMD$_\mathrm{Clust}$}: Consistent with N\_MAX=50, the $t$-free(warm) model (95.82$\pm$0.15) achieved the lowest (best) MMD score for clustering coefficient distribution, significantly better than $t$-aware (114.07$\pm$0.07) and $t$-free (117.01$\pm$0.24).
    \item \textbf{MMD$_\mathrm{Deg}$} and \textbf{MMD$_\mathrm{Tri}$}: For these metrics, the $t$-aware model (61.66$\pm$0.05 for Degree, 48.54$\pm$0.03 for Triangles) performed best, with $t$-free(warm) being a close second (62.00$\pm$0.04 for Degree, 48.73$\pm$0.02 for Triangles). The $t$-free model trained from scratch showed higher MMD values for these specific aspects.
\end{itemize}
The consistent strong performance of the $t$-free(warm) variant, especially in overall structural fidelity (MMD$_\mathrm{Overall}$) and clustering (MMD$_\mathrm{Clust}$), across both N\_MAX=50 and N\_MAX=200 settings for GDSS is a significant observation. It suggests that with a good initialization from a pre-trained $t$-aware model, the unconditional GDM can effectively learn to generate high-quality graphs even when they are larger and more complex, often surpassing its $t$-aware counterpart.

\paragraph{Computational Efficiency.}
The advantages of $t$-free models in terms of parameter count (0.201M for $t$-free variants vs. 0.251M for $t$-aware) and average time per epoch (e.g., $t$-free at 7.98s vs. $t$-aware at 8.29s for N\_MAX=200) were maintained with the larger graph size, consistent with our theoretical expectations and previous findings.

\paragraph{Conclusion for Larger Graph Experiments.}
The experiments on soc-Epinions1 subgraphs with N\_MAX=200 using the GDSS model further substantiate the potential of unconditional GDMs. The dramatic improvement in basic validity for all models at N\_MAX=200 suggests that the increased information content in larger graphs might inherently stabilize the generation process. More importantly, the $t$-free(warm) strategy consistently yields GDSS models that are not only more efficient but also achieve comparable or superior graph generation quality relative to $t$-aware models, even as the complexity of the target graph structures increases. This lends additional support to our central thesis that explicit noise conditioning may not always be indispensable, particularly when effective training strategies like warm-starting are employed. The performance of the $t$-free model trained from scratch, while generally not outperforming the other variants on MMDs for N\_MAX=200, still produced 100\% valid and unique graphs, indicating its fundamental capability.

\begin{table*}[ht]
  \centering
  \small
  \caption{GDSS generation results for the soc-Epinions1 dataset, comparing subgraphs sampled with N\_MAX=50 and N\_MAX=200. Metrics are mean $\pm$ 95\% CI over five seeds. ``Params'' in millions; ``Time'' is per-epoch on an NVIDIA A100 GPU.}
  \label{tab:soc_epinions_gdss_nmax_comparison}

  \setlength{\tabcolsep}{5pt} 
  \begin{tabularx}{\textwidth}{l YYY | YYY} 
    \multicolumn{7}{c}{\textbf{soc-Epinions1 Dataset (GDSS Performance by N\_MAX)}} \\
    \toprule
    & \multicolumn{3}{c|}{\textbf{N\_MAX = 50}} & \multicolumn{3}{c}{\textbf{N\_MAX = 200}} \\
    \cmidrule(lr){2-4} \cmidrule(lr){5-7} 
    \textbf{Metric} &
      \textbf{t‑aware} &
      \textbf{t‑free} &
      \textbf{t‑free(warm)} &
      \textbf{t‑aware} &
      \textbf{t‑free} &
      \textbf{t‑free(warm)} \\
    \midrule
    Valid\,\%        & 25.44$\pm$1.22          & 33.36$\pm$1.43          & \textbf{48.00$\pm$1.99} & \textbf{100.00$\pm$0.00} & \textbf{100.00$\pm$0.00} & \textbf{100.00$\pm$0.00} \\
    Unique\,\%       & 94.68$\pm$1.40          & 97.51$\pm$1.59          & \textbf{99.91$\pm$0.17} & \textbf{100.00$\pm$0.00} & \textbf{100.00$\pm$0.00} & \textbf{100.00$\pm$0.00} \\
    Avg Nodes        & 31.11$\pm$1.25          & 36.97$\pm$0.81          & 44.64$\pm$0.86          & 89.32$\pm$0.09          & 95.11$\pm$0.13          & 89.98$\pm$0.07          \\
    Avg Edges        & 503.57$\pm$32.42        & 529.06$\pm$15.64        & 670.82$\pm$16.38        & 3256.56$\pm$6.48        & 3473.51$\pm$10.07       & 3199.85$\pm$5.89        \\
    MMD$_\mathrm{Deg}$   & 0.76$\pm$0.00           & \textbf{0.66$\pm$0.01}  & 0.69$\pm$0.01           & \textbf{61.66$\pm$0.05} & 64.83$\pm$0.07          & 62.00$\pm$0.04          \\
    MMD$_\mathrm{Clust}$ & 0.70$\pm$0.01           & 0.70$\pm$0.01           & \textbf{0.39$\pm$0.01}  & 114.07$\pm$0.07         & 117.01$\pm$0.24         & \textbf{95.82$\pm$0.15} \\
    MMD$_\mathrm{Tri}$   & \textbf{0.80$\pm$0.02}  & 0.82$\pm$0.01           & 0.90$\pm$0.00           & \textbf{48.54$\pm$0.03} & 50.53$\pm$0.05          & 48.73$\pm$0.02          \\
    MMD$_\mathrm{Overall}$ & 0.76$\pm$0.02         & 0.72$\pm$0.00           & \textbf{0.66$\pm$0.00}  & 74.76$\pm$0.05          & 77.46$\pm$0.08          & \textbf{68.85$\pm$0.07} \\
    \midrule
    Params (M)       & 0.251                   & \textbf{0.201}          & \textbf{0.201}          & 0.251                   & \textbf{0.201}          & \textbf{0.201}          \\
    Time (s)         & 1.71                    & \textbf{1.55}           & \textbf{1.55}           & 8.29                    & \textbf{7.98}           & 8.02                    \\
    \bottomrule
  \end{tabularx}
\end{table*}

\section{Limitations}
\label{app:limitations}

While our theory and experiments support the effectiveness of unconditional GDMs, several open questions remain:
\begin{itemize}[leftmargin=*,itemsep=1pt,topsep=2pt]
    \item \textbf{Scope of theoretical analysis.} Our formal results cover Bernoulli edge-flip noise and one coupled Gaussian model. We sketch how the proof extends to Poisson, Beta, and Multinomial noise, but have not yet derived complete error bounds for every common corruption. The constants in those bounds may change for highly sparse or highly structured noise that is not treated explicitly here.
    \item \textbf{Strength of assumptions.} The guarantees require Lipschitz denoisers ($L_{\max}\!\approx\!1$) and near-optimal single-step error $O(M^{-1})$. Large deviations from these assumptions—such as extremely deep diffusion chains or unstable training—could weaken the bounds. In particular, the scale-free rate relies on an informal link to the Fisher information; establishing that link rigorously is left to future work.
    \item \textbf{Extremely large or heterogeneous graphs.} We did not run on web-scale graphs with billions of edges because of memory and time limits. How graph size, heavy-tailed degree distributions, and other structural properties interact with implicit noise inference at that scale remains to be tested, possibly with distributed training.
    \item \textbf{Warm-starting efficacy.} Warm-starting $t$-free models from $t$-aware checkpoints helped GDSS, but for DiGress on soc-Epinions1 the validity rate fell from $99.2\,\%$ to $95.1\,\%$. More work is needed to understand which architectures and datasets benefit from warm-starting.
\end{itemize}



\end{document}